\numberwithin{equation}{section}
\theoremstyle{definition}
\newtheorem{definition}{Definition}
\newtheorem{theorem}{Theorem}
\newcommand{\tabitem}{~~\llap{\textbullet}~~}
\newenvironment{breakablealgorithm}
  {
   \begin{center}
     \refstepcounter{algorithm}
     \hrule height.8pt depth0pt \kern2pt
     \renewcommand{\caption}[2][\relax]{
       {\raggedright\textbf{\fname@algorithm~\thealgorithm} ##2\par}%
       \ifx\relax##1\relax 
         \addcontentsline{loa}{algorithm}{\protect\numberline{\thealgorithm}##2}%
       \else 
         \addcontentsline{loa}{algorithm}{\protect\numberline{\thealgorithm}##1}%
       \fi
       \kern2pt\hrule\kern2pt
     }
  }{
     \kern2pt\hrule\relax
   \end{center}
  }
\titlespacing*{\section}{0pt}{1ex}{1ex}
\titlespacing*{\subsection}{0pt}{1ex}{1ex}
\titlespacing*{\subsubsection}{0pt}{1ex}{1ex}
\newcolumntype{L}[1]{>{\raggedright\arraybackslash}p{#1}}
\newcolumntype{C}[1]{>{\centering\arraybackslash}p{#1}}
\newcolumntype{R}[1]{>{\raggedleft\arraybackslash}p{#1}}
\title{\bf{Time-varying Factor Augmented Vector Autoregression with Grouped Sparse Autoencoder}}
\author{
Yiyong Luo$^1$ \\
yiyong.luo.20@ucl.ac.uk
\And 
Brooks Paige$^2$ \\
b.paige@ucl.ac.uk
\And
Jim E. Griffin$^1$ \\
j.griffin@ucl.ac.uk
  \vspace{1cm}
 \AND{
 $^1$ \normalsize{Department of Statistical Science, University College London}}
 \AND{
 $^2$ \normalsize{AI Centre, Department of Computer Science, University College London}
 }
}
\begin{document}
\maketitle

\begin{abstract}
Recent economic events, including the global financial crisis and COVID-19 pandemic, have exposed limitations in linear Factor Augmented Vector Autoregressive (FAVAR) models for forecasting and structural analysis. Nonlinear dimension techniques, particularly autoencoders, have emerged as promising alternatives in a FAVAR framework, but challenges remain in identifiability, interpretability, and integration with traditional nonlinear time series methods. We address these challenges through two contributions. First, we introduce a Grouped Sparse autoencoder that employs the Spike-and-Slab Lasso prior, with parameters under this prior being shared across variables of the same economic category, thereby achieving semi-identifiability and enhancing model interpretability. Second, we incorporate time-varying parameters into the VAR component to better capture evolving economic dynamics. Our empirical application to the US economy demonstrates that the Grouped Sparse autoencoder produces more interpretable factors through its parsimonious structure; and its combination with time-varying parameter VAR shows superior performance in both point and density forecasting. Impulse response analysis reveals that monetary policy shocks during recessions generate more moderate responses with higher uncertainty compared to expansionary periods.
\end{abstract}

\keywords{Non-linear dimension reduction, factor-augmented vector autoregression, deep learning, time-varying parameterization}

\setlength{\belowdisplayskip}{2pt} \setlength{\belowdisplayshortskip}{2pt}
\setlength{\abovedisplayskip}{2pt} \setlength{\abovedisplayshortskip}{2pt}

\section{Introduction}
\label{sec:Introduction}

Factor-augmented vector autoregression (FAVAR, \cite{bernanke2005measuring}) is a widely-used model to study high-dimensional macroeconomic time series. The FAVAR enhances a standard Vector Autoregression (VAR) by incorporating both observable factors (key economic indicators) and latent factors. Using linear dimension reduction techniques such as Principal Component Analysis (PCA), these latent factors - typically numbering fewer than 10 - are extracted from hundreds of time series, enabling the FAVAR to include a large amount of information without suffering the curse of dimensionality that would occur if all these time series were directly modeled in the VAR. Moreover, the FAVAR allows analysis of both observable factors in the VAR and the high-dimensional time series due to the factor structure. 

Numerous studies (\cite{blanchard2001long}, \cite{mumtaz2011lies}, \cite{sims2006were}, \cite{stock2002has}, among others) have demonstrated that the transmission mechanisms of monetary policy and volatility of economic shocks change over time. Additionally, events such as the Global Financial Crisis (GFC) and the COVID-19 pandemic can lead to sudden shifts in variable levels and introduce outliers. These temporal changes and structural breaks pose challenges for the FAVAR, as both the dimension reduction and the VAR are linear and thus inherently limited in capturing such complex, time-varying dynamics. To address these limitations, various approaches draw from the time series literature to incorporate non-linear structures into the FAVAR framework. For the dimension reduction part, \cite{hacioglu2016interpreting} extended the threshold factor model \citep{nakajima2013bayesian} to propose the threshold FAVAR. For the VAR part,  \cite{korobilis2013assessing} introduced time-varying parameters (TVP) to the FAVAR, while \cite{huber2018markov} implemented a Markov switching VAR inspired by \cite{hamilton1989new}. Taking a holistic perspective, \cite{koop2014new} and \cite{abbate2016changing} modeled the evolution of parameters in both parts as random walks. Apart from these time series-based approaches, \cite{klieber2024non} leveraged deep learning techniques, specifically autoencoders, to extract factors. This non-linear FAVAR showed robust performance in handling outliers during the COVID-19 pandemic.

Given the growing popularity of autoencoders in econometric research \citep{cabanilla2019forecasting, andreini2020deep, hauzenberger2023real}, we focus on the FAVAR with an autoencoder and highlight three aspects that require attention and further improvement. Firstly, interpreting the latent factors and applying them to downstream tasks, such as the impulse response analysis in the FAVAR, necessitate identifiable factors; however,
factors extracted from a standard autoencoder generally do not satisfy this requirement \citep{locatello2019challenging}. Secondly, even when the factors are identified, determining their economic meanings remains challenging due to two issues: 1) the black box nature of the autoencoder, and 2) the lack of parsimony. While post-processing interpretation frameworks such as Shapley additive explanations \citep{strumbelj2010efficient} can be applied to latent factors extracted from the autoencoder, the complexity of the autoencoder often leads to results suggesting that each factor has a non-negligible impact on most of the 
high-dimensional time series. This makes it difficult to discern the specific economic role of individual factors. Lastly, \cite{klieber2024non} assumed a time-invariant VAR, resulting in constant monetary policy effects and homoscedasticity. One can relax this assumption to accommodate time-varying dynamics of the economy.

In this paper, our first contribution tackles the identifiability and interpretability issues in the first two aspects through sparsity. Inspired by \cite{moran2021identifiable}, we propose a variant of the standard autoencoder, namely the Grouped Sparse (GS) autoencoder. While a standard autoencoder extracts latent factors using the encoder and reconstructs high-dimensional data with the decoder, our approach introduces an intermediate step. We first group the high-dimensional data, then element-wisely multiply these factors by a set of group-specific parameters before passing factors to the decoder. Following the Spike-and-Slab Lasso (SSL, \cite{rovckova2018spike}) prior, these parameters are the same across variables within each group during reconstruction. We prove that the factors are identifiable up to an element-wise transformation, given known \textit{anchor groups} - groups of time series reconstructed by only one factor. By exploiting the properties of these element-wise transformations, we determine the decoder architecture and its activation function. For interpretation, the SSL parameters can effectively activate or deactivate each factor when reconstructing a specific group of data, providing clear economic meanings to the factors and eliminating the need for post-processing interpretation approaches. Our second contribution builds upon the third aspect mentioned earlier. Specifically, we extend the non-linearity to the VAR part of the FAVAR by adopting the time-varying parameter VAR (TVP-VAR, \cite{primiceri2005time}), allowing the VAR parameters to evolve as random walks. 

In our empirical application to the US economy, we first compare factors extracted by the GS autoencoder with those obtained through PCA. The GS autoencoder factors exhibit superior interpretability due to their parsimonious structure. Examining correlations between factors and high-dimensional time series reveals that each factor from the GS autoencoder shows a stronger correlation with data in its corresponding anchor group compared to non-anchor groups. Assessment of point and density forecasting performance demonstrates that the GS autoencoder combined with the TVP-VAR outperforms models using either linear dimension reduction methods or time-invariant VAR parameters. Our impulse response analysis shows that monetary policy shocks during recessions yield more moderate responses and higher uncertainty than those in expansions. Furthermore, these impulse responses of the high-dimensional data present notable time variations.

The paper is organized as follows. Section 2 provides the background on the FAVAR, autoencoder and its challenges. Section 3 introduces the Grouped Sparse (GS) autoencoder and details of the TVP-VAR. Section 4 gives the details about parameter estimation. Section 5 starts with a description of the data and implementation details, then demonstrates the merits of our proposed model in three areas: factor interpretation, forecasting performance, and impulse response analysis. Section 6 concludes the paper.



\section{Background} 
\label{sec:Background}

\subsection{FAVAR} 
\label{sec:Linear FAVAR}

Let $\boldsymbol{x}_t\in\mathbb{R}^N$ be the observable high-dimensional data at time point $t$, the FAVAR represents $\boldsymbol{x}_t$ as low-dimensional factors: $\left(\boldsymbol{f}^\prime, \boldsymbol{y}^\prime_t\right)^\prime$, where $\boldsymbol{f}_t\in\mathbb{R}^K$ contains latent factors and $\boldsymbol{y}_t\in\mathbb{R}^M$ represents key economic indicators as observable factors. The evolution of these latent and observable factors follows a VAR. Mathematically, the FAVAR is as follows:
\begin{align}
\left(
\begin{aligned}
&\boldsymbol{x}_t\\
&\boldsymbol{y}_t 
\end{aligned} 
\right) &= 
    \begin{pmatrix}
    \multicolumn{2}{c}{\boldsymbol{\Lambda}}\\
    \mathbf{0} & \boldsymbol{I}
    \end{pmatrix}
 \left(
\begin{aligned}
&\boldsymbol{f}_t\\
&\boldsymbol{y}_t 
\end{aligned} 
\right) + \left(
\begin{aligned}
&\boldsymbol{\epsilon}_t \\
& \mathbf{0} 
\end{aligned} 
\right),\, \boldsymbol{\epsilon}_t\sim\mathcal{N}(\textbf{0},\boldsymbol\Sigma), \label{equ:factor model}\\
\left(
\begin{aligned}
&\boldsymbol{f}_t\\
&\boldsymbol{y}_t 
\end{aligned} 
\right) &= \boldsymbol{A}_{1} \left(
\begin{aligned}
&\boldsymbol{f}_{t-1}\\
&\boldsymbol{y}_{t-1} 
\end{aligned} 
\right) + \dots+\boldsymbol{A}_{P} \left(
\begin{aligned}
&\boldsymbol{f}_{t-P}\\
&\boldsymbol{y}_{t-P} 
\end{aligned} 
\right) +\boldsymbol{\eta}_t,\, \boldsymbol{\eta}_t\sim\mathcal{N}(\textbf{0},\boldsymbol\Omega),
\label{equ:FAVAR}
\end{align}
where $t=1,\dots,T$, $\boldsymbol{\Lambda}\in\mathbb{R}^{N\times (M+K)}$ is the factor loading, $\boldsymbol{\Sigma}$ and $\boldsymbol{\Omega}$ are the variance-covariance matrices, and $\boldsymbol{A}=\left(\boldsymbol{A}_1,\dots, \boldsymbol{A}_P\right)$ is the coefficient matrix of the VAR with lag order $P$. 

The FAVAR employs a low-dimensional representation of $\boldsymbol{x}_t$ to mitigate the curse of dimensionality in two perspectives. First, it allows the FAVAR to incorporate the rich information in $\boldsymbol{x}_t$ when analyzing $\boldsymbol{y}_t$. Second, the established relationship between factors and $\boldsymbol{x}_t$ enables straightforward derivation of impulse responses of $\boldsymbol{x}_t$ based on those of $\left(\boldsymbol{f}^\prime, \boldsymbol{y}^\prime_t\right)^\prime$, avoiding the need to directly model $\boldsymbol{x}_t$ using its past values.

While econometricians typically select the variables in $\boldsymbol{x}_t$ and $\boldsymbol{y}_t$ according to their research goals,
obtaining $\boldsymbol{f}_t$ is more complex.  \cite{bernanke2005measuring} proposed two methods to get these latent factors: one- and two-step procedures. The former adopts the Bayesian inference of dynamic factor model \citep{stock2005implications, stock2011}, which treats the FAVAR as a state space model. The advantage of this procedure is that it incorporates the VAR prior to latent factors during sampling. However, this procedure has two disadvantages. First, the corresponding MCMC is complicated because the sampled Markov chains may exhibit autocorrelation due to the interdependence between factors and loadings in their full conditionals. While longer chains could address this issue, such a solution increases computational costs given the high dimensionality of $\boldsymbol{x}_t$. Second, factors and loadings in the one-step procedure are not identified due to the orthogonal rotation. Specifically, $\boldsymbol{f}^*_t = \boldsymbol{Q}\boldsymbol{f}_t$ and $\boldsymbol{\Lambda}^* = \boldsymbol{\Lambda}\boldsymbol{Q}^\prime$, with $\boldsymbol{Q}^\prime\boldsymbol{Q} = \boldsymbol{I}$, gives the same likelihood as $\boldsymbol{f}_t$ and $\boldsymbol{\Lambda}$. Imposing restrictions to $\boldsymbol{\Lambda}$, such as the upper $K$\texttimes$K$ submatrix is an identity matrix, mitigates this issue, but one needs to determine which $K$ variables in $\boldsymbol{x}_t$ are driven by only one factor. In contrast, the two-step procedure circumvents both autocorrelation and identifiability issues by extracting factors as principal components and then using these factors to infer VAR parameters in \labelcref{equ:FAVAR}. This procedure avoids the autocorrelation issue, as latent factors are not part of the MCMC sampling. Identifiability is readily achieved by imposing $\boldsymbol{F}^\prime\boldsymbol{F}=\boldsymbol{I}$, where $\boldsymbol{F} = (\boldsymbol{f}_1,\dots,\boldsymbol{f}_T)^\prime$. The main limitation of this approach is that the latent factors determined in the first step are deterministic, precluding the incorporation of VAR prior information. Nevertheless, \cite{bernanke2005measuring} showed that both procedures yield similar results.

Additional specifications of the FAVAR depend on the strategy to identify economic shocks. In particular, we can impose restrictions to a matrix $\boldsymbol{H}$, which appears in the decomposition of $\boldsymbol{\Omega}$ as $\boldsymbol{H}\boldsymbol{S}\boldsymbol{H}^\prime$. While various identification strategies are available for the FAVAR, see \cite{stock2016dynamic} for a comprehensive review, we focus on the recursive restriction specified in \cite{bernanke2005measuring} due to its simplicity and prevalence. The rationale behind the recursive restriction is that some variables respond contemporaneously to the shock and others respond after one lag, so this restriction refers to the Cholesky decomposition of $\boldsymbol{\Omega}$ to specify $\boldsymbol{H}$ as a lower-triangular matrix with ones on the diagonal, then each factor in $\left(\boldsymbol{f}^\prime, \boldsymbol{y}^\prime_t\right)^\prime$ responds with one lag if it is ordered before the factor corresponding to the shock (\cite{bernanke2005measuring} described this kind of factors fast-moving, and otherwise slow-moving). Due to the order of $\left(\boldsymbol{f}^\prime_t,\boldsymbol{y}^\prime_t\right)^\prime$, one needs to constrain $\boldsymbol{f}_t$ as slow-moving latent factors when the two-step procedure is applied, and the economic shocks of interest are in $\boldsymbol{y}_t$\footnote{Note that this constraint requires no additional specification in the one-step procedure since the inference of $\boldsymbol{f}_t$ already incorporates the restriction in $\boldsymbol{H}$.}. Following \cite{bernanke2005measuring}, this constraint is implemented through a linear regression, $\hat{\boldsymbol{f}_t} = \boldsymbol{B}_{s}\hat{\boldsymbol{f}^s_t} + \boldsymbol{B}_y\boldsymbol{y}_t+\boldsymbol{e}_t$, where $\hat{\boldsymbol{f}_t}$ and $\hat{\boldsymbol{f}^s_t}$ are the 
principal components of $\boldsymbol{x}_t$ and its slow-moving part, respectively, then $\boldsymbol{f}_t$ is constructed as $\hat{\boldsymbol{f}_t}-\boldsymbol{B}_y\boldsymbol{y}_t$. 


\subsection{Autoencoder and its Econometric Applications}
The autoencoder is a deep learning model that compresses the high-dimensional data to a lower dimension. A standard autoencoder comprises three parts: encoder, factors (often referred to as  "bottleneck" in the deep learning terminology), and decoder, with the goal of getting factors from the encoder and reconstructing the high-dimensional data with the factors and decoder. The mathematical expression of an autoencoder is:
\begin{align}
    \boldsymbol{f}_t & = g^e_{\boldsymbol{\phi}} \left(\boldsymbol{x}_t\right)= \left(g^e_L\circ \dots \circ g^e_1\right)_{\boldsymbol{\phi}}
    \left(\boldsymbol{x}_t\right), \label{equ:encoder} \\
    \hat{\boldsymbol{x}}_t & = g^d_{\boldsymbol{\theta}}\left(\boldsymbol{f}_t\right) = \left(g^d_L\circ \dots \circ g^d_1\right)_{\boldsymbol{\theta}}
    \left(\boldsymbol{f}_t\right). \label{equ:decoder}
\end{align}
where $g^e_{\boldsymbol{\phi}}(\cdot)$ and $g^d_{\boldsymbol{\theta}}(\cdot)$ are the encoder and decoder with parameters $\boldsymbol{\phi}$ and $\boldsymbol{\theta}$, respectively, each representing a composition of $L$ functions shown on the right-hand sides, following the convention of the autoencoder literature. The output from each function component is called a "hidden layer" for the first $L-1$ functions and an "output layer" for the final function if we count functions from the right. $\hat{\boldsymbol{x}}_t$ is the reconstruction of $\boldsymbol{x}_t$. 

The first part of the FAVAR with equation \labelcref{equ:factor model} can be seen as a specification of an autoencoder because this equation is simply a linear decoder, replacing $\hat{\boldsymbol{x}}_t$ by $\boldsymbol{x}_t$. This specification does not require the encoder since we can obtain factors using the aforementioned one- or two-step procedures. However, the autoencoder generally requires both encoder and decoder for two reasons. First, if we use the one-step procedure, the full 
conditional of $\boldsymbol{f}_t$ is intractable due to the non-linearity in the decoder, so we need the encoder to facilitate the variational inference of $\boldsymbol{f}_t$\footnote{This extends the autoencoder to Variational autoencoder (VAE, \cite{kingma2014auto})}. Second, using the two-step procedure requires known transformation from $\boldsymbol{x}_t$ to $\boldsymbol{f}_t$, PCA, for example, but the parameters of the decoder are unknown. Moreover, training the decoder necessitates the knowledge of $\boldsymbol{f}_t$, which is also unknown. The encoder-decoder architecture in the autoencoder resolves this issue through unsupervised learning using only $\boldsymbol{x}_t$, enabling simultaneous estimation of both the autoencoder parameters and $\boldsymbol{f}_t$. 

The expressiveness of an autoencoder comes from the essence of all deep learning models: layered architecture and the choice of $g^j_l(\cdot)$, for $j=e$ or $d$ and $l=1,\dots, {L-1}$. Given these function components being non-linear, neural networks can approximate any continuous functions with arbitrary precision, as proved by \cite{LESHNO1993861} and \cite{pinkus1999approximation}, among others. To expand the function compositions in \labelcref{equ:encoder} and \labelcref{equ:decoder}, encoder and decoder can be written in recursive forms with hidden layers. Specifically,
the $l$-th hidden layer at time $t$, $\boldsymbol{h}^{j}_{t,l}$, is a non-linear transformation of the $(l-1)$-th one:
\begin{equation}
    \boldsymbol{h}^j_{t,l}=g^{j}_l\left(\boldsymbol{h}^j_{t,l-1}\right) = g\left(\boldsymbol{W}^{j}_l\boldsymbol{h}^{j}_{t,l-1}+\boldsymbol{b}^{j}_l\right), \label{equ:nonlinearity}
\end{equation}
where $\boldsymbol{h}^e_{t,0}$ and $\boldsymbol{h}^d_{t,0}$ are $\boldsymbol{x}_t$ and $\boldsymbol{f}_t$, respectively, $\boldsymbol{W}^{j}_l\in\mathbb{R}^{D^j_{l}\times D^j_{l-1}}$, $\boldsymbol{b}^{j}_l\in\mathbb{R}^{D^j_l}$ are called weight and bias, which are parameters in $\boldsymbol{\phi}$ or $\boldsymbol{\theta}$, $D^j_{l}$ is the dimension of $\boldsymbol{h}^j_{t,l}$, $g(\cdot)$ is an activation function depending on the specific applications. Hyperbolic tangent, $g\left(x\right)$ = tanh$\left(x\right)$, and rectified linear unit (ReLU), $g\left(x\right)$ = $
\text{max}(0,x)$, are two popular choices. Note that we specify identity activation functions for $g^e_L(\cdot)$ and $g^d_L(\cdot)$, corresponding to the output layers of the encoder and decoder, to be the identity function, making them linear functions, as $\boldsymbol{f}_t$ and $\hat{\boldsymbol{x}}_t$ are real-valued vectors that lie beyond the range of common activation functions.

To learn the autoencoder parameters $\boldsymbol{\phi}$ and $\boldsymbol{\theta}$, one needs to maximize an objective function usually derived from the marginal log-likelihood $p\left(\boldsymbol{x}_{1:T}\right)$. Assume $\boldsymbol{x}_{t}$, for $t=1,\dots, T$, follows a Gaussian distribution with mean $\hat{\boldsymbol{x}}_{t}$ and a diagonal variance-covariance matrix with equal diagonal elements, this objective function is simply the negative 
mean squared error between $\boldsymbol{x}_{1:T}$ and $\hat{\boldsymbol{x}}_{1:T}$. The optimization then updates parameters using gradient descent.

The autoencoder is increasingly popular in econometric studies. In the FAVAR literature, \cite{klieber2024non} adopted the two-step procedure since it is simpler and less time-consuming by replacing the PCA with the autoencoder to extract latent factors. After training the autoencoder, a time-invariant VAR with the Minnesota-type prior \citep{litterman1979techniques} then models the evolution of these latent factors alongside the observable ones. To facilitate further analysis of the impulse responses of $\boldsymbol{x}_t$, this paper used 
the Moore-Penrose pseudoinverse of $\boldsymbol{F}$ to approximate a linear factor loading. Apart from the FAVAR, the autoencoder is applied in econometrics for forecasting. For example, \cite{hauzenberger2023real} extracted factors using the autoencoder, then treated them as covariates to forecast inflation using a linear regression framework. \cite{cabanilla2019forecasting} adopted a similar approach, but used another deep learning architecture to link the factors to the GDP forecasts. 

\subsection{Challenges of Standard Autoencoder and Solutions}
\label{sec:Challenges of Standard Autoencoder and Solutions}

While these applications provide strong examples of combining the standard autoencoder with traditional time series models, one important prerequisite is that the latent factors extracted should be identified, i.e. if two sets of factors $\boldsymbol{f}_t$ and $\boldsymbol{f}^*_t$ give the same $\hat{\boldsymbol{x}}_t$, then $\boldsymbol{f}_t=\boldsymbol{f}^*_t$, for $t=1,\dots, T$. The first challenge of the standard autoencoder is that these factors are generally not identified \citep{locatello2019challenging}. We illustrate this point with two examples:

\textbf{Example 1}. Similar to the rotational invariance in the linear FAVAR, assume $\boldsymbol{f}_t$ and $\boldsymbol{f}^*_t=\boldsymbol{Q}\boldsymbol{f}_t$, for some invertible matrix $\boldsymbol{Q}$, these two sets of factors construct the same $\hat{\boldsymbol{x}}_t$ if the weights in the two autoencoders satisfy: $\boldsymbol{W}^{*,d}_1=\boldsymbol{Q}^{-1}\boldsymbol{W}^{d}_1$ and $\boldsymbol{W}^{*,d}_l=\boldsymbol{W}^{d}_l$, for $l=2,\dots,L$. 

\textbf{Example 2.} If $\boldsymbol{\theta}\neq\boldsymbol{\theta}^*$ and $\boldsymbol{\phi}=\boldsymbol{\phi}^*$, the factors are potentially non-identifiable when the decoders are non-injective. This kind of decoders arises from two cases: 1) when using non-injective activation functions such as ReLU, 2) when the weights do not have full column rank.

Many standard deep learning implementations can result in non-identifiable latent factors. These implementations introduce randomness to the model itself and/or during the training process to explore the parameter space, improve training efficiency and enhance model expressiveness. Examples include, but are not limited to, initializing parameters randomly, stochastic gradient descent \citep{robbins1951stochastic}, and treating latent factors as random variables. Eliminating all these implementations is not ideal because they play a crucial role in deep learning. Thus, many recent efforts in the deep learning literature alleviate the indeterminacy through two streams: 1) modifying the standard decoder structure 
\citep{ moran2021identifiable, lachapelle2024additive}, and 2) imposing well-designed priors on $\boldsymbol{f}_t$ \citep{khemakhem2020variational, lachapelle2022disentanglement}. Both streams aim to 
identify factors up to trivial transformations, such as element-wise transformations and permutations. 

The second challenge of the standard autoencoder lies in its interpretability limitations. An example is in Figure 2 of \cite{klieber2024non}, which compared the importance of latent factors extracted from different dimension reduction methods to the high-dimensional data. The figure presents importance matrices where rows represent data categories and columns denote factors. For the non-linear dimension reduction methods such as the autoencoder, each entry measures factor importance using the Shapley additive explanations framework \citep{strumbelj2010efficient} while factor loadings were used for PCA. The resulting dense importance matrices reveal that latent factors from both methods appear important to almost all categories, demonstrating similar interpretability limitations between PCA and the autoencoder. Two prominent and closely related approaches to enhance the interpretability of the autoencoder are: inducing sparsity in the autoencoder components (typically the latent factors and/or the decoder) \citep{ainsworth2018oi,tank2021neural}, and developing variants of the standard autoencoder that promote disentanglement, where each latent factor represents a distinct meaningful aspect of the high-dimensional data, see \cite{wang2022disentangled} for a review.

\section{Methodology} 
\label{sec:Methodology}

This section describes the non-linearity applied in the two parts of the FAVAR: the factor extraction and VAR parts. Section \ref{sec: Grouped Sparse Autoencoder} introduces a variant of the standard autoencoder, namely the Grouped Sparse (GS) autoencoder, which can alleviate both the identifiability and interpretability challenges aforementioned. Section \ref{sec:FA-TVP-VAR} specifies the TVP-VAR. 

Given the non-linearity in both parts, using the two-step procedure would be much simpler than the one-step procedure to obtain latent factors and learn parameters. This is because the latter treats the latent factors with a VAR prior, so the inferential scheme requires more advanced deep learning frameworks such as Dynamical VAE \citep{giannone2015prior} and Bayesian neural networks \citep{goan2020bayesian}, see the references therein, which extends beyond the scope of this study. 

\subsection{Grouped Sparse Autoencoder}
\label{sec: Grouped Sparse Autoencoder}

We aim to construct a variant of the standard autoencoder that enhances both identifiability and interpretability. The Sparse autoencoder proposed by \cite{moran2021identifiable} provides a promising foundation, as it identifies latent factors up to element-wise transformation and induces sparsity through the SSL prior to decoder parameters (see its description after the mathematical expressions below). To further improve the interpretability of the Sparse autoencoder, we extend it to the Grouped Sparse (GS) autoencoder using group-specific SSL parameters. The grouping effect is justifiable because economic data inherently falls into different categories, with well-established divisions such as labor market, output, and interest rates, among others. In the FAVAR literature, \cite{belviso2006structural} and \cite{korobilis2013assessing} divided data into different groups and extracted each factor from one group.

The GS autoencoder has the same encoder as in \labelcref{equ:encoder}, the mathematical expression of the decoder in \labelcref{equ:decoder} changes to:
\begin{align}
    \hat{x}_{t,i}&= g^d_{i,\boldsymbol{\theta}}\left(\boldsymbol{f}_t\odot {{\boldsymbol{\beta}_{c_i}}}\right)=
    \left(g^d_{i,L}\circ g^d_{L-1}\circ \dots \circ g^d_{1}\right)_{\boldsymbol{\theta}}\left(\boldsymbol{f}_t\odot {{\boldsymbol{\beta}_{c_i}}}\right), \label{equ:sparse_decoder} 
\end{align}
where $i\in \{1,\dots, N\}$ is the index of variables in $\boldsymbol{x}_t$, $g^d_{i,\boldsymbol{\theta}}(\cdot)=\left(g^d_{i,L}\circ g^d_{L-1}\circ \dots \circ g^d_{1}\right)_{\boldsymbol{\theta}}(\cdot)$ is the decoder that reconstructs $\boldsymbol{x}_{t,i}$ 
(this decoder takes a $K$-dimensional input and outputs a scalar), $g^d_{l}$, for $l=1,\dots, L-1$, denotes the $l$-th function in this decoder and is identical across $i$, while $g^d_{i,L}$ is different according to $i$ \footnote{We specify the decoder in this way because it allows us to distinguish the reconstructions of two arbitrary time series from the same data group}, $\boldsymbol{\beta}_{c_i}=(\beta_{c_i,1},\dots,\beta_{c_i,K})\in\mathbb{R}^K$ stores sparsity parameters corresponding to the group of the $i$-th variable, $c_i$ , $\odot$ means element-wise multiplication. 

The distribution that $\boldsymbol{\beta}_{c_i}$ follows is an SSL, so this autoencoder can turn on or off each factor when reconstructing a group of variables. Specifically, 
\begin{align}
    \beta_{c_i, k} &\sim \gamma_{c_i,k}\psi_1(\beta_{c_i, k})+\left(1-\gamma_{c_i,k}\right)\psi_0(\beta_{c_i, k}), \label{equ:SSL} \\
    \gamma_{c_i,k} & \sim \text{Bernoulli}\left(0.5\right) \label{equ:bernoulli}.
\end{align}
where $\psi_s(\beta)=\frac{\lambda_s}{2}\exp(-\lambda_s |\beta|)$ for $s$ = 0 or 1, is the Laplace distribution with $\lambda_0\gg\lambda_1$; $\gamma_{c_i,k}$ is a binary variable that determines the shrinkage level of $\beta_{c_i,k}$, for $k = 1,\dots, K$. If $\gamma_{c_i,k}=0$, the $\beta_{c_i,k}$ is shrunk to zero with a higher probability, and vice versa. 

If we allow a sparsity parameter for each variable (i.e. the group size is one) in \labelcref{equ:sparse_decoder}-\labelcref{equ:bernoulli}, then the model is the Sparse autoencoder. Apart from the sparse autoencoder proposed in \cite{moran2021identifiable}, the autoencoder in \cite{ainsworth2018oi} is also closely related to ours. This autoencoder imposes the Bayesian lasso prior \citep{park2008bayesian} to the weight that produces the first hidden layer in the decoder. The corresponding weight shares the same degree of sparsity for the variables within the same group. The difference between this model and ours is twofold. Firstly, we induce sparsity to ${\boldsymbol{\beta}_{c_i}}$, for $i = 1,\dots,N$, which is different to the weight. We adopt this structure because it facilitates the proof of identifiability that we will discuss later. Secondly, we used the SSL instead of the Bayesian lasso because Lasso \citep{tibshirani1996regression} can under-regularize large coefficients and over-regularize small coefficients \citep{ghosh2016asymptotic}; it also has a sub-optimal posterior
contraction rate in the Bayesian framework \citep{castillo2015bayesian}. The SSL mitigates these issues with theoretical results available in \cite{bai2021spike}. 

Next, we show the latent factors in the GS autoencoder are identifiable up to element-wise transformation. Similar to the model in \cite{moran2021identifiable}, this identifiability assumes that we know the \textit{anchor group} of each factor with the following definition:
\begin{definition}
    A data category $c$ is an anchor group of factor $k$, $k=1,\dots, K$, if $\beta_{c,k}\neq 0$ and $\beta_{c,k^\prime}=0$ for all $k^\prime\neq k$.
\end{definition}
The concept of "anchor" first appeared in identifiable linear models \citep{arora2013practical, bing2020adaptive, bing2020fast}. For example, \cite{arora2013practical} defined anchor words that anchor the topics of documents, i.e. if a document contains an anchor word of a particular topic, then this document must be about this topic. In this paper, the anchor group of a factor means that only this factor reconstructs the variables in this group. Denote $\boldsymbol{B}=\left(\boldsymbol{\beta}_1,\dots, \boldsymbol{\beta}_C\right)$, for $C$ number of data categories, and $\boldsymbol{f}_t=\left(f_{t,1},\dots, f_{t,K}\right)$, we have the following theorem:


\begin{theorem}
    Suppose the following assumptions hold:
    \begin{enumerate}[label=(\arabic*)]
        \item The decoder follows \labelcref{equ:sparse_decoder} with $C$ number of data categories.
        \item Each factor has a known anchor group.
    \end{enumerate}
If we have two sets of decoder parameters and factors: $\{\boldsymbol{\theta}, \boldsymbol{B}, \boldsymbol{f}_t\}$ and $\{\boldsymbol{\theta}^*, \boldsymbol{B}^*, \boldsymbol{f}^*_t\}$, which yield the same reconstructions of $\hat{\boldsymbol{x}}_t$, for $t=1,\dots, T$, then the recovery of $f_{t,k}$ only depends on $f^*_{t,k}$ and parameters learned in the decoders, i.e. $f_{t,k}$ is identified up to element-wise transformations, $h_{k}(\cdot)$, for $k=1,\dots, K$.
\end{theorem}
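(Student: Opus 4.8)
The plan is to exploit the anchor-group structure to pin down each factor coordinate one at a time, and then argue that the remaining composite map is an invertible scalar function. First I would fix a factor index $k$ and let $c = c(k)$ be its known anchor group. By Definition of the anchor group, $\beta_{c,k'} = 0$ for all $k' \neq k$, so for any variable $i$ with $c_i = c$ the input to the shared decoder is $\boldsymbol{f}_t \odot \boldsymbol{\beta}_c$, which has only its $k$-th entry nonzero; write this entry as $f_{t,k}\beta_{c,k}$. Hence $\hat{x}_{t,i} = g^d_{i,L}\bigl(\tilde g^d_{\boldsymbol{\theta}}(f_{t,k}\beta_{c,k}\,\boldsymbol{e}_k)\bigr)$, where $\tilde g^d_{\boldsymbol{\theta}} = g^d_{L-1}\circ\dots\circ g^d_1$ and $\boldsymbol{e}_k$ is the $k$-th standard basis vector. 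The key observation is that the entire right-hand side, as a function of the scalar argument $f_{t,k}$, is a fixed function $\rho_{k}:\mathbb{R}\to\mathbb{R}$ determined solely by $\boldsymbol{\theta}$ and $\beta_{c,k}$ — call it $\rho_{k}(f_{t,k})$; similarly the starred parameters give $\hat x_{t,i} = \rho^*_{k}(f^*_{t,k})$.

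Next I would use the equality of reconstructions, $\rho_{k}(f_{t,k}) = \rho^*_{k}(f^*_{t,k})$ for all $t$, together with invertibility of $\rho_k$ (or at least of $\rho^*_k$) to conclude $f_{t,k} = \rho_k^{-1}\!\circ\rho^*_{k}(f^*_{t,k}) =: h_k(f^*_{t,k})$. This is exactly the claimed element-wise recovery: the function $h_k = \rho_k^{-1}\circ\rho_k^*$ depends only on decoder parameters (both starred and unstarred) and not on $t$ or on the other factor coordinates. Carrying this out for each $k = 1,\dots,K$ assembles the full statement that $\boldsymbol{f}_t$ is identified up to the element-wise transformation $\boldsymbol{h} = (h_1,\dots,h_K)$. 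I would present the argument coordinatewise so that no permutation ambiguity enters — the anchor groups are \emph{known}, so the bookkeeping matches factor $k$ to factor $k$ in both parametrizations.

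The main obstacle is justifying invertibility of the scalar map $\rho_k$. This is where the paper's earlier remarks about choosing the decoder architecture and activation function become essential: $\rho_k$ is a composition of the affine-plus-activation layers $g^d_1,\dots,g^d_{L-1}$ (restricted to the one-dimensional ray spanned by $\boldsymbol{e}_k$), the scalar multiplication by $\beta_{c,k}$, and the final variable-specific linear map $g^d_{i,L}$. For the composition to be invertible one needs (i) $\beta_{c,k}\neq 0$, which holds by the anchor definition; (ii) each hidden activation $g$ to be strictly monotone — this rules out ReLU and motivates a choice such as $\tanh$ or leaky-ReLU (the paper hints it "determine[s] the decoder architecture and its activation function" via exactly these transformation properties); and (iii) the relevant weights to have full column rank along the ray, so that each $g^d_l$ restricted to the image of the previous layer stays injective. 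I would state these as the structural conditions the GS decoder is built to satisfy, then note that under them $\rho_k$ is a continuous strictly monotone bijection onto its image, so $\rho_k^{-1}$ exists on the range of the data and $h_k$ is well defined. A secondary, more routine point to handle carefully is that $\rho_k^*$ must also be surjective onto the common reconstruction range so that $\rho_k^{-1}\circ\rho_k^*$ composes correctly; this follows from the same monotonicity/rank assumptions applied to the starred parameters, since both parametrizations are required to live in the same model class satisfying assumption (1).
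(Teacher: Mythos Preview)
Your proposal is correct and follows essentially the same route as the paper: fix $k$, pick a variable $i$ in the known anchor group, use $\beta_{c_i,k'}=0$ for $k'\neq k$ to collapse the decoder input to a scalar, and equate the two scalar reconstruction maps to obtain $f_{t,k}=h_k(f^*_{t,k})$. The paper's proof is in fact shorter than yours because it does \emph{not} insist on invertibility to establish the theorem as stated: it simply splits into the invertible case (giving the explicit $h_k=\beta_{c_i,k}^{-1}(\tilde g^d_{i,\boldsymbol{\theta}})^{-1}\circ\tilde g^d_{i,\boldsymbol{\theta}^*}$) and the non-invertible case (asserting only that some coordinate-wise mapping $h_k$ exists), and then defers the architectural conditions you discuss---injective activations, full-column-rank weights, the modification of $g^d_{i,L}$---to the separate Table~\ref{tab:property} and Appendix~\ref{sec:property}. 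So your third paragraph is not needed for the theorem itself; it is the content the paper develops \emph{after} the proof to decide which decoder class to use.
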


\begin{proof}
For $k\in\{1,\dots,K\}$, suppose the $i$-th variable of $\boldsymbol{x}_t$, $\boldsymbol{x}_{t,i}$, is from the anchor group of the $k$-th factor, then $\beta_{c_i,k}\neq 0$ and $\beta_{c,k^\prime}=0$ for all $k^\prime\neq k$. We can simplify $g^d_{i,\boldsymbol{\theta}}\left(\boldsymbol{f}_t\odot {{\boldsymbol{\beta}_{c_i}}}\right)$ to $\tilde{g}^d_{i,\boldsymbol{\theta}}\left(f_{t,k}{{\beta_{c_i,k}}}\right)=\left(g^d_{i,L}\circ g^d_{L-1} \circ \dots \circ g^d_{2} \circ \tilde{g}^d_{1,k}\right)_{\boldsymbol{\theta}}\left(f_{t,k}{{\beta_{c_i,k}}}\right)$, where $\tilde{g}^d_{1,k}$ is the part of $g^d_{1}$ that is associated with the $k$-th factor, so $\tilde{g}^d_{i,\boldsymbol{\theta}}\left(f_{t,k}{{\beta_{c_i,k}}}\right)=\tilde{g}^d_{i,\boldsymbol{\theta}^*}\left(f^*_{t,k}{{\beta^*_{c_i,k}}}\right)$. If $\tilde{g}^d_{i,\boldsymbol{\theta}}(\cdot)$ is invertible, then $f_{t,k}={\beta^{-1}_{c_i,k}}\left((\tilde{g}^d_{i,\boldsymbol{\theta}})^{-1}\circ\tilde{g}^d_{i,\boldsymbol{\theta}^*}\right)\left(f^*_{t,k}\right)$. If $\tilde{g}^d_{i,\boldsymbol{\theta}}(\cdot)$ is not invertible, there exists a mapping $h_{k}(\cdot)$ such that $f_{t,k}=h_{k}(f^*_{t,k})$.
\end{proof}

The identifiability in Theorem 1 mitigates rotational invariance and any invariance involving transformations that require multiple factors. Even though the factors are semi-identifiable, which is weaker than the canonical one such that $f_{t,k}=f^*_{t,k}$, for $t=1,\dots,T$ and $k=1,\dots,K$, in practice, we find that the GS autoencoder effectively identifies most factors after standardization and sign switching. 

\begin{table}[!htbp]
\small
\centering
  \begin{tabular}{lll}
\toprule
$\tilde{g}^d_{i_k,\boldsymbol{\theta}}(\cdot)$ & $h_k(\cdot)$ & Examples\\
    \midrule
    Invertible &  Has a one-to-one closed form. & \tabitem Invertible neural networks, e.g. \cite{dinh2014nice}.\\
    & & \tabitem Neural networks without activation function. \\
    \vspace{1mm}
    \\
    Injective &  A one-to-one mapping. & Neural networks satisfying the following conditions:\\
    & & \tabitem The activation function is injective.\\
    & & \tabitem The weights in $g^d_{l}$ are full column rank matrices ($l=2,\dots, L-1$).\\
    & & \tabitem The weight in $g^d_{L,i}$, $\boldsymbol{W}^d_{L,i}$, are modified to 
    \(\displaystyle \begin{pmatrix}
    \boldsymbol{W}^d_{L,i} \\
    \boldsymbol{I}_{2:D^d_{L-1},D^d_{L-1}}
    \end{pmatrix} \)
    . \vspace{1mm}\vspace{1mm}\\
    Non-injective & A one-to-many mapping & \tabitem Neural netowrks with the ReLU.\\
    
    \bottomrule
\end{tabular}
\caption{Properties of $h_{k}(\cdot)$ based on different decoders. $\boldsymbol{I}_{2:D^d_{L-1},D^d_{L-1}}$ is the second to the last rows of an $D_{L-1}$-by-$D_{L-1}$ identity matrix.}
\label{tab:property}
\end{table}

Finally, we shed light on the decoder architecture and choice of activation functions by exploiting different properties of the element-wise transformations, $h_{k}(\cdot)$, across various decoders. In particular, we focus on $\tilde{g}^d_{i_k,\boldsymbol{\theta}}$ (see notations in the proof of Theorem 1), where the $i_k$-th variable is from the anchor group of the $k$-th factor, for $k=1,\dots,K$. Table \ref{tab:property} presents these properties with the corresponding decoders and examples. More description about each connection between the property and the example can be found in Appendix \ref{sec:property}. To select the architecture and activation function, we firstly eliminate the third case when the decoder is not injective, because it is difficult to recover $f_{t,k}$ from $f^*_{t,k}$ via a one-to-many mapping. Between the invertible and injective decoders, we select the latter due to its simplicity since an injective activation function and the modification mentioned in Table \ref{tab:property} are straightforward, then the only assumption we need is full column rank matrices. Although a one-to-one closed form of $h_{k}(\cdot)$ is appealing, the complexity of specifying the invertible architecture extends beyond the scope of our current study, which could be an extension of our work. In this paper, $h_{k}(\cdot)$ is a one-to-one mapping as long as $\tilde{g}^d_{i_k,\boldsymbol{\theta}}$ is injective, for $k=1,\dots,K$, but for consistency, we set the architecture and activation function to be the same for all decoders which reconstruct $\boldsymbol{x}_t$. We use 5-fold cross-validation to select the injective activation function between tanh($\cdot$) and Leaky ReLU,  $g\left(x\right)$ = $\text{max}(ax,x)$, where $a$ is a multiplier smaller than 1, because of their popularity in the deep learning literature. 

After training the GS autoencoder using gradient descent (see details in Section \ref{sec:Training Deep Learning Model}), we adopt a similar procedure to the one in \cite{klieber2024non} to approximate a linear transformation between factors and high-dimensional data. This procedure yields easy derivation of the impulse responses of $\boldsymbol{x}_t$ to the shocks.

\subsection{TVP-FAVAR} 
\label{sec:FA-TVP-VAR}
Constructing a non-linear model in the first step of the two-step procedure implies that the dynamics between the high-dimensional data and factors is non-linear, so it is natural to also model the evolution of factors as non-linear. We employ the TVP-VAR structure in \cite{primiceri2005time} to express the time variation in the second step of the two-step procedure. Specifically, \labelcref{equ:FAVAR} changes to:
\begin{equation}
    \left(\begin{aligned}
&\boldsymbol{f}_t\\
&\boldsymbol{y}_t 
\end{aligned} 
\right) = \boldsymbol{A}_{t,1} \left(
\begin{aligned}
&\boldsymbol{f}_{t-1}\\
&\boldsymbol{y}_{t-1} 
\end{aligned} 
\right) + \dots+\boldsymbol{A}_{t,P} \left(
\begin{aligned}
&\boldsymbol{f}_{t-P}\\
&\boldsymbol{y}_{t-P} 
\end{aligned} 
\right) +\boldsymbol{\eta}_t,\, \boldsymbol{\eta}_t\sim\mathcal{N}(\textbf{0},\boldsymbol\Omega_t),
\end{equation}
where $\Omega_t=\boldsymbol{H}_t\boldsymbol{S}_t\boldsymbol{S}^\prime_t\boldsymbol{H}_t^\prime$, $\boldsymbol{H}_t$ is a lower triangular matrix with ones on the diagonal and $\boldsymbol{S}_t$ is a diagonal matrix. 

We vectorize $\left(\boldsymbol{A}_{t,1},\dots, \boldsymbol{A}_{t,P}\right)$, non-zero unknown entries in $\boldsymbol{H}^{-1}_t$ and diagonal entries in $\boldsymbol{S}_t$ to $\boldsymbol{a}_t$, $\boldsymbol{h}_t$ and $\boldsymbol{s}_t$ respectively, and let them follow (log) random walks:\\
\begin{minipage}{0.45\textwidth}
\begin{align*}
    \boldsymbol{a}_t &= \boldsymbol{a}_{t-1} + \boldsymbol{\xi}_{a,t},\\
    \boldsymbol{h}_t &= \boldsymbol{h}_{t-1} + \boldsymbol{\xi}_{h,t},
    \\
    \log{\boldsymbol{s}_t} &= \log{\boldsymbol{s}_{t-1}} +  \boldsymbol{\xi}_{s,t},
\end{align*}
\end{minipage}
\begin{minipage}{0.45\textwidth}
\begin{equation*}
\begin{split}
    \left(
\begin{aligned}
&\boldsymbol{\xi}_{a,t}\\
&\boldsymbol{\xi}_{h,t}\\
&\boldsymbol{\xi}_{s,t}
\end{aligned} 
\right)& \sim \mathcal{N}\left(\mathbf{0},\begin{pmatrix}
\boldsymbol{Q}_a & 0 & 0\\
0 & \boldsymbol{Q}_h & 0\\
0 & 0 & \boldsymbol{Q}_s
\end{pmatrix}\right),
\end{split}
\end{equation*}
\end{minipage} \\
where $\boldsymbol{Q}_a$ and $\boldsymbol{Q}_s$ have no restrictions but positive definite matrices, and $\boldsymbol{Q}_h$ is block-diagonal so that the elements in $\boldsymbol{h}_t$ are only correlated to the elements in the same row of $\boldsymbol{H}^{-1}_t$.

Since we approximate a linear transformation between factors and the high-dimensional data, one may impose an additional time-varying structure to \labelcref{equ:factor model} to have $\boldsymbol{\Lambda}_t$ and $\boldsymbol{\Sigma}_t$. We do not include this structure mainly due to the high computational cost of the MCMC. For example, our real data application needs to infer 346,005 additional parameters if we use this structure.


\section{Estimation}
\label{sec:Estimation}

We split the inference of parameters into two steps according to the two-step procedure. The first step is to train parameters in the GS autoencoder, i.e. $\{\boldsymbol{\phi},\boldsymbol{\theta}, \boldsymbol{B}, \boldsymbol{\Gamma}\}$, where $\boldsymbol{\Gamma}=\left(\boldsymbol{\gamma}_1,\dots, \boldsymbol{\gamma}_C\right)$ with $\boldsymbol{\gamma}_c = (\gamma_{c,1},\dots, \gamma_{c,K})^\prime$, and the second step infers the parameters in the TVP-VAR, i.e. $\{\boldsymbol{a}_t, \boldsymbol{h}_t, \boldsymbol{s}_t, \boldsymbol{Q}_a, \boldsymbol{Q}_h, \boldsymbol{Q}_s\}$. The following two subsections provide details of these two steps, respectively.

\subsection{Training the Deep Learning Model} 
\label{sec:Training Deep Learning Model}
The training process is essentially to maximize the marginal loglikelihood $\log p\left(\boldsymbol{x}_{1:T}\right)$. Since this likelihood is intractable due to the non-linear activation function, we maximize an objective function that is an evidence lower bound (ELBO) of the likelihood instead. The objective function is written as:
\begin{align}
\mathcal{L}\left(\boldsymbol{\phi},\boldsymbol{\theta},\boldsymbol{B}\right)=&-\frac{1}{2T}\sum_{t=1}^T\text{MSE}\left(\boldsymbol{x}_t,\hat{\boldsymbol{x}}_{t}\right)+ \frac{1}{T}\frac{1}{N}\sum_{c=1}^C\sum_{k=1}^K \left[p_{c,k}\left(\log \psi_1\left(\beta_{c,k}\right)-\log p_{c,k}\right)\right.\\
    &\left.+\left(1-p_{c,k}\right)\left(\log \psi_0\left(\beta_{c,k}\right)-\log\left(1-p_{c,k}\right)\right)\right], \label{equ:ELBO}
\end{align}
where MSE denotes mean squared error, $p_{c,k}=\mathbb{E}[\gamma_{c,k}\mid \beta_{c,k}]=\frac{\psi_1\left(\beta_{c,k}\right)}{\psi_0\left(\beta_{c,k}\right)+\psi_1\left(\beta_{c,k}\right)}$. The derivation of this objective function is in Appendix \ref{sec:Derivation of the ELBO}.

This ELBO is composed of two parts. The mean squared error part guides the latent factors to form effective low-dimensional representations of $\boldsymbol{x}_{1:T}$. Note that the division of 2 in front of the summation assumes the variance of each variable in $\boldsymbol{x}_t$ is 1. The remaining part regularizes $\boldsymbol{B}$ to follow an SSL prior. Algorithm \ref{alg:1} summarizes the training of all parameters in the GS autoencoder. $\lambda_0$ and $\lambda_1$ are chosen by the cross-validation. $m$ presents the $m$-th iteration, which passes all data points in the training process. We consider the mini-batch gradient descent, which uses only a batch of data every time to update the parameters, and is known to be efficient and stable. The optimizer is Adaptive Moment Estimation (Adam) \citep{kingma2014adam}. We set the number of iterations, i.e. epochs, and the batch size as 200 and 24, respectively.

\begin{breakablealgorithm}
\caption{Training the GS autoencoder}\label{alg:1}
\begin{algorithmic}
\State \textbf{Input:} $\boldsymbol{x}_{1:T}$, $\lambda_0$ and $\lambda_1$.
\State \textbf{Output:} $\boldsymbol{\phi}$, $\boldsymbol{\theta}$, $\boldsymbol{B}$ and $\boldsymbol{\Gamma}$. 
\For {$m$ in $1,\dots,$ epochs}:
\For {each batch}:
\State Update $\boldsymbol{\phi}$, $\boldsymbol{\theta}$, $\boldsymbol{B}$ according to $\mathcal{L}\left(\boldsymbol{\phi},\boldsymbol{\theta},\boldsymbol{B}\right)$ with Adam.
\State Update the posterior of $\gamma_{c,k}$, for $c=1,\dots,C$ and $k=1,\dots,K$: \[p\left(\gamma_{c,k}\mid\beta_{c,k}\right)=\frac{\psi_1\left({\beta}_{c,k}\right)}{\psi_0\left({\beta}_{c,k}\right)+\psi_1\left({\beta}_{c,k}\right)}\].
\EndFor
\EndFor
\end{algorithmic}
\end{breakablealgorithm}

\subsection{Bayesian Inference} 
\label{sec:Bayesian Inference}
We use Bayesian inference to learn the parameters in the TVP-VAR. To induce parsimony, we follow \cite{korobilis2013assessing} to set a Minnesota-type prior to $\boldsymbol{a}_0$, the vectorization of $\boldsymbol{A}_0=\left(\boldsymbol{A}_{0,1},\dots, \boldsymbol{A}_{0,P}\right)$. Specifically, the $(i,j)$ entry of $\boldsymbol{A}_{0,p}$ follows $\mathcal{N}\left(0, \underline{\boldsymbol{V}}_{p,(i,j)}\right)$, with $
    \underline{V}_{p,(i,j)}=\begin{cases}
    \frac{0.7}{p^2}, & \text{if }i=j\\
    \frac{0.1}{p^2}\frac{\hat{\sigma}_i}{\hat{\sigma}_j}, &\text{if }i\neq j
    \end{cases}$, where $\hat{\sigma}^2_i$ is the variance estimate of $\boldsymbol{y}_{t,i}$ sequence modeled by an AR(2) process, the multipliers (0.7 and 0.1) mitigate explosive draws without sacrificing the time variation of the parameters. The priors of $\boldsymbol{h}_0$ and $\log\boldsymbol{s}_0$ are $\mathcal{N}\left(\mathbf{0}, 4\boldsymbol{I}\right)$. Denote the variance-covariance matrix of $\boldsymbol{a}_0$, $\boldsymbol{h}_{0,m}$ (non-zero entries on the $m$-th row of $\boldsymbol{H}^{-1}_0$ for $m=2,\dots, M+K$) and $\log\boldsymbol{s}_t$ as $\underline{\boldsymbol{V}}_a$, $\underline{\boldsymbol{V}}_{h,m}$ and $\underline{\boldsymbol{V}}_{s}$, $\boldsymbol{Q}_{a}$, $\boldsymbol{Q}_{h,m}$ and $\boldsymbol{Q}_s$ follow inverse-Wishart priors: $\boldsymbol{Q}_{a}\sim\mathcal{IW}\left(0.0001\times (\text{dim}(\boldsymbol{a}_0)+1)\times \underline{\boldsymbol{V}}_a, \text{dim}(\boldsymbol{a}_0)+1\right)$, $\boldsymbol{Q}_{h,m}\sim$$\mathcal{IW}\left(0.0001\times \text{dim}((\boldsymbol{h}_{0,m})+1)\times \underline{\boldsymbol{V}}_{h,m}, \text{dim}(\boldsymbol{h}_{0,m})+1\right)$, then $\boldsymbol{Q}_{s}\sim\mathcal{IW}\left(0.01\times (\text{dim}(\boldsymbol{s}_0)\right.$\\$\left.+1)\times \underline{\boldsymbol{V}}_s, \text{dim}(\boldsymbol{s}_0)+1\right)$\footnote{If an $n$-by-$n$ positive definite matrix $\boldsymbol{Q}$ follows $\mathcal{IW}\left(\boldsymbol{V},\nu\right)$, where $\boldsymbol{V}\in\mathbb{R}^{n\times n}$ and $\nu$ is a scalar, then its probability density function is $\frac{\lvert A \rvert ^{\nu/2}}{2^{n \nu/2}\Gamma_n(\nu/2)}\lvert \boldsymbol{Q}\rvert ^{-\frac{\nu+n+1}{2}}e^{-\frac{1}{2}\text{tr}\left(\boldsymbol{V}\boldsymbol{Q}^{-1}\right)}$, where $\Gamma_n$ is the multivariate gamma function.}, where dim$(\boldsymbol{a}_0)=p(M+K)^2$, dim$(\boldsymbol{h}_{0,m})=m-1$ and dim$(\boldsymbol{s}_0)=M+K$. Since we approximate a linear factor model after getting the factors from the deep learning model, we need to impose priors to $\boldsymbol{\Lambda}$ and $\boldsymbol{\Sigma}$. In particular, $\text{vec}(\boldsymbol{\Lambda})\sim\mathcal{N}\left(\mathbf{0}, 4\boldsymbol{I}\right)$ and $\boldsymbol{\Sigma}^{-1}_{i,i}\sim\text{Gamma}\left(0.01,0.01\right)$ with zero non-diagonal entries.

    The inference of the TVP-VAR adopts the MCMC algorithm proposed in \cite{del2015time}, which is the corrigendum of \cite{primiceri2005time} that alters the ordering of sampling blocks to yield draws from correct posterior and uses a Metropolis-Hastings step to infer $\boldsymbol{s}_t$. In particular, we sample $\boldsymbol{a}_t$ and $\boldsymbol{h}_t$ via the forward-filtering-backward-sampling algorithm \citep{fruhwirth1994data, carter1994gibbs}, and sample $\boldsymbol{s}_t$ with a mixture model following \citep{kim1998stochastic}. In the real data application, we use an R package called \textbf{bvarsv} \citep{krueger2015bvarsv} to sample these parameters with 10,000 iterations of burn-in and 100,000 iterations of MCMC sampling. More details about the sampling of the linear approximation, $\boldsymbol{\Lambda}$ and $\boldsymbol{\Sigma}$ , are in Appendix \ref{sec:Estimation Details}. 


\section{Empirical Results}
\label{sec:Empirical Results}

\subsection{Data and Implementation Detail}
\label{sec:Data and Implementation Detail}
We use 168 quarterly US macroeconomic variables in \cite{mccracken2020fred} to demonstrate the utility of the proposed model. The data ranges from 1965:Q1 to 2023:Q1, and is divided into 12 groups according to \cite{mccracken2020fred}: (1) National Income and Product Accounts (NIPA), (2) industrial production, (3) earning and productivity, (4) labor market, (5) housing, (6) inventory, orders and sales, (7) prices, (8) interest rate, (9) money and credit, (10) household balance sheets, (11) exchange rates and (12) stock market. All time series are transformed to stationarity and standardized, as is conventional in the FAVAR literature. Readers can refer to Appendix \ref{sec:Data} for more details about the data. For all FAVAR models considered in this application, the observable factors ($\boldsymbol{y}_t\in\mathbb{R}^3$) are the gross domestic product: implicit price deflator (GDPDEF), unemployment rate (UNRATE) and effective Federal funds rate (FEDFUNDS), which are the proxies of inflation, labor market and interest rate, respectively. The other variables construct the high-dimensional variable $\boldsymbol{x}_t\in\mathbb{R}^{165}$ that we extract factors from. 

There are 4 dimension reduction methods considered in this study: PCA, standard autoencoder, the GS autoencoders with either identity or non-linear activation functions. The former GS autoencoder is a linear model analogous to the structural FAVAR \citep{belviso2006structural}, and the latter is non-linear. We present the implementation of the non-linear GS autoencoder, as it is the most general case. Readers can adapt relevant components of this implementation for other dimension reduction methods of interest. In particular, the PCA only requires specifying the number of factors, the standard autoencoder implementation does not involve anchor groups or the SSL, and the linear GS autoencoder replaces the non-linear activation function with an identity function.

According to the cross-validation result, we extract 5 factors from the high-dimensional data, and find that the same number of principal components explain about 67\% of the variation. The anchor groups of these factors are: NIPA, labor market, prices, interest rates, money and credit. We choose these 5 groups because they align as closely as possible with those considered in the structural FAVAR of \cite{belviso2006structural}, and cover a large proportion of the variables. After setting the anchor groups, we assume that the first $5$ rows of $\boldsymbol{\Gamma}$ (the matrix corresponding to $\boldsymbol{B}$ to indicate the spike or slab lasso, see definition in Section \ref{sec:Training Deep Learning Model}) form an identity matrix. For the SSL, the cross-validation suggests that the hyperparameter choices are $\lambda_0=1000$ and $\lambda_1=1$. As discussed in \cite{rovckova2016fast}, setting a very large value to $\lambda_0$ (like in this case) allows $\lambda_0\approx\infty$ in practice, which leads to a Dirac spike at zero in the SSL. Thus, this hyperparameter choice further strengthens our assumption about anchor groups because the top 5 rows of $\boldsymbol{B}$ will be approximately diagonal.

Then we turn to the remaining architecture of the encoder, $g^e_{\boldsymbol{\phi}},$ and decoder, $g^d_{i,\boldsymbol{\theta}}$, for $i=1,\dots,N$. Three sets of hyperparameters need to be determined: the number of layers ($L$), the dimensions of layers in the encoder ($D^e_l$ for $l=1,\dots, L$) and those in the decoder ($D^d_l$ for $l=1,\dots, L$). The cross-validation chooses $L=3$, then we can evenly downsize the dimensions from $N=165$ to $K=5$ and get $\left(D^e_1, D^e_2, D^e_3\right)=(111, 58, 5)$. For the last set of hyperparameters, we adopt a mirror structure of the encoder: $\left(D^d_1, D^d_2, D^d_3\right)=(58, 111,1)$, where the last dimension is 1 because the decoder, $g^d_{i,\boldsymbol{\theta}}$, reconstructs the $i$-th variable. Lastly, we select the activation function as the Leaky ReLU with $a=10^{-16}$ from the cross-validation result if the GS autoencoder is non-linear. 

For the evolution of factors, we compare the time-invariant (TIV) and TVP model specifications. We set the lag order $(P)$ to be 2, which is the same as that in \cite{korobilis2013assessing}. The TIV one imposes a Minnesota-type prior to the coefficient matrix and an inverse-Wishart prior to the variance-covariance matrix. Appendix \ref{sec:Bayesian Inference of Time-invariant VAR} provides more details about the prior setting and the Bayesian inference.  

\subsection{Analysis of Latent Factors}
\label{sec:Analysis of Latent Factors}

This subsection compares factors extracted via PCA and the non-linear GS autoencoder. While comparing the non-linear GS autoencoder and the standard autoencoder would be natural, we exclude the latter from this analysis due to its non-identifiable factors. Given that PCA factors are identifiable under restrictions mentioned in Section \ref{sec:Linear FAVAR} and share similar interpretability limitations discussed in Section \ref{sec:Challenges of Standard Autoencoder and Solutions}, they serve as suitable alternatives to the standard autoencoder factors for this comparison. The comparison of factors from the linear and non-linear GS autoencoders is in Appendix \ref{sec:Additional Results of Real Data Application}. To facilitate comparison, we permute the PCA factors to maximize their correlation with their respective GS autoencoder counterparts. These permuted factors explain about 22\%, 25\%, 7\%, 4\%, and 8\% of the data variation, respectively. 

A conventional approach to interpreting FAVAR latent factors involves plotting factor time series alongside the variable in $\boldsymbol{x}_t$ exhibiting the highest correlation with each factor. However, this method overlooks other variables that also demonstrate strong correlations with the factors. Thus, we follow \cite{klieber2024non} to record the variables with the highest 15 correlation magnitudes to each factor. Figure \ref{fig:factor_ts_nonlinesr_1} - \ref{fig:factor_ts_nonlinesr_5} present the time series of both PCA and the non-linear GS autoencoder factors, as well as the magnitudes of correlations between these factors and the 15 variables. The factors extracted from these dimension reduction methods show varying degrees of similarity. The first and second factors exhibit stronger similarity, with 7 and 11 common variables, respectively. While the third factors share 3 variables in common, the fourth and fifth factors show no overlap in their recorded variables. 

The first factors from both methods capture recession periods, with the GS autoencoder factor showing stronger sensitivity to recent crises like the dot-com bubble, GFC, and COVID-19 pandemic. While the PCA factor correlates with various economic indicators (prices, labor market, NIPA, and industrial production) representing broad real activities, the GS autoencoder factor exhibits stronger correlations specifically with NIPA and industrial production variables, suggesting a more focused representation of these categories.

\begin{figure}[!htbp]
\centering
\includegraphics[width=\textwidth]{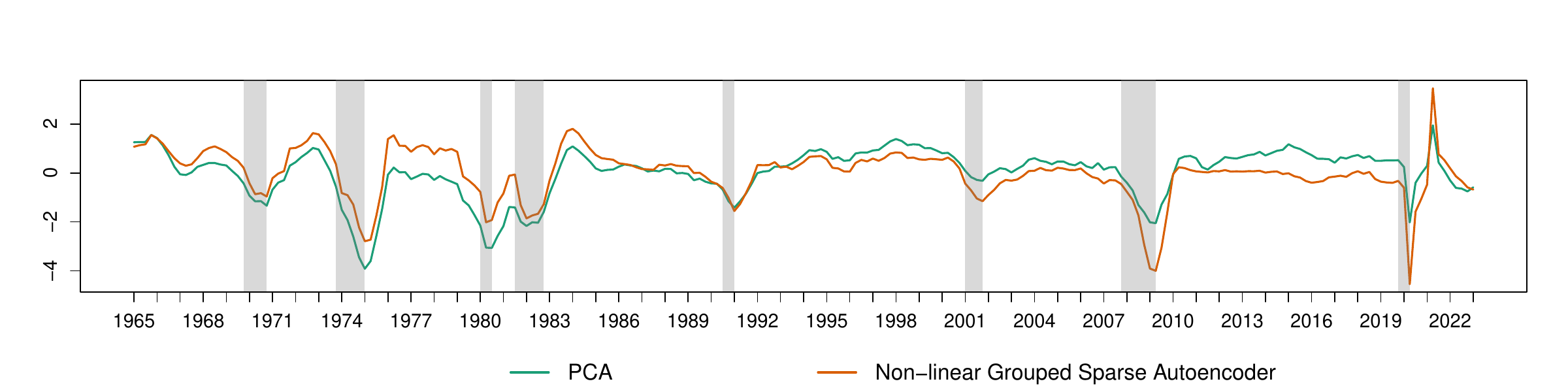}
\vskip 0.5cm
\includegraphics[width=0.8\textwidth]{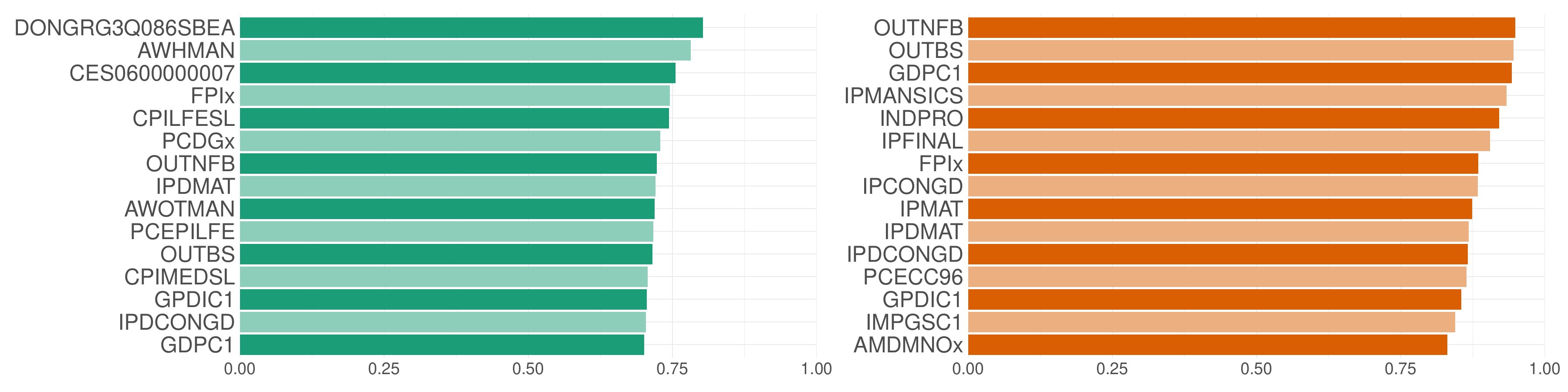}
\vskip\floatsep
\caption{The first factor extracted from the PCA and non-linear GS autoencoder (top panel), and variables with the 15 highest correlation magnitudes with the corresponding factors (bottom panel). The time series are standardized to have zero mean and variance one. The grey bands highlight the recession periods.}
\label{fig:factor_ts_nonlinesr_1}
\end{figure}

The second factors from the two methods have strong correlations with labor market data. Given that both factors show pronounced spikes during recession periods, this factor can be interpreted as a measure of labor market distress. The non-linear factor is smoother than its PCA counterpart, especially post 1990s. 

\begin{figure}[!htbp]
\centering
\includegraphics[width=\textwidth]{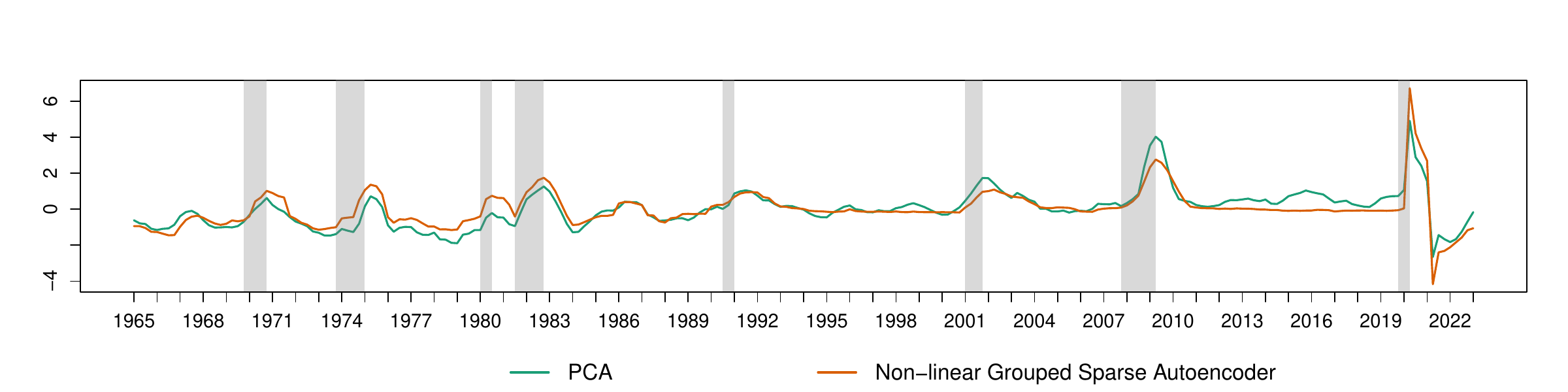}
\vskip 0.5cm
\includegraphics[width=0.8\textwidth]{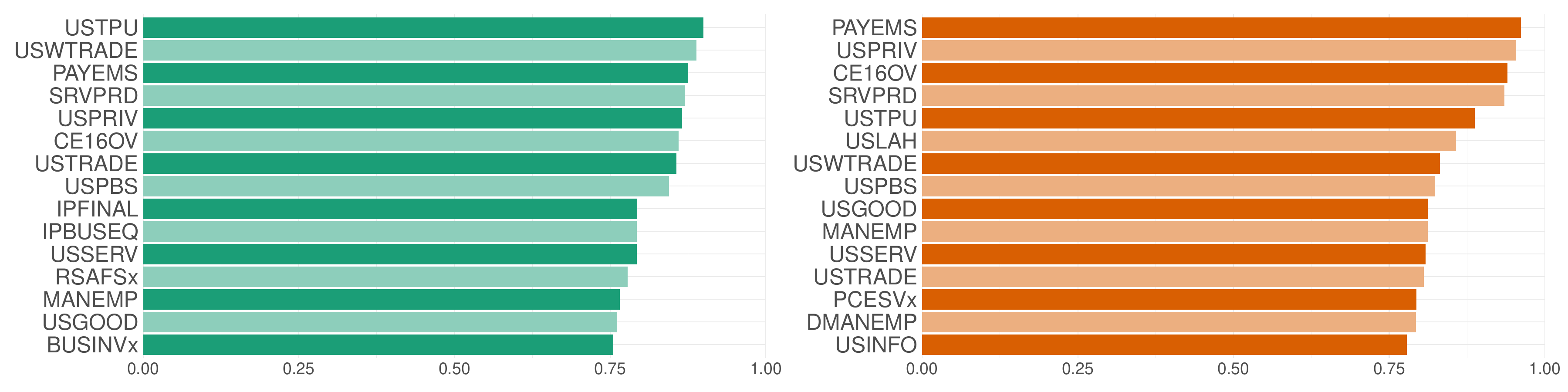}
\vskip\floatsep
\caption{The second factor extracted from the PCA and non-linear GS autoencoder (top panel), and variables with the 15 highest correlation magnitudes with the corresponding factors (bottom panel). The time series are standardized to have zero mean and variance one. The grey bands highlight the recession periods.}
\label{fig:factor_ts_nonlinesr_2}
\end{figure}

The third factors from both methods relate to prices, but differ in their focus and correlation magnitudes. The GS autoencoder factor emphasizes consumption prices, while the PCA factor captures both consumption and producer prices. However, the GS autoencoder factor shows a consistently stronger correlation (>0.75) with its price variables compared to the PCA factor, where only one correlation exceeds 0.7.

\begin{figure}[!htbp]
\centering
\includegraphics[width=\textwidth]{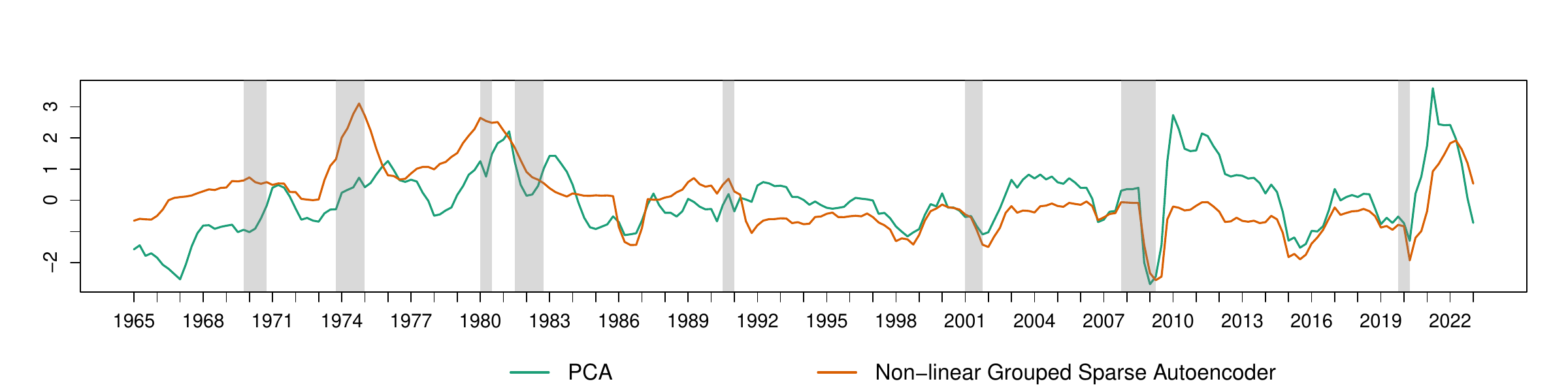}
\vskip 0.5cm
\includegraphics[width=0.8\textwidth]{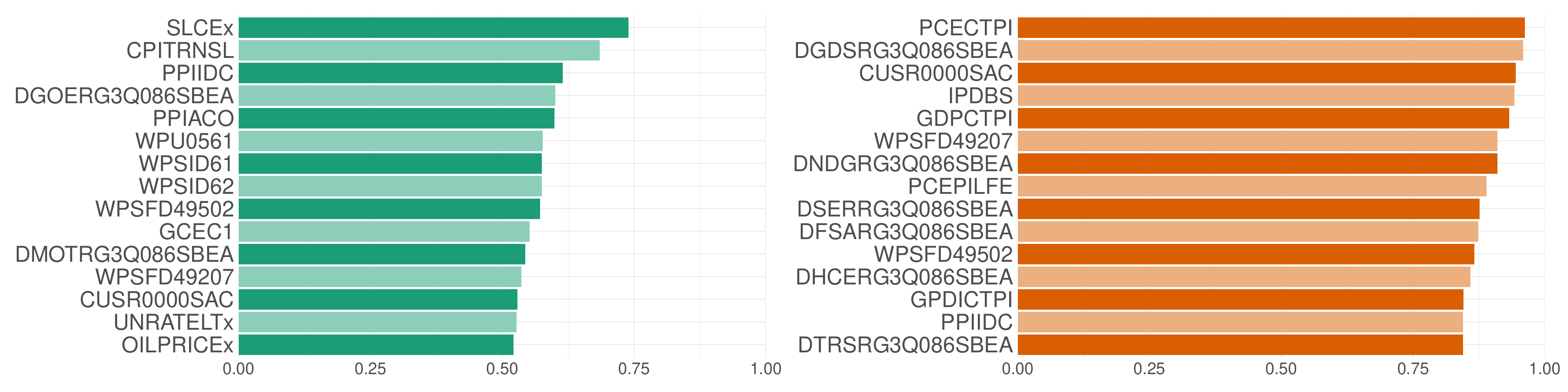}
\vskip\floatsep
\caption{The third factor extracted from the PCA and non-linear GS autoencoder (top panel), and variables with the 15 highest correlation magnitudes with the corresponding factors (bottom panel). The time series are standardized to have zero mean and variance one. The grey bands highlight the recession periods.}
\label{fig:factor_ts_nonlinesr_3}
\end{figure}

\begin{figure}[!htbp]
\centering
\includegraphics[width=\textwidth]{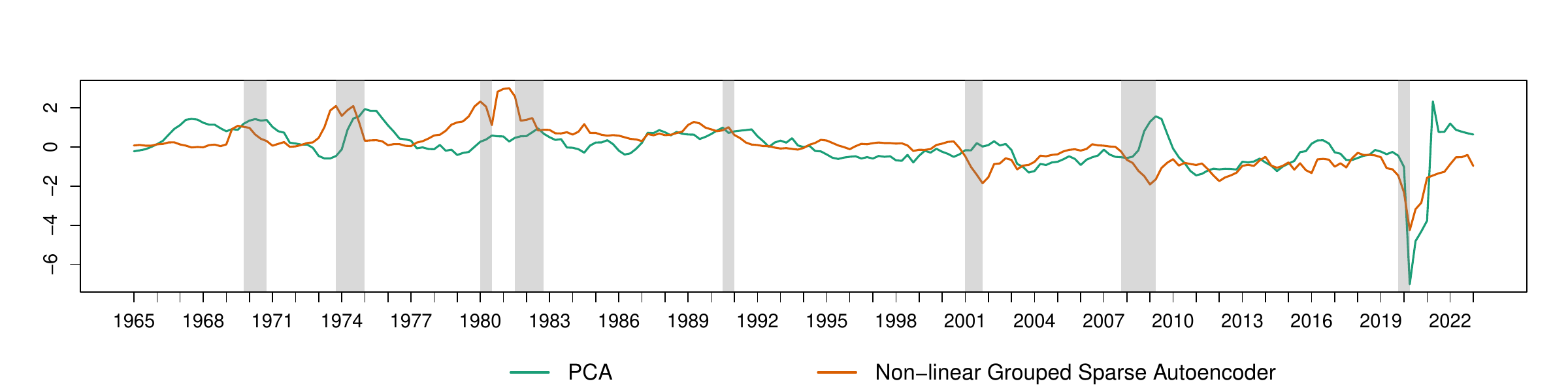}
\vskip 0.5cm
\includegraphics[width=0.8\textwidth]{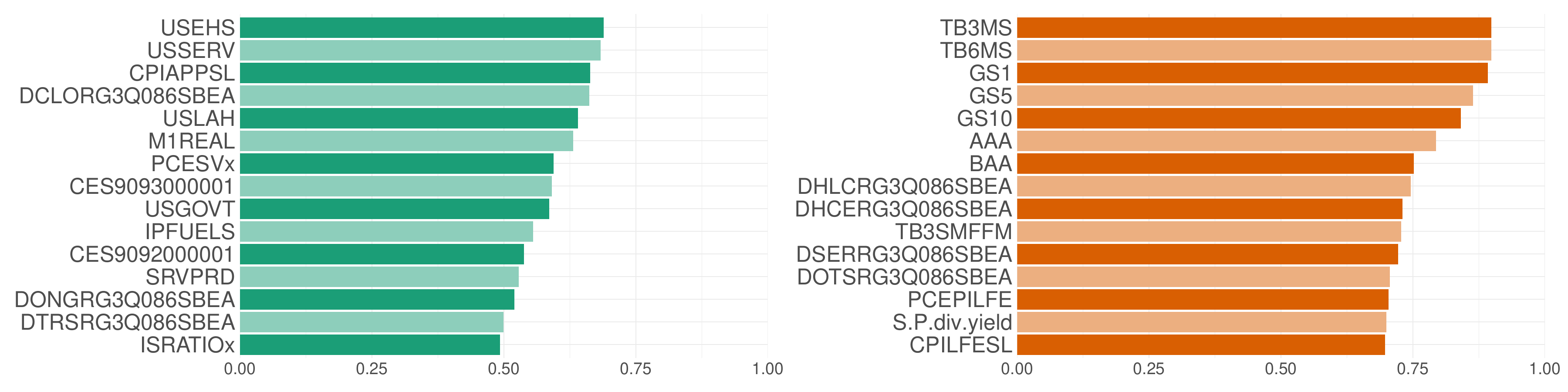}
\vskip\floatsep
\caption{The fourth factor extracted from the PCA and non-linear GS autoencoder (top panel), and variables with the 15 highest correlation magnitudes with the corresponding factors (bottom panel). The time series are standardized to have zero mean and variance one. The grey bands highlight the recession periods.}
\label{fig:factor_ts_nonlinesr_4}
\end{figure}

The fourth factor extracted from the GS autoencoder clearly represents interest rates, as it captures the major monetary decisions of the Federal Reserve. This factor is strongly correlated with short and long-term interest rates as well as the price variables closely monitored by the Federal Reserve. Unlike the concentration of the GS autoencoder factor, the PCA factor exhibits broad correlations across labor market and price variables (similar to its second and third factors), making its economic interpretation less clear.

Analysis of the fifth factors suggests that they had a similar trend before 1995 and then turned out to be negatively correlated. The two factors represent different effects, as almost half of the variables corresponding to the PCA are about housing, while the non-linear factor is the only factor that reconstructs the money and credit variables, so it emphasizes more on them and those variables known to be related to this category, such as government and corporate yields and prices. 

\begin{figure}[!htbp]
\centering
\includegraphics[width=\textwidth]{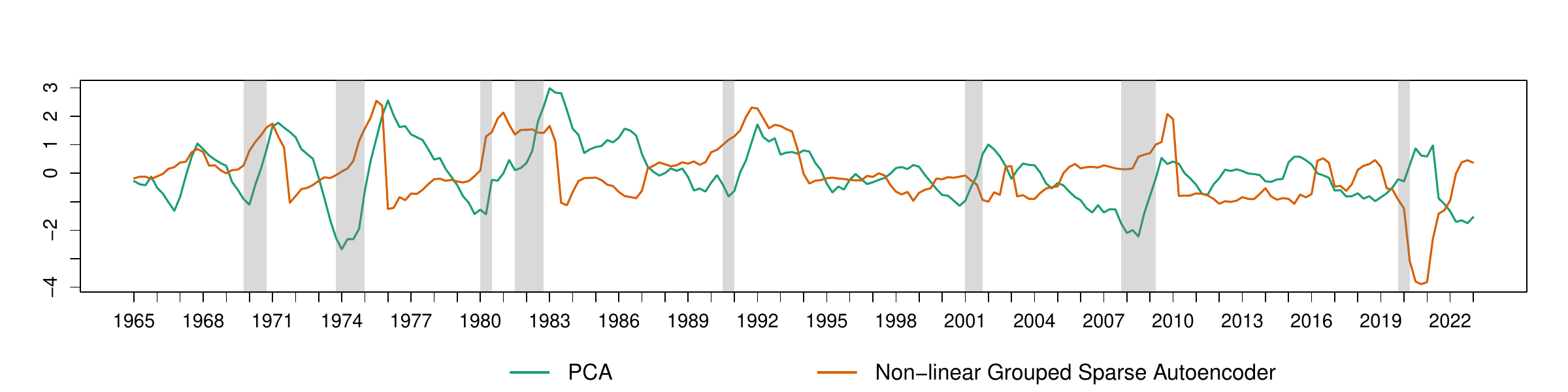}
\vskip 0.5cm
\includegraphics[width=0.8\textwidth]{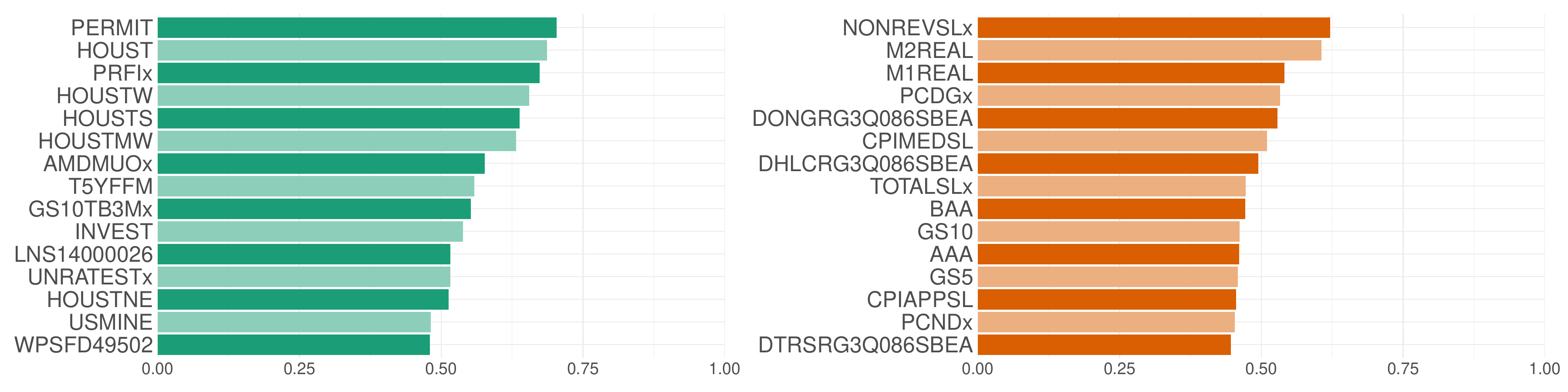}
\vskip\floatsep
\caption{The fifth factor extracted from the PCA and non-linear GS autoencoder (top panel), and variables with the 15 highest correlation magnitudes with the corresponding factors (bottom panel). The time series are standardized to have zero mean and variance one. The grey bands highlight the recession periods.}
\label{fig:factor_ts_nonlinesr_5}
\end{figure}

\begin{figure}[!htbp]
     \centering
     \begin{subfigure}[b]{0.48\textwidth}
         \centering
         \includegraphics[width=\textwidth]{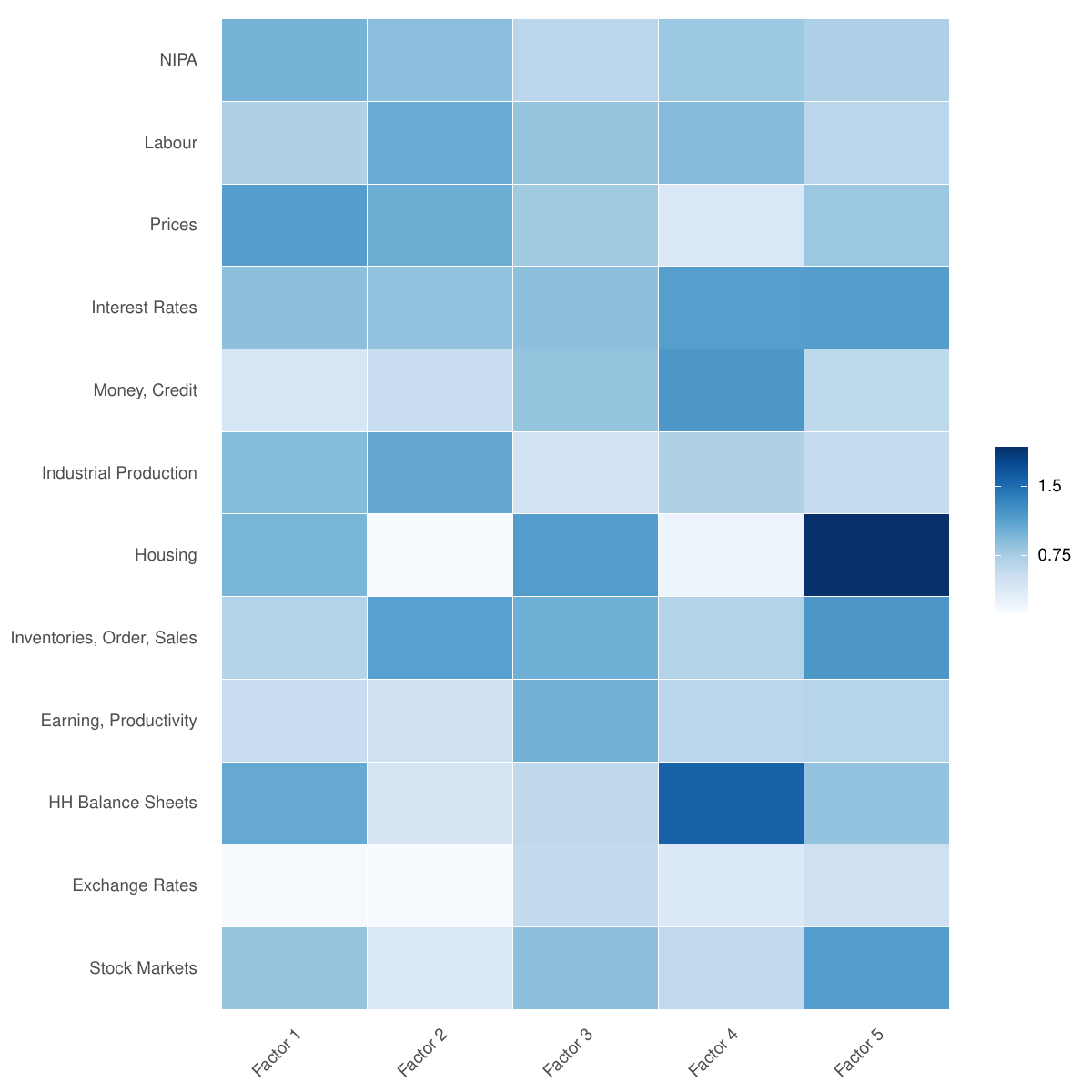}
         \caption{PCA}
         \label{fig: PCA factor importance}
     \end{subfigure}
     \hfill
     \begin{subfigure}[b]{0.48\textwidth}
         \centering
         \includegraphics[width=\textwidth]{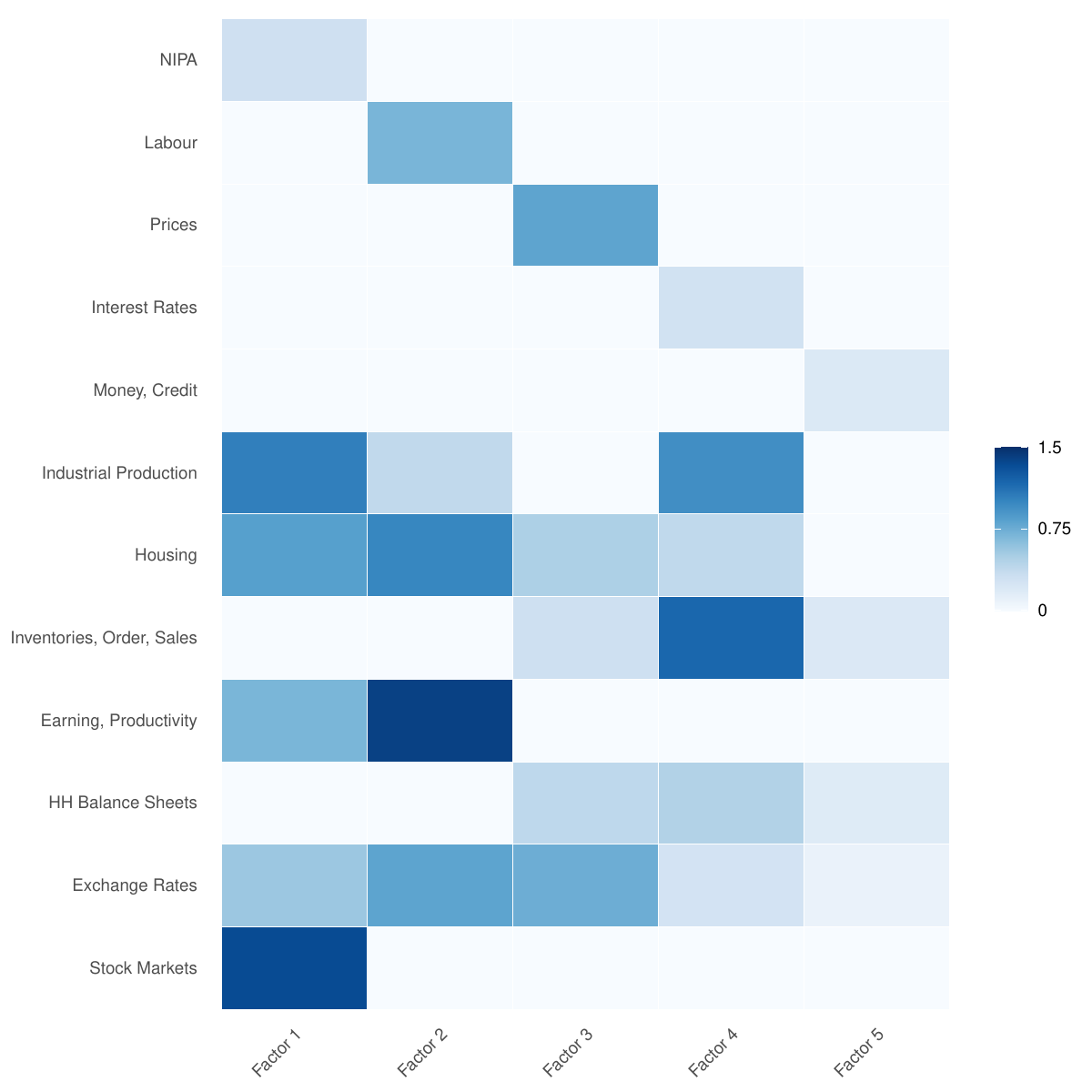}
         \caption{Non-linear Grouped sparse autoencoder}
         \label{fig: nonlinear grouped sparse ae}
     \end{subfigure}
        \caption{Importance of factors to different categories. "HH Balance Sheets" means houseshold balance sheets. Factors in the left panel is re-ordered so that each factor has a high correlation with the corresponding one in the right panel.}
        \label{fig: factor importance PCA and nonlinear grouped sparse ae}
\end{figure}

While Figures \ref{fig:factor_ts_nonlinesr_1} - \ref{fig:factor_ts_nonlinesr_5} demonstrate that GS autoencoder factors exhibit stronger group-specific correlations than PCA factors, some factors (particularly the first and second ones) from both methods are highly correlated. However, despite these similarities, the GS autoencoder provides more interpretable relationships between these factors and the high-dimensional data. Figure \ref{fig: factor importance PCA and nonlinear grouped sparse ae} depicts the importance of factors to data categories. The importance measure on the left panel is the averaged PCA loading over these categories, and the right panel uses $\boldsymbol{B}$, the SSL parameters. Overall, the GS autoencoder heat map is sparser than the PCA one, indicating better interpretability from a parsimonious structure. For PCA, the fourth and fifth factors can be primarily attributed to household balance sheets and housing, respectively, but identifying the main drivers of the first three factors is more challenging due to the comparable scales of their importance measures. In contrast, we do not have this issue in the right panel since the first five categories are anchor groups. Thus, we can name these factors according to their anchor groups. For instance, the first factor is called the NIPA factor. There is also a sparser structure among the non-anchor groups in the right panel, so it is easier to determine the factors that reconstruct these categories. For example, industrial production variables are mainly driven by the NIPA and interest rate factors; the labor market factor is the driving force for reconstructing earning and productivity variables.

\FloatBarrier

\subsection{Forecasting Performance}
\label{sec:Forecasting Performance}

We compare the forecasting performance of the FAVARs with 4 dimension reduction methods and 2 VAR specifications: time-invariant (TIV) and time-varying parameters (TVP). The inclusion of the standard autoencoder demonstrates the potential degradation of forecasting power due to non-identifiable factors, and the linear GS autoencoder serves as an approximation of models considered in \cite{belviso2006structural} and \cite{korobilis2013assessing}.

We use the expanding window procedure to make forecasts. In particular, we first fit a factor extraction model and the VAR model with the data from 1965:Q1 to 1983:Q4, then conduct the 1- to 4-step-ahead point and density forecasts in 1984. We repeat this procedure by adding one more data point to the training set each time until getting the 1-step-ahead forecasts in 2023:Q1.

Table \ref{tab:forecasting} presents the forecasting performance of different combinations of dimension reduction methods and model specifications. We use mean absolute error (MAE) and averaged log predictive likelihood (ALPL) to assess the point and density forecasts. We take the TIV-PCA as the benchmark model,
with its performance highlighted in grey, and all other evaluations are relative to the benchmark ones. The relative MAE is the ratio between the MAE of a model and the benchmark, so a value smaller than 1 indicates the superior point forecasting compared to the benchmark. Similarly, the relative ALPL is the difference between the ALPL of the model and that of the benchmark, so a value greater than 0 means the model is better in density forecasting.

\begin{table}[!htbp]
\footnotesize
\centering
\begin{tabular}{lcccclcccc}
\hline

Forecast   metric       & \multicolumn{1}{l}{MAE}       & \multicolumn{1}{l}{}          & \multicolumn{1}{l}{}          & \multicolumn{1}{l}{}          &  & \multicolumn{1}{l}{ALPL}       & \multicolumn{1}{l}{}           & \multicolumn{1}{l}{}           & \multicolumn{1}{l}{}           \\
\cline{2-5} \cline{7-10}
                        & \multicolumn{1}{l}{h=1}       & \multicolumn{1}{l}{h=2}       & \multicolumn{1}{l}{h=3}       & \multicolumn{1}{l}{h=4}       &  & \multicolumn{1}{l}{h=1}        & \multicolumn{1}{l}{h=2}        & \multicolumn{1}{l}{h=3}        & \multicolumn{1}{l}{h=4}        \\
                        \hline \rule{0pt}{1\normalbaselineskip}
                        & \multicolumn{4}{l}{GDPDEF}    &  & \multicolumn{1}{l}{}           & \multicolumn{1}{l}{}           & \multicolumn{1}{l}{}           & \multicolumn{1}{l}{}           \\
                        
TIV-PCA                 & \cellcolor[HTML]{D0D0D0}0.121 & \cellcolor[HTML]{D0D0D0}0.224 & \cellcolor[HTML]{D0D0D0}0.327 & \cellcolor[HTML]{D0D0D0}0.422 &  & \cellcolor[HTML]{D0D0D0}0.431  & \cellcolor[HTML]{D0D0D0}-0.489 & \cellcolor[HTML]{D0D0D0}-1.122 & \cellcolor[HTML]{D0D0D0}-1.992 \\
TVP-PCA                 & 0.821                         & 0.753                         & 0.716                         & 0.708                         &  & 0.200                          & 0.688                          & 1.044                          & 1.658                          \\
TIV-AE                  & 1.069                         & 1.093                         & 1.081                         & 1.064                         &  & -0.099                         & -0.110                         & -0.284                         & -0.254                         \\
TVP-AE                  & 0.859                         & 0.802                         & 0.759                         & 0.745                         &  & 0.176                          & 0.645                          & 0.999                          & 1.638                          \\
TIV-Linear GS AE    & 0.941                         & 0.973                         & 0.980                         & 1.013                         &  & 0.037                          & 0.105                          & 0.174                          & 0.669                          \\
TVP-Linear GS AE    & 0.795                         & 0.739                         & 0.704                         & 0.706                         &  & \textbf{0.218}                 & 0.709                          & 1.077                          & \textbf{1.701}                 \\
TIV-Nonlinear GS AE & 0.945                         & 0.972                         & 0.972                         & 0.998                         &  & 0.059                          & 0.122                          & 0.265                          & 0.436                          \\
TVP-Nonlinear GS AE & \textbf{0.785}                & \textbf{0.729}                & \textbf{0.691}                & \textbf{0.694}                &  & \textbf{0.218}                 & \textbf{0.710}                 & \textbf{1.079}                 & \textbf{1.701}                 \vspace*{-2mm} \\
                        & \multicolumn{1}{l}{}          & \multicolumn{1}{l}{}          & \multicolumn{1}{l}{}          & \multicolumn{1}{l}{}          &  & \multicolumn{1}{l}{}           & \multicolumn{1}{l}{}           & \multicolumn{1}{l}{}           & \multicolumn{1}{l}{}           \\
                        & \multicolumn{4}{l}{UNRATE}    &  & \multicolumn{1}{l}{}           & \multicolumn{1}{l}{}           & \multicolumn{1}{l}{}           & \multicolumn{1}{l}{}           \\
TIV-PCA                 & \cellcolor[HTML]{D0D0D0}0.175 & \cellcolor[HTML]{D0D0D0}0.280 & \cellcolor[HTML]{D0D0D0}0.375 & \cellcolor[HTML]{D0D0D0}0.461 &  & \cellcolor[HTML]{D0D0D0}-3.279 & \cellcolor[HTML]{D0D0D0}-5.616 & \cellcolor[HTML]{D0D0D0}-6.383 & \cellcolor[HTML]{D0D0D0}-7.090 \\
TVP-PCA                 & 0.811                         & 0.859                         & 0.846                         & 0.865                         &  & 3.501                          & \textbf{5.561}                 & \textbf{5.963}                 & \textbf{6.385}                 \\
TIV-AE                  & 0.949                         & 0.938                         & 0.943                         & 0.960                         &  & -0.211                         & 0.417                          & 1.317                          & 1.044                          \\
TVP-AE                  & \textbf{0.807}                & 0.866                         & 0.849                         & 0.868                         &  & \textbf{3.716}                 & 5.511                          & 5.945                          & 6.371                          \\
TIV-Linear GS AE    & 1.031                         & 1.008                         & 0.983                         & 0.958                         &  & 0.091                          & 0.349                          & 0.133                          & 0.947                          \\
TVP-Linear GS AE    & 0.819                         & 0.855                         & 0.847                         & 0.864                         &  & 3.632                          & 5.292                          & 5.895                          & 6.327                          \\
TIV-Nonlinear GS AE & 0.973                         & 0.961                         & 0.954                         & 0.924                         &  & 0.433                          & 0.635                          & 0.763                          & 0.808                          \\
TVP-Nonlinear GS AE & 0.817                         & \textbf{0.853}                & \textbf{0.840}                & \textbf{0.856}                &  & 3.645                          & 5.486                          & 5.874                          & 6.335                         \vspace*{-2mm}  \\
                        & \multicolumn{1}{l}{}          & \multicolumn{1}{l}{}          & \multicolumn{1}{l}{}          & \multicolumn{1}{l}{}          &  & \multicolumn{1}{l}{}           & \multicolumn{1}{l}{}           & \multicolumn{1}{l}{}           & \multicolumn{1}{l}{}           \\
                        & \multicolumn{4}{l}{FEDFUNDS}   &  & \multicolumn{1}{l}{}           & \multicolumn{1}{l}{}           & \multicolumn{1}{l}{}           & \multicolumn{1}{l}{}           \\
TIV-PCA                 & \cellcolor[HTML]{D0D0D0}0.125 & \cellcolor[HTML]{D0D0D0}0.228 & \cellcolor[HTML]{D0D0D0}0.313 & \cellcolor[HTML]{D0D0D0}0.379 &  & \cellcolor[HTML]{D0D0D0}0.195  & \cellcolor[HTML]{D0D0D0}-0.142 & \cellcolor[HTML]{D0D0D0}-0.434 & \cellcolor[HTML]{D0D0D0}-0.655 \\
TVP-PCA                 & 0.623                         & \textbf{0.659}                & \textbf{0.680}                & 0.722                         &  & 0.265                          & 0.118                          & 0.098                          & 0.084                          \\
TIV-AE                  & 1.015                         & 1.056                         & 1.066                         & 1.069                         &  & -0.066                         & -0.094                         & -0.075                         & -0.061                         \\
TVP-AE                  & 0.667                         & 0.738                         & 0.760                         & 0.791                         &  & 0.243                          & 0.084                          & 0.066                          & 0.060                          \\
TIV-Linear GS AE    & 0.831                         & 0.855                         & 0.865                         & 0.859                         &  & -0.008                         & 0.046                          & 0.113                          & \textbf{0.155}                 \\
TVP-Linear GS AE    & 0.621                         & 0.679                         & 0.703                         & 0.737                         &  & \textbf{0.288}                 & 0.139                          & 0.119                          & 0.112                          \\
TIV-Nonlinear GS AE & 0.894                         & 0.896                         & 0.910                         & 0.911                         &  & -0.020                         & 0.032                          & 0.083                          & 0.117                          \\
TVP-Nonlinear GS AE & \textbf{0.617}                & 0.670                         & 0.692                         & \textbf{0.717} & &0.285	&\textbf{0.143}	&\textbf{0.121}	&0.113\\ \hline
\end{tabular}
\caption{Point and density forecasting performance evaluated by the MAE and ALPL. Values highlighted in grey present the actual MAE and ALPL of the benchmark model, TIV-PCA, and the rest of the values are relative to those of the benchmark model. AE means autoencoder, and GS means grouped sparse. The best-performed model in each horizon and variable has its evaluation in bold.}
\label{tab:forecasting}
\end{table}

For the point forecasts, 8 out of 12 cases show that the non-linear GS autoencoder has the best performance, and the TVP outperforms the TIV in all the cases. For density forecasting, half of the evaluations indicate the superior performance of the non-linear GS autoencoder with the TVP. Sparsity yields notable improvements, especially in the density forecasts of GDPDEF and FEDFUNDS, with all evaluations for these two variables showing its advantage. The evaluations of UNRATE forecasts reveal a discrepancy between point and density forecasting performance. The non-linear GS autoencoder with heteroskedasticity predominantly outperforms other models as measured by the MAE, whereas the TVP-PCA model excels in density forecasting as indicated by the ALPL.

\begin{figure}[!htbp]
    \centering
    \includegraphics[width=\linewidth]{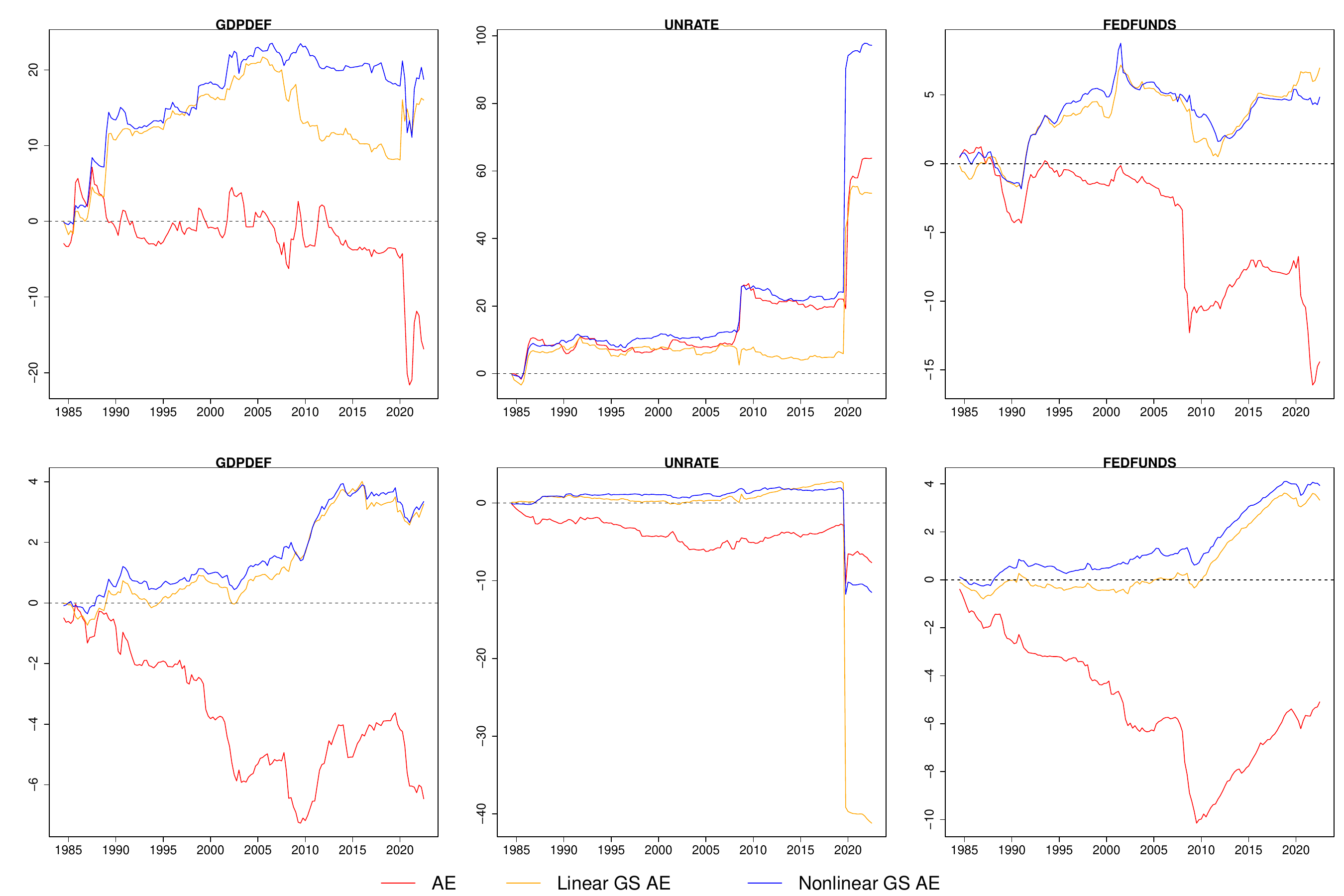}
    \caption{Cumulative ALPL (h=2) of models relative to the TIV-PCA (top) and the TVP-PCA (bottom).}
    \label{fig:cumulative_ALPL_h=2}
\end{figure}

While Table \ref{tab:forecasting} provides an overview of the forecasting performance, Figure \ref{fig:cumulative_ALPL_h=2} gives a more detailed analysis through the cumulative ALPL. This figure depicts the cumulative ALPLs of models with TIV parameters relative to the TIV-PCA and those with TVP relative to the TVP-PCA. A curve above the zero horizontal line means the performance of the associated model is better than its PCA counterpart. The red curves corresponding to the standard autoencoder in most panels are below zero, suggesting the importance of sparsity to the downstream forecasting task. The top panels about the TIV models show that the non-linearity from the autoencoder improves the forecasts of GDPDEF and UNRATE. In particular, the curves corresponding to the linear (orange) and non-linear (blue) GS autoencoders start to deviate around the GFC, due to the superior performance of the non-linear model. The curves about the FEDFUNDS do not exhibit such discrepancy. Moving to the TVP models, the performance of the non-linear GS autoencoder is slightly better than its linear counterpart in forecasting GDPDEF, FEDFUNDS and UNRATE before the COVID-19 pandemic. The less significant discrepancy between the blue and orange curves indicates that the introduction of the TVP structure partially diminishes the contribution of deep learning to the overall model performance. Following the onset of the COVID-19 pandemic, model performance about the UNRATE deteriorates relative to the PCA benchmark, but the non-linear GS autoencoder yields a less pronounced decline compared to its linear counterpart. To conclude, we find that both non-linearities (GS autoencoder and TVP) effectively improve the forecasting performance. 
\subsection{Impulse Response Analysis}
\label{sec:Impulse Response Analysis}

Since the non-linear GS autoencoder is the best model in most forecasting tasks, we explore the impulse responses (IRF) inferred by this model using the whole data set. Firstly, we consider the responses of three variables included in the VAR model: GDPDEF, UNRATE and FEDFUNDS, to an expansionary monetary shock, which is a 100 bps decrease of the FEDFUNDS. Figure \ref{fig:irf_y} presents the evolution of IRFs over time. The medians of IRFs are in the first column, and we select three time points, 1981:Q3, 2000:Q4, 2020:Q1, for the rest columns. These three time points correspond to the representative rate cuts during the chairmanship of Volcker, Greenspan and Powell, respectively, and are separated by an approximately 20-year interval. We exclude the rate cuts during the chairmanship of Bernanke and Yellen in this analysis for two reasons. Firstly, the IRFs during the GFC, are similar to those in 2000:Q4\footnote{our result is consistent with that in \cite{korobilis2013assessing}, that the IRFs are similar during the chairmanship of Greenspan and Bernanke.}, albeit with greater uncertainty. Secondly, no significant rate cut occurred during the tenure of Yellen. Nevertheless, a comprehensive analysis of (IRFs) across various Federal Reserve chairmanships remains valuable for a holistic understanding, thus we include a figure with all regimes from Burns to Powell in Appendix \ref{sec:Additional Results of Real Data Application}.

For GDPDEF, the shape of the IRFs remains consistent over time. These IRFs are typically hump-shaped, starting at zero, reaching a peak at specific horizons, and decaying back to zero. However, we can still find the difference in the transmission of monetary policy from the next three columns. In the 1981 scenario, the IRF reached its peak 13 quarters after introducing the interest rate shock, while the IRFs for the other two periods peaked earlier at 10 quarters post-shock. The monetary policy had a larger impact on the GDPDEF in 1981 than other time periods, as evidenced by a higher peak response and more persistent effects of the shock over time. Although we also observe this resistance in 2000 and 2020, the effect is weaker with more uncertainty.
\begin{figure}[!htbp]
    \centering
    \includegraphics[width=\linewidth]{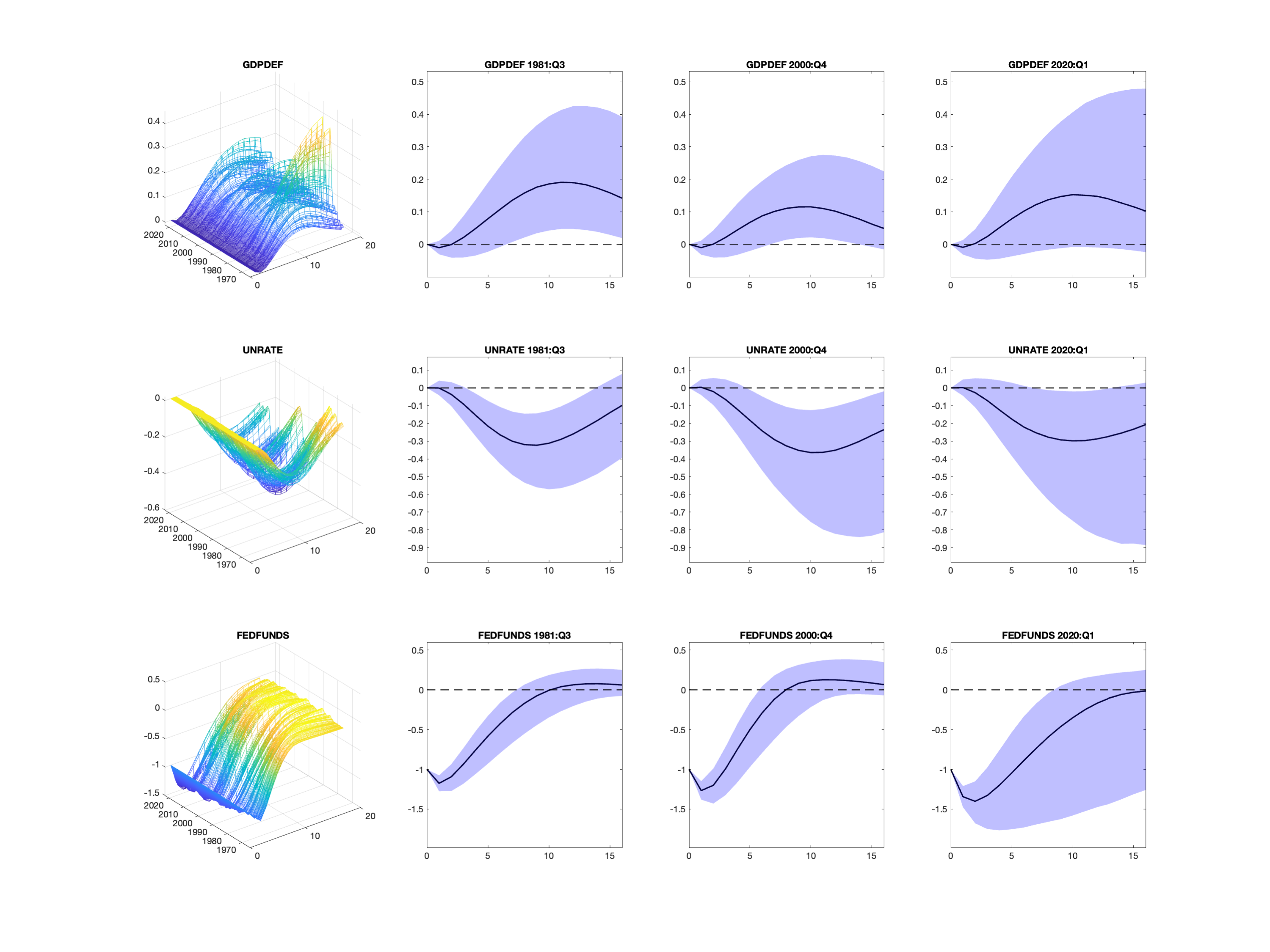}
    \caption{Impulse responses of the VAR variables to a 100 bps decrease in FEDFUNDS. First column shows the medians over time. The rest three columns shows the IRFs with their 68\% credible intervals at 1981:Q3, 2000:Q4 and 2020:Q1, respectively.}
    \label{fig:irf_y}
\end{figure}

The middle left panel in Figure \ref{fig:irf_y} reveals time variation in the IRFS of the UNRATE. During the recession periods, the UNRATE responded moderately, characterized by shallower troughs. Examining the three selected time points, we find that the IRFs in 1981 and 2000 gave similar patterns, with the latter showing more uncertainty. In contrast, the IRF for 2020 was not statistically significant from zero for most of the post-shock period.

Regarding the FEDFUNDS, the median IRFs gradually evolved in shape, progressing from a subtle to a more pronounced inverted hump-shaped pattern. The rate at which the responses decayed to zero is notably reduced. While the IRF in 1981 and 2000 crossed the zero line after approximately 8 quarters, the IRF for 2020 required 16 quarters to reach zero. An additional distinction between this IRF and the previous two is the higher uncertainty.

\begin{figure}[!htbp]
    \centering
    \includegraphics[width=\linewidth]{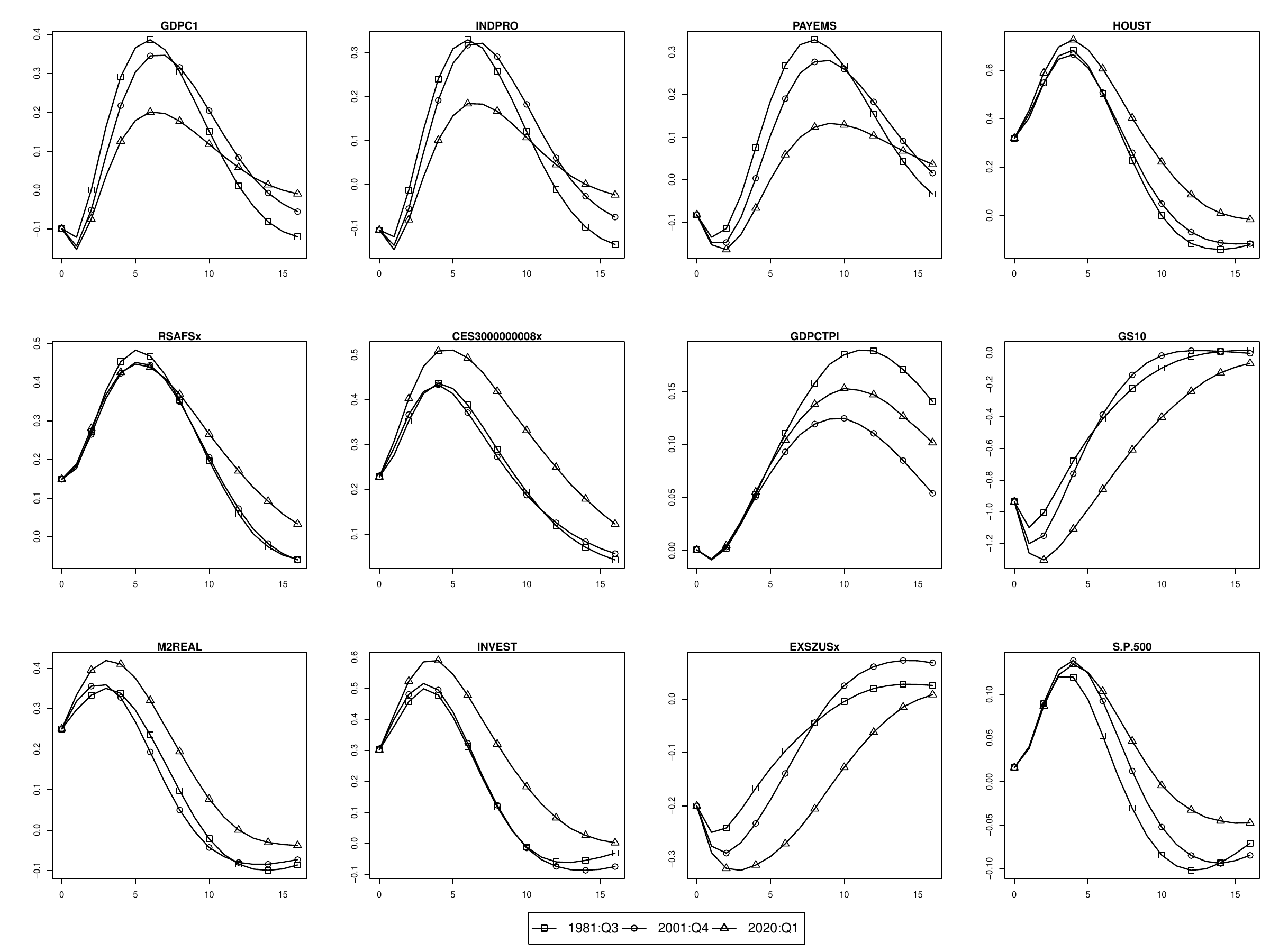}
    \caption{Impulse response medians of selected variables to a 100 bps decrease in FEDFUNDS at 1981:Q3, 2000:Q4 and 2020:Q1. }
    \label{fig:irf_x}
\end{figure}

Figure \ref{fig:irf_x} depicts the IRFs of a selection of variables at the three time points considered. Overall, the shapes and signs are consistent with results in \cite{christiano2005nominal} and \cite{bernanke2005measuring}. Unlike the finding in \cite{korobilis2013assessing} that some IRFs are time-varying and others are not, all IRFs show a certain degree of time variation in our case. A possible explanation is that we study a data set with longer time period and the choices of data points are very different, see \cite{korobilis2013assessing} for more details. Most IRFs are similar in 1980 and 2000, yet manifest a divergence in 2020. For example, the peaks of IRFs of the GDP, industrial production (INDPRO) and non-farm payroll employment (PAYEMS) were lower in 2020, compared to the ones for the other two time points, but the 2020 IRFs surpassed the others 16 quarters after the shock. The exception is the price index GDPCPTI, of which the IRFs had overlapping trajectories in the first 4 quarters, followed by divergent paths in the subsequent quarters.


\section{Conclusion and Discussion}
\label{sec:Conclusion and Discussion}

In this paper, we extend the FAVAR with an autoencoder by proposing a more interpretable variant called the Grouped Sparse autoencoder. Our model identifies factors up to element-wise transformation, and we exploit different transformation properties to select the activation function. To apply non-linearity to both the factor extraction and VAR parts of the FAVAR, we also adopt the TVP-VAR to model a time-varying evolution of factors. The empirical results suggest the model proposed has better interpretability and forecasting performance than the FAVARs with either a linear dimension reduction method or a TIV-VAR. This model also captures time variation in the variable responses to the monetary policy shocks. 

The current FAVAR framework can be extended in several directions. One direction is to relax the full column rank condition for the injective decoder by exploring alternative invertible or injective decoder architectures. Another enhancement would be replacing the TVP-VAR with neural networks to potentially achieve better expressiveness. Developing a one-step procedure for simultaneous factor extraction and parameter estimation across the entire FAVAR system presents another interesting direction. From an application perspective, exploring asymmetric impulse responses in the high-dimensional data merits investigation, as the current framework still assumes symmetric responses from the high-dimensional data to expansionary and recessionary shocks.





\appendix

\section{Further Description regarding Table \ref{tab:property}}\label{sec:property}

We elaborate the connection between the properties and their corresponding examples in this section. Starting from the case when the decoder is invertible, the closed form of the transformation is available in the proof of Theorem 1. The first example is straightforward, readers can refer to the literature about normalizing flow for more details. The second example is less obvious since normally a multilayer perceptron (MLP) without activation function is not invertible, but it is in the GS autoencoder. That is because we can rewrite this linear transformation to $\boldsymbol{x}_{t,i_k}=a_{i_k}\boldsymbol{f}_{t,k}+b_{i_k}$ for the $i_k$-th variable that is from the anchor group of the $k$-th factor.

For the injective decoders, we prove that those three conditions yield injective neural networks. \begin{proof}
To prove the decoder, $\tilde{g}^d_{i_k,\boldsymbol{\theta}}$, is injective, it is sufficient to prove that the function from one hidden layer to another is injective. Since the activation function is injective (the first condition), we can prove the above statement if and only if the corresponding functions without activation function, i.e. the linear transformations, are injective. 

Let these functions split to three parts: (1) from $\boldsymbol{f}_t$ to $\boldsymbol{h}^d_{t,1}$, (2) from $\boldsymbol{h}^d_{t,l}$ to $\boldsymbol{h}^d_{t,l+1}$, for $l=1,\dots, L-2$, and (3) from $\boldsymbol{h}^d_{t,L-1}$ to $\boldsymbol{h}^d_{t,L}=\boldsymbol{x}_{t,i_k}$. For the first part, we simplify it to: from $f_{t,k}$ to $\boldsymbol{h}^d_{t,1}$, since the $i_k$-th variable is from the anchor group, so the weight in this function changes from $\boldsymbol{W}^d_{1,i_k}\in \mathbb{R}^{D^d_1\times K}$ to its $k$-th column, i.e a $D^d_1$-by-1 matrix, which is a full column rank matrix. This property of the weights extend to the second part due to the second condition. Finally, if we modify the weight in the third part according to the third condition, then this weight has full column rank as well. \cite{strang2022introduction} showed that the linear transformation is injective if the corresponding matrix is a full column rank one. Thus, we can prove that the neural networks satisfying the three conditions are injective.
\end{proof}
Since the decoder is injective, we can always find a one-to-one mapping from $\boldsymbol{f}^*_t$ to $\boldsymbol{f}_t$ given the decoder parameters in the two distinct GS autoencoder. 

Next, we demonstrate that non-injective decoders, such as those with the ReLU, imply that the transformation, $h_{k}(\cdot)$ is one-to-many. For any $k\in\{1,\dots,K\}$ and $t\in\{1,\dots, T\}$, suppose $f_{t,k}=1$ or $2$ and $f^*_{t,k}=3$ result in the same reconstruction at time $t$. Assume that $\boldsymbol{\theta}$ and $\boldsymbol{\theta}^*$ are two sets of parameters of the decoders in these two GS autoencoder, respectively. For the first GS autoencoder, we may learn the parameters in the encoder as ${\boldsymbol{\phi}}$ that produces $f_{t,k}=1$ or $\hat{\boldsymbol{\phi}}$ that produces $f_{t,k}=2$. Without considering the properties and parameters in the encoder, we can only get a one-to-many transformation, $h(\cdot)$, that $h(3)=1$ or 2.

\section{Estimation Details}
\label{sec:Estimation Details}
\subsection{Derivation of the ELBO}
\label{sec:Derivation of the ELBO}
\begin{align}
    \log p\left(\boldsymbol{x}_{1:T}\right) &= \log \int\int p\left(\boldsymbol{x}_{1:T}\mid \boldsymbol{B}\right)p\left(\boldsymbol{B}\mid \boldsymbol{\Gamma}\right)p\left(\boldsymbol{\Gamma}\right)\,d\boldsymbol{\Gamma}d\boldsymbol{B} \nonumber \\
    &=\log \int p\left(\boldsymbol{x}_{1:T}\mid \boldsymbol{B}\right) \int p\left(\boldsymbol{B}\mid \boldsymbol{\Gamma}\right)p\left(\boldsymbol{\Gamma}\right)\,d\boldsymbol{\Gamma}d\boldsymbol{B} \nonumber \\
    &=\log \int p\left(\boldsymbol{x}_{1:T}\mid \boldsymbol{B}\right) \mathbb{E}_{p\left(\boldsymbol{\Gamma}\mid\boldsymbol{B}\right)}\left[\frac{p\left(\boldsymbol{B}\mid \boldsymbol{\Gamma}\right)p\left(\boldsymbol{\Gamma}\right)}{p\left(\boldsymbol{\Gamma}\mid\boldsymbol{B}\right)}\right]d\boldsymbol{B} \nonumber \\
    &=\log \mathbb{E}_{q\left(\boldsymbol{B}\mid \boldsymbol{x}_{1:T}\right)}\left[p\left(\boldsymbol{x}_{1:T}\mid \boldsymbol{B}\right) \mathbb{E}_{p\left(\boldsymbol{\Gamma}\mid\boldsymbol{B}\right)}\left[\frac{p\left(\boldsymbol{B}\mid \boldsymbol{\Gamma}\right)p\left(\boldsymbol{\Gamma}\right)}{p\left(\boldsymbol{\Gamma}\mid\boldsymbol{B}\right)}\right]\right] \nonumber \\
    &\geq \mathbb{E}_{q\left(\boldsymbol{B}\mid \boldsymbol{x}_{1:T}\right)}\left[\log p\left(\boldsymbol{x}_{1:T}\mid \boldsymbol{B}\right)+\log \mathbb{E}_{p\left(\boldsymbol{\Gamma}\mid\boldsymbol{B}\right)}\left[\frac{p\left(\boldsymbol{B}\mid \boldsymbol{\Gamma}\right)p\left(\boldsymbol{\Gamma}\right)}{p\left(\boldsymbol{\Gamma}\mid\boldsymbol{B}\right)}\right]\right] \nonumber \\
    &\geq \mathbb{E}_{q\left(\boldsymbol{B}\mid \boldsymbol{x}_{1:T}\right)}\left[\log p\left(\boldsymbol{x}_{1:T}\mid \boldsymbol{B}\right) + \mathbb{E}_{p\left(\boldsymbol{\Gamma}\mid\boldsymbol{B}\right)}\left[\log \frac{p\left(\boldsymbol{B}\mid \boldsymbol{\Gamma}\right)p\left(\boldsymbol{\Gamma}\right)}{p\left(\boldsymbol{\Gamma}\mid\boldsymbol{B}\right)}\right]\right] \nonumber \\ 
    & = \underbrace{\mathbb{E}_{q\left(\boldsymbol{B}\mid \boldsymbol{x}_{1:T}\right)}\left[\log p\left(\boldsymbol{x}_{1:T}\mid \boldsymbol{B}\right)\right]}_{\text{Part 1}}  + \underbrace{\mathbb{E}_{q\left(\boldsymbol{B}\mid \boldsymbol{x}_{1:T}\right)}\left[ \mathbb{E}_{p\left(\boldsymbol{\Gamma}\mid\boldsymbol{B}\right)}\left[\log \frac{p\left(\boldsymbol{B}\mid \boldsymbol{\Gamma}\right)p\left(\boldsymbol{\Gamma}\right)}{p\left(\boldsymbol{\Gamma}\mid\boldsymbol{B}\right)}\right]\right]}_{\text{Part 2}}, \label{equ: ELBO_derived}
\end{align}
where the inequality is because logarithm is concave. In the $m$-th update of parameters,  $q\left(\boldsymbol{B}\mid \boldsymbol{x}_{1:T}\right) = 1$ when $\boldsymbol{B}=\boldsymbol{B}^{(m)}$ and $0$ otherwise. 

To simplify the notation, we discard the superscript $(m)$ when we expand \labelcref{equ: ELBO_derived}. Since we assume $\boldsymbol{x}_{1:T}$ is i.i.d across time and variables, the first part is:
\begin{align*}
    \mathbb{E}_{q\left(\boldsymbol{B}\mid \boldsymbol{x}_{1:T}\right)}\left[\log p\left(\boldsymbol{x}_{1:T}\mid \boldsymbol{B}\right)\right]=-\frac{1}{2}\sum_{t=1}^T\sum_{i=1}^N \left[\left(\boldsymbol{x}_{t,i}-\hat{\boldsymbol{x}}_{t,i}\right)^2-\log 2\pi\right].
\end{align*}
Denote $p_{c,k}=\mathbb{E}[\gamma_{c,k}\mid \beta_{c,k}]=\frac{\psi_1\left(\beta_{c,k}\right)}{\psi_0\left(\beta_{c,k}\right)+\psi_1\left(\beta_{c,k}\right)}$, then we can write the second part as:
\begin{align*}
&\sum_{c=1}^C\sum_{k=1}^K\mathbb{E}_{p\left(\gamma_{c,k}\mid\beta_{c,k}\right)}\left[\log \frac{p\left(\beta_{c,k}\mid \gamma_{c,k}\right)p\left(\gamma_{c,k}\right)}{p\left(\gamma_{c,k}\mid\beta_{c,k}\right)}\right]\\
&=\sum_{c=1}^C\sum_{k=1}^K p_{c,k}\left(\log \psi_1\left(\beta_{c,k}\right)-\log p_{c,k}\right)+\left(1-p_{c,k}\right) \left(\log \psi_0\left(\beta_{c,k}\right)-\log (1-p_{c,k})\right)-\log2.
\end{align*}
Take out the constant terms and scale two parts by $T$ and $N$, we can get the objective function in \labelcref{equ:ELBO}.


\subsection{Bayesian Inference of Factor Model}

Given the factor model in \labelcref{equ:FAVAR} and the priors in Section \ref{sec:Bayesian Inference}, the full conditional of $\boldsymbol{\Lambda}_i$, the $i$-th row of $\Lambda$ for $i=1,\dots, N$, is $\mathcal{N}\left(\overline{m}_i, \overline{V}_i\right)$, with
\begin{align}
\overline{V}^{-1}_i&=\frac{1}{4}\boldsymbol{I}+\boldsymbol{\Sigma}^{-1}_{i,i}\left(\boldsymbol{F}, \boldsymbol{Y}\right)^\prime\left(\boldsymbol{F}, \boldsymbol{Y}\right),\\
\overline{m}_i& = \boldsymbol{\Sigma}^{-1}_{i,i}\overline{V}_i\left(\boldsymbol{F}, \boldsymbol{Y}\right)^\prime\boldsymbol{X},
\end{align}
where $\boldsymbol{Y}=\left(\boldsymbol{y}_1,\dots, \boldsymbol{y}_T\right)^\prime$ and $\boldsymbol{X}=\left(\boldsymbol{x}_1,\dots, \boldsymbol{x}_T\right)^\prime$.


\subsection{Bayesian Inference of Time-invariant VAR}
\label{sec:Bayesian Inference of Time-invariant VAR}
The coefficient $\boldsymbol{A}_p$, $p=1,\dots,P$, follows a Minnesota-type prior, $\boldsymbol{A}_{p,(i,j)}\sim\mathcal{N}\left(0, V_{p,(i,j)}\right)$, where $V_{p,(i,j)}=
\begin{cases}
    \frac{\xi_1}{p^2}, & \text{if }i=j\\
    \frac{\xi_2}{p^2}\frac{\hat{\sigma}_i}{\hat{\sigma}_j}, &\text{if }i\neq j
    \end{cases}$, where $\hat{\sigma}^2_i$ is the variance estimate of $\boldsymbol{y}_{t,i}$ sequence modelled by an AR(2) process, $\xi_1$ and $\xi_2$ are hyperparameters with prior Gamma$\left(0.01,0.01\right)$. The variance-covariance matrix $\boldsymbol{\Omega}$ follows an non-informative Inverse-Wishart prior, i.e. $\mathcal{IW}\left(0\boldsymbol{I}, 0\right)$.

The vectorization of $\boldsymbol{A}^\prime$, $\boldsymbol{a}$, has the full conditional $\mathcal{N}\left(\overline{m}, \overline{\boldsymbol{V}}\right)$, with
\begin{align}
    \overline{\boldsymbol{V}}^{-1}&=\underline{\boldsymbol{V}}^{-1}+\boldsymbol{Z}^\prime\left(\boldsymbol{I}\otimes \boldsymbol{\Omega}^{-1}\right)\boldsymbol{Z},\\
    \overline{\boldsymbol{m}}&=\overline{\boldsymbol{V}}\left(\boldsymbol{Z}^{-1}\left(\boldsymbol{I}\otimes \boldsymbol{\Omega}^{-1}\right)\text{vec}\left(\boldsymbol{F}\,\boldsymbol{Y}\right)^\prime\right),
\end{align}
where $\boldsymbol{Z}=\left(\boldsymbol{Z}^\prime_1,\dots, \boldsymbol{Z}^\prime_T\right)^\prime$, $\boldsymbol{Z}_t=\boldsymbol{I}_{M+K}\otimes \left(\boldsymbol{f}^\prime_{t-1}, \boldsymbol{y}^\prime_{t-1},\dots, \boldsymbol{f}^\prime_{t-P}, \boldsymbol{y}^\prime_{t-P}\right)$.

We sample $\boldsymbol{\Omega}^{-1}$ from $\mathcal{W}\left(T, \sum_{t=0}^T\left(\boldsymbol{y}_t-\boldsymbol{Z}_t\boldsymbol{a}\right)\left(\boldsymbol{y}_t-\boldsymbol{Z}_t\boldsymbol{a}\right)^\prime\right)$. The inference of $\xi_1$ and $\xi_2$ follows  \cite{giannone2015prior}.

\section{Cross Validation}
\label{sec:Cross Validation}
We split the cross-validation to two parts to determine: 1) the numbers of factors and the number of hidden layers ($L$), activation function, and 2) $\lambda_0$ and $\lambda_1$, respectively. We use 5-fold cross-validation to standard autoencoders to determine the first set of hyperparameters, then apply the same technique to the grouped sparse autoencoder with the first set of hyperparameters determined to find the second set of hyperparameters. This cross-validation allows us to make the standard and grouped sparse autoencoder comparable and save computational time. 

We choose the number of factors from 2 to 5 and $L$ from 2 to 6. The activation function are selected from tanh and LeakyReLU with the multiplier as 0.01 or  ${10}^{-16}$ (to mimic the ReLU but retain the preferable properties in Table \ref{tab:property}). Table \ref{tab:cross validation} records the averaged loss from each model settings, and suggests that the best-performed model is the one with 5 factors, 3 layers and LeakyReLU ($a={10}^{-16}$). Then, we use this architecture to determine the values of $\lambda_0$ and $\lambda_1$, from $\{100, 500, 1000\}$ and $\{1,0.1,0.01,0.001\}$, respectively. Table \ref{tab:cross validation 1} shows that $\lambda_0=1000$ and $\lambda_1=1$ achieves the lowest validation loss. 

\begin{table}[!htbp]
\footnotesize
\centering
\begin{tabular}{ccccc}
\toprule
Tanh & & & & \\
\hline
& K=2                     & K=3                     & K=4                     & K=5                     \\
L=2                                              & {\color[HTML]{212121} 0.531} & {\color[HTML]{212121} 0.432} & {\color[HTML]{212121} 0.387} & {\color[HTML]{212121} 0.309} \\
L=3                                              & {\color[HTML]{212121} 0.510} & {\color[HTML]{212121} 0.429} & {\color[HTML]{212121} 0.364} & {\color[HTML]{212121} 0.300} \\
L=4                                              & {\color[HTML]{212121} 0.536} & {\color[HTML]{212121} 0.403} & {\color[HTML]{212121} 0.331} & {\color[HTML]{212121} 0.283} \\
L=5                                              & {\color[HTML]{212121} 0.501} & {\color[HTML]{212121} 0.380} & {\color[HTML]{212121} 0.341} & {\color[HTML]{212121} 0.278} \\
L=6                                              & {\color[HTML]{212121} 0.494} & {\color[HTML]{212121} 0.396} & {\color[HTML]{212121} 0.394} & {\color[HTML]{212121} 0.353} \\
\midrule
\multicolumn{3}{l}{\cellcolor[HTML]{FFFFFF}Leaky ReLU ($a=0.01$)} & \multicolumn{1}{l}{}         & \multicolumn{1}{l}{}           \\ 
\hline
\rule{0pt}{1\normalbaselineskip}
 & K=2                     & K=3                     & K=4                     & K=5                     \\
L=2                                              & {\color[HTML]{212121} 0.480} & {\color[HTML]{212121} 0.399} & {\color[HTML]{212121} 0.318} & {\color[HTML]{212121} 0.267} \\
L=3                                              & {\color[HTML]{212121} 0.465} & {\color[HTML]{212121} 0.364} & {\color[HTML]{212121} 0.284} & {\color[HTML]{212121} 0.246} \\
L=4                                              & {\color[HTML]{212121} 0.441} & {\color[HTML]{212121} 0.392} & {\color[HTML]{212121} 0.290} & {\color[HTML]{212121} 0.254} \\
L=5                                              & {\color[HTML]{212121} 0.429} & {\color[HTML]{212121} 0.329} & {\color[HTML]{212121} 0.290} & {\color[HTML]{212121} 0.282} \\
L=6                                              & {\color[HTML]{212121} 0.464} & {\color[HTML]{212121} 0.360} & {\color[HTML]{212121} 0.336} & {\color[HTML]{212121} 0.302} \\
\midrule
\multicolumn{3}{l}{\cellcolor[HTML]{FFFFFF}Leaky ReLU ($a=10^{-16}$)} & \multicolumn{1}{l}{}         & \multicolumn{1}{l}{}           \\ 
\hline
\rule{0pt}{1\normalbaselineskip}
  & K=2                     & K=3                     & K=4                     & K=5                     \\
L=2                                              & {\color[HTML]{212121} 0.507} & {\color[HTML]{212121} 0.405} & {\color[HTML]{212121} 0.332} & {\color[HTML]{212121} 0.271} \\
L=3                                              & {\color[HTML]{212121} 0.477} & {\color[HTML]{212121} 0.361} & {\color[HTML]{212121} 0.286} & {\color[HTML]{212121} 0.244} \\
L=4                                              & {\color[HTML]{212121} 0.477} & {\color[HTML]{212121} 0.398} & {\color[HTML]{212121} 0.271} & {\color[HTML]{212121} 0.253} \\
L=5                                              & {\color[HTML]{212121} 0.444} & {\color[HTML]{212121} 0.360} & {\color[HTML]{212121} 0.280} & {\color[HTML]{212121} 0.265} \\
L=6                                              & {\color[HTML]{212121} 0.455} & {\color[HTML]{212121} 0.342} & {\color[HTML]{212121} 0.321} & {\color[HTML]{212121} 0.324}
\\
\bottomrule
\end{tabular}
\caption{Cross validation results from different combinations of activation function and hyperparameters.}
\label{tab:cross validation}
\end{table}

\begin{table}[!htbp]
\footnotesize
\centering
\begin{tabular}{lccc}
\toprule
            & \multicolumn{1}{l}{$\lambda_0$ = 100} & \multicolumn{1}{l}{$\lambda_0$ = 500} & \multicolumn{1}{l}{$\lambda_0$ = 1000} \\
            \hline
            \rule{0pt}{0.3\normalbaselineskip}\\
$\lambda_1$ = 1     & 0.195                             & 0.200                             & 0.187                              \\
$\lambda_1$ = 0.1   & 0.227                             & 0.217                             & 0.209                              \\
$\lambda_1$ = 0.01  & 0.237                             & 0.238                             & 0.230                              \\
$\lambda_1$ = 0.001 & 0.268                             & 0.250                             & 0.246  \\
\bottomrule
\end{tabular}
\caption{Cross validation results from different combination of $\lambda_0$ and $\lambda_1$.}
\label{tab:cross validation 1}
\end{table}

\section{Additional Results of Real Data Application}
\label{sec:Additional Results of Real Data Application}

Figure \ref{fig: factor importance linear grouped sparse ae} shows the importance measures of factors extracted from the linear grouped sparse autoencoder to different categories. The importance measures have the similar extent of sparsity like the one from the non-linear model. The factors can also be named according to the anchor groups. However, the way that factors reconstruct the high-dimensional data is different. For example, the labor market factor is relatively less important than other factors due to low measures in Figure \ref{fig: factor importance linear grouped sparse ae}, but the least important factor changes to the money and credit factor if we consider the non-linear method. Move to different groups, the linear model suggests that main driver of housing is the prices factor, while NIPA and labour market factors are the ones inferred from the non-linear model. Similar difference can also be found in other categories. 

\begin{figure}[!htbp]
    \centering
    \includegraphics[width=0.48\linewidth]{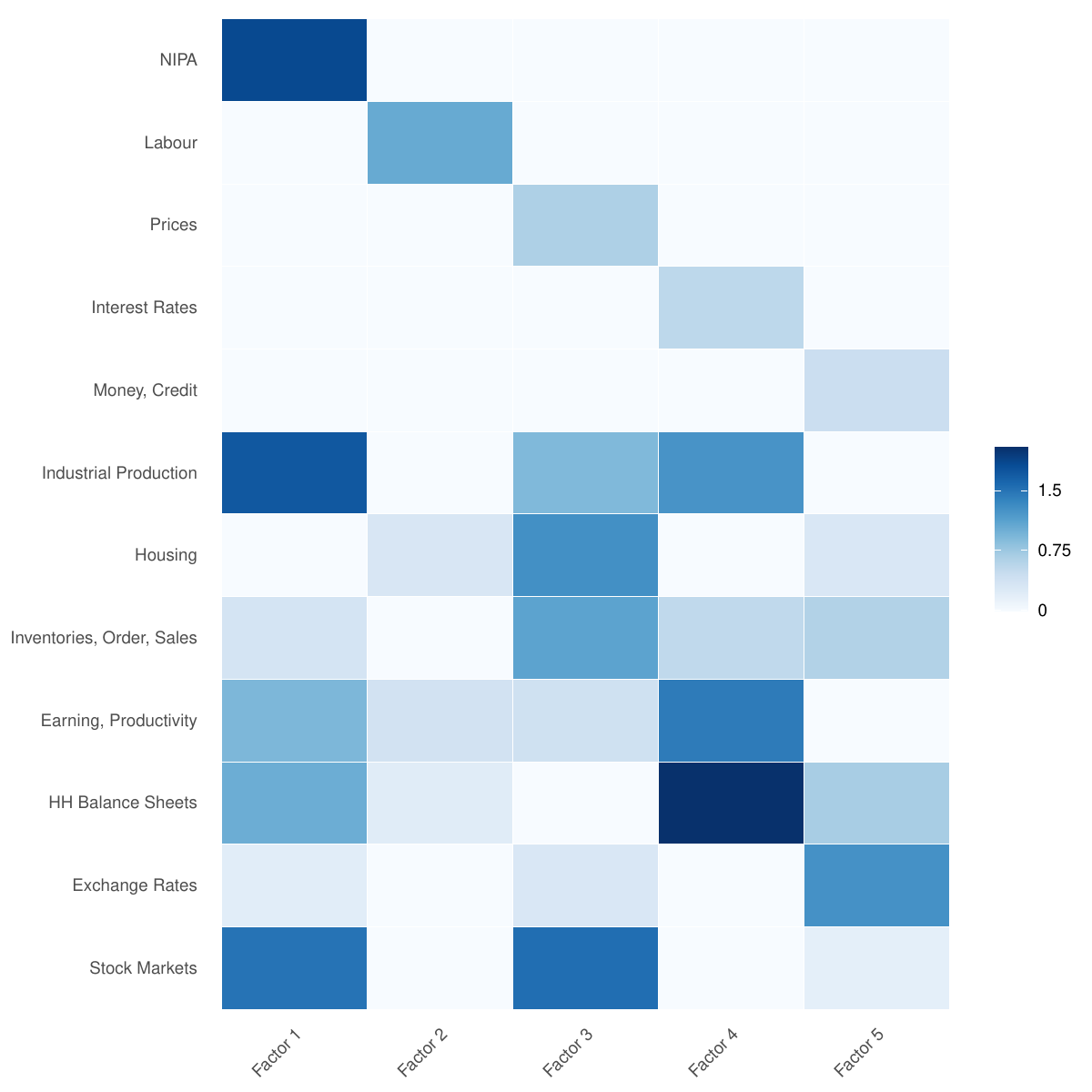}
    \caption{Importance of factors to different categories. "HH Balance Sheets" means houseshold balance sheets. The importance measures corresponds to elements of $\boldsymbol{B}$.}
    \label{fig: factor importance linear grouped sparse ae}
\end{figure}

The first 4 factors from linear and non-linear grouped sparse autoencoder are highly correlated, so the corresponding variables are almost the same up to permutation, but there are a few differences. For the labor market factor, the non-linear factor is a smoother version of the linear one with more weight to the COVID-19 pandemic. The troughs around 1986 and the GFC are lower in the non-linear price factors than the linear one. Unlike the labor market factors, the non-linear interest rates factor fluctuates more than its linear counterpart, with more pronounced downturns during the three most recent recession periods. Similar to the difference in Figure \ref{fig:factor_ts_nonlinesr_5}, the trends of money and credit factors deviate after 1995 from positive to negative correlation. 

\begin{figure}[!htbp]
\centering
\includegraphics[width=\textwidth]{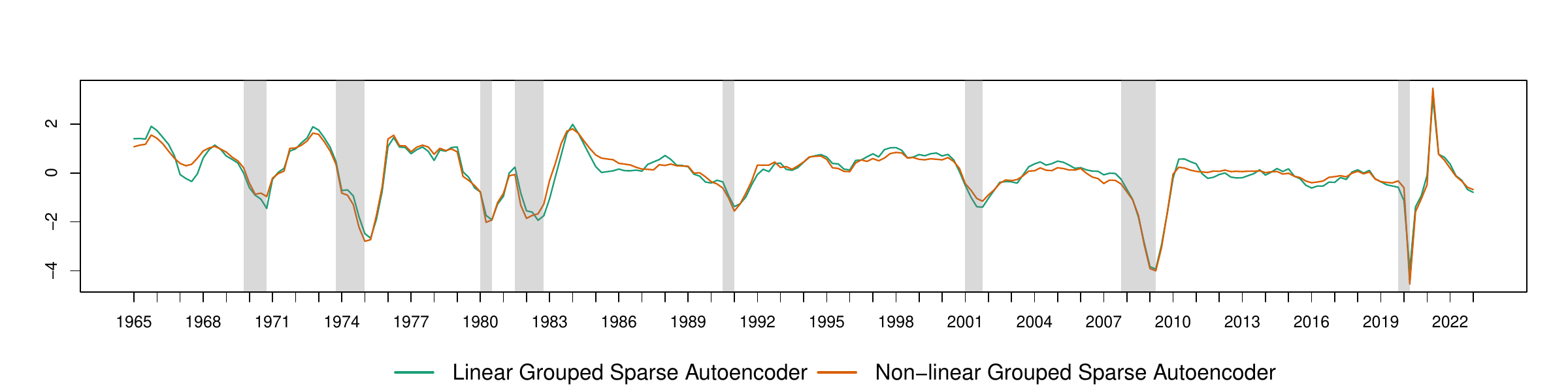}
\vskip 0.5cm
\includegraphics[width=0.8\textwidth]{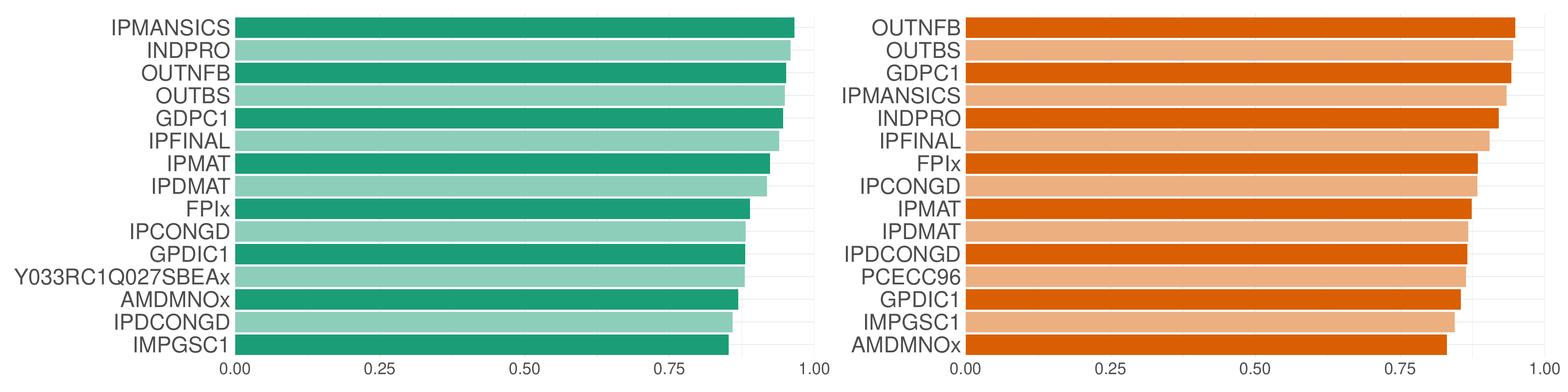}
\vskip\floatsep
\caption{The first factor extracted from the linear and non-linear GS autoencoder (top panel), and variables with the 15 highest correlation magnitudes with the corresponding factors (bottom panel). The time series are standardized to have zero mean and variance one. The grey bands highlight the recession periods.}
\label{fig:factor_ts_nonlinesr_1_appendix}
\end{figure}

\begin{figure}[!htbp]
\centering
\includegraphics[width=\textwidth]{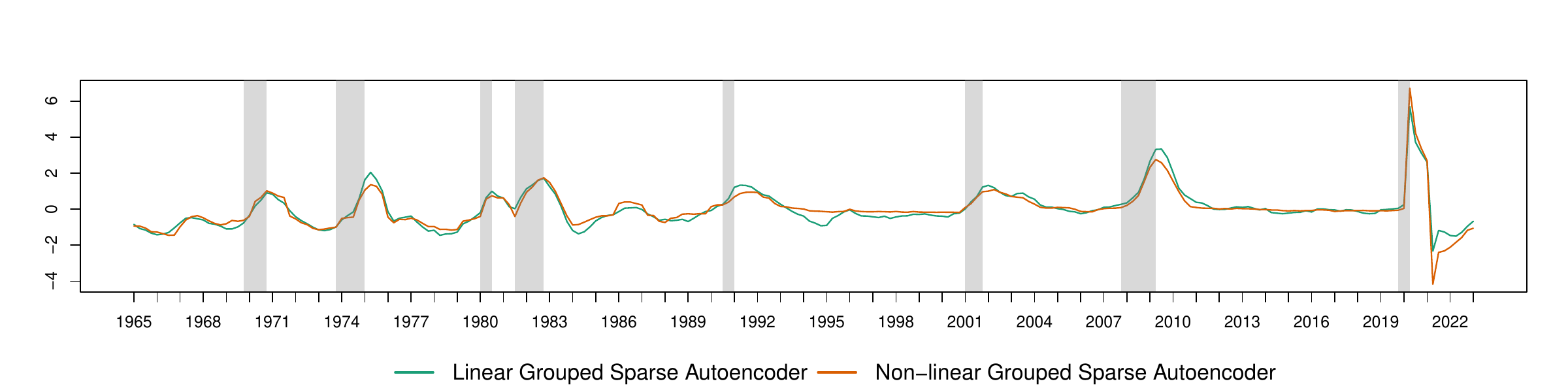}
\vskip 0.5cm
\includegraphics[width=0.8\textwidth]{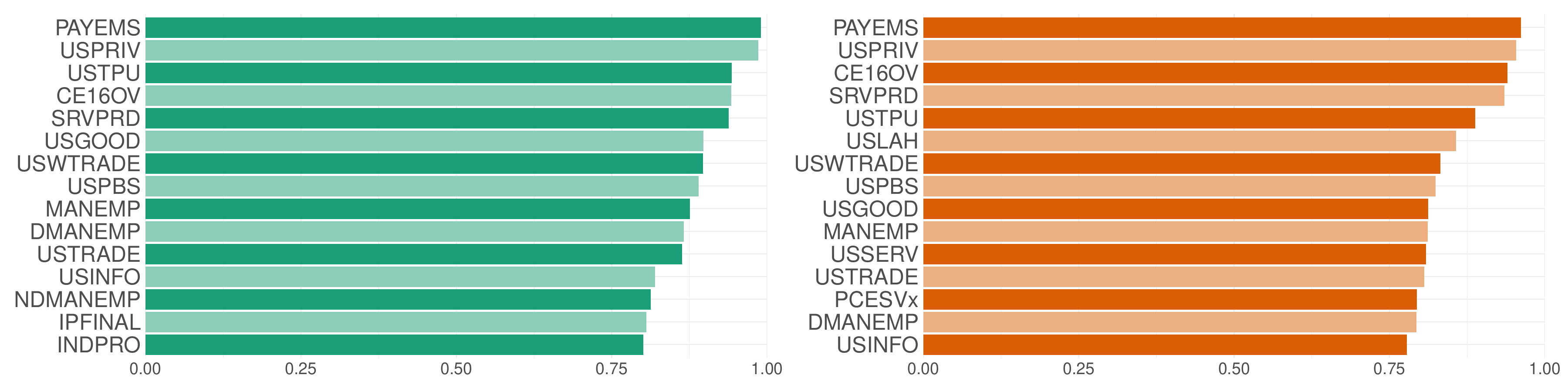}
\vskip\floatsep
\caption{The second factor extracted from the linear and non-linear GS autoencoder (top panel), and variables with the 15 highest correlation magnitudes with the corresponding factors (bottom panel). The time series are standardized to have zero mean and variance one. The grey bands highlight the recession periods.}
\label{fig:factor_ts_nonlinesr_2_appendix}
\end{figure}

\begin{figure}[!htbp]
\centering
\includegraphics[width=\textwidth]{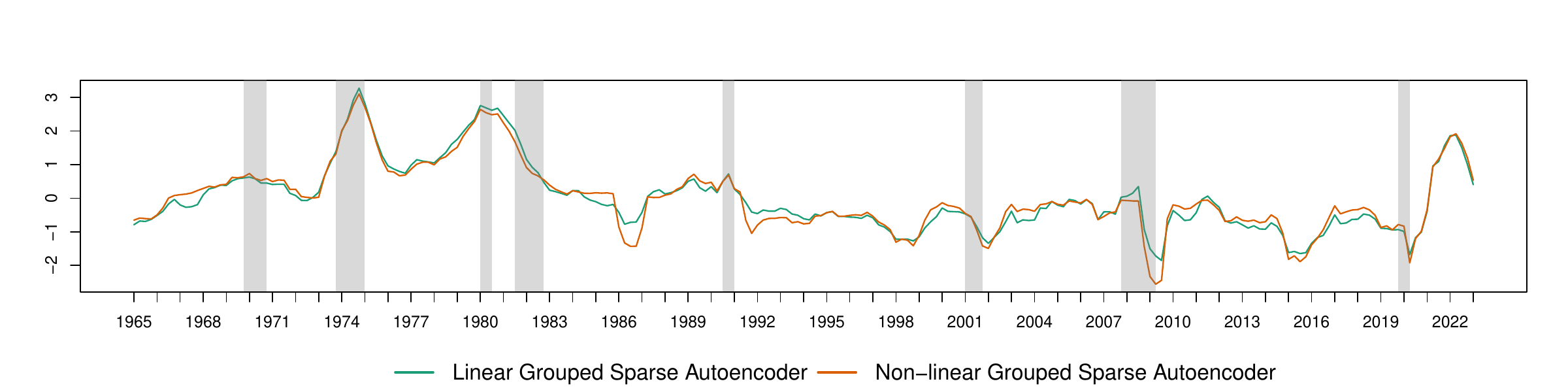}
\vskip 0.5cm
\includegraphics[width=0.8\textwidth]{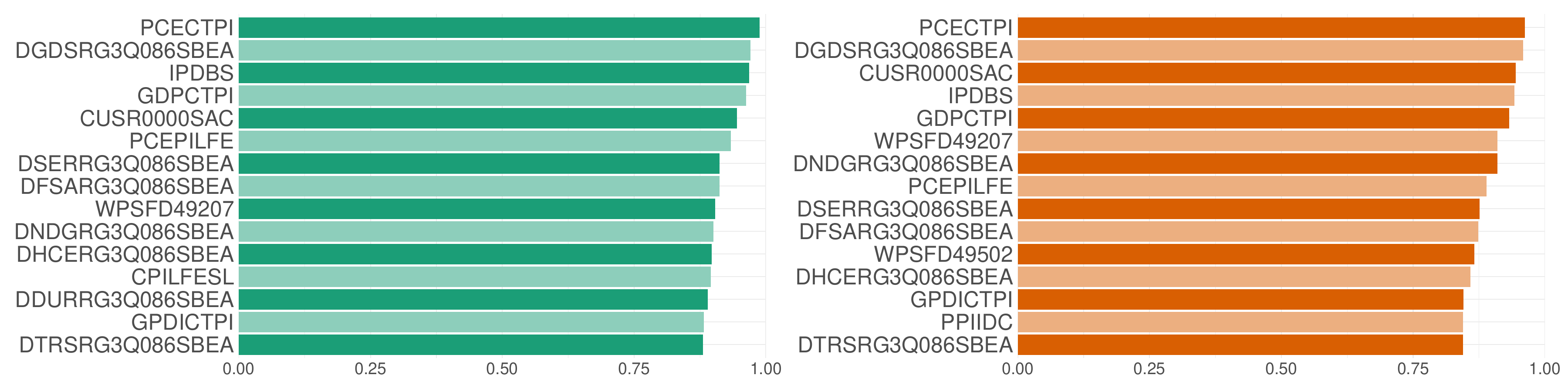}
\vskip\floatsep
\caption{The third factor extracted from the linear and non-linear GS autoencoder (top panel), and variables with the 15 highest correlation magnitudes with the corresponding factors (bottom panel). The time series are standardized to have zero mean and variance one. The grey bands highlight the recession periods.}
\label{fig:factor_ts_nonlinesr_3_appendix}
\end{figure}

\begin{figure}[!htbp]
\centering
\includegraphics[width=\textwidth]{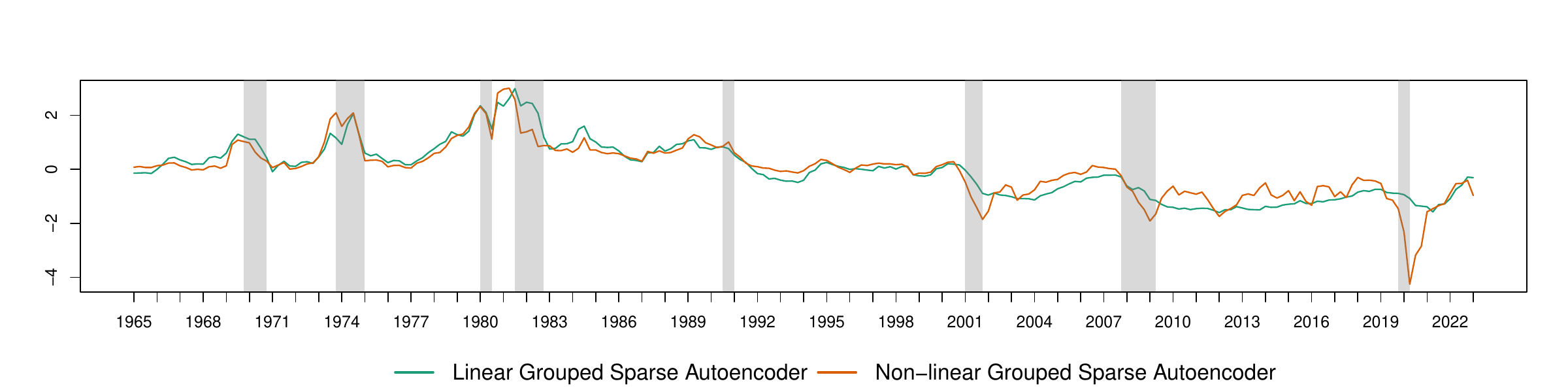}
\vskip 0.5cm
\includegraphics[width=0.8\textwidth]{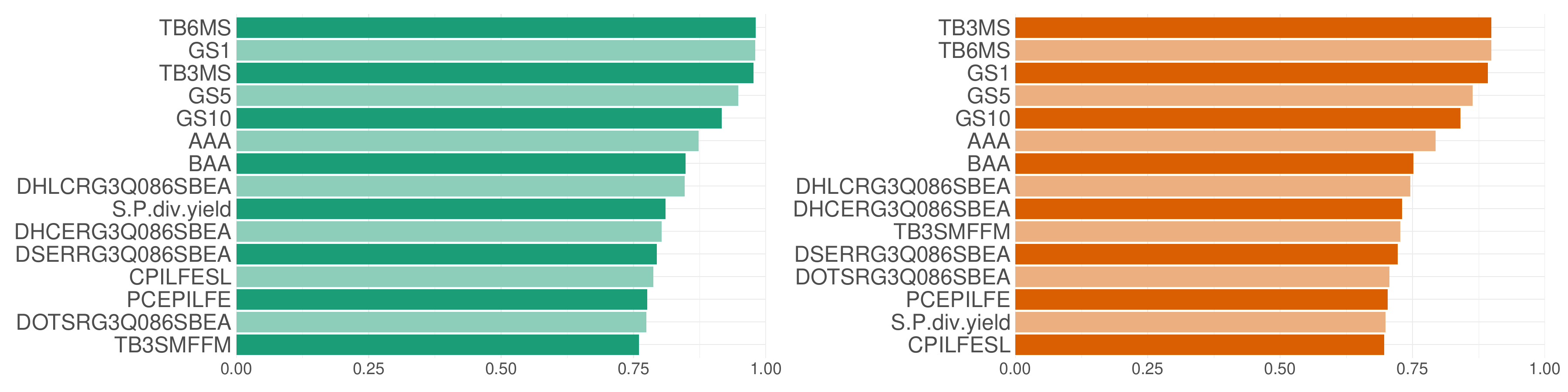}
\vskip\floatsep
\caption{The fourth factor extracted from the linear and non-linear GS autoencoder (top panel), and variables with the 15 highest correlation magnitudes with the corresponding factors (bottom panel). The time series are standardized to have zero mean and variance one. The grey bands highlight the recession periods.}
\label{fig:factor_ts_nonlinesr_4_appendix}
\end{figure}

\begin{figure}[!htbp]
\centering
\includegraphics[width=\textwidth]{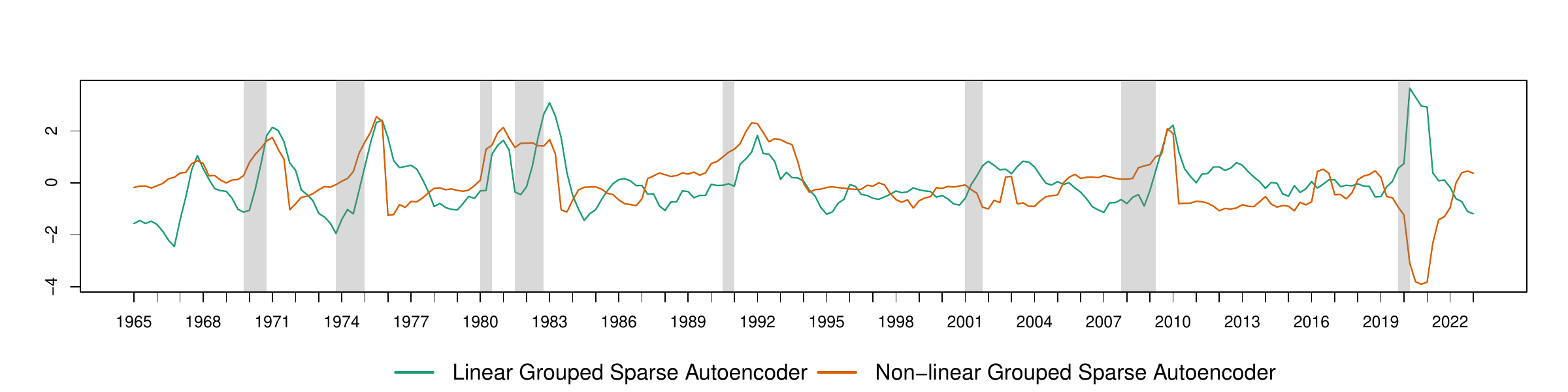}
\vskip 0.5cm
\includegraphics[width=0.8\textwidth]{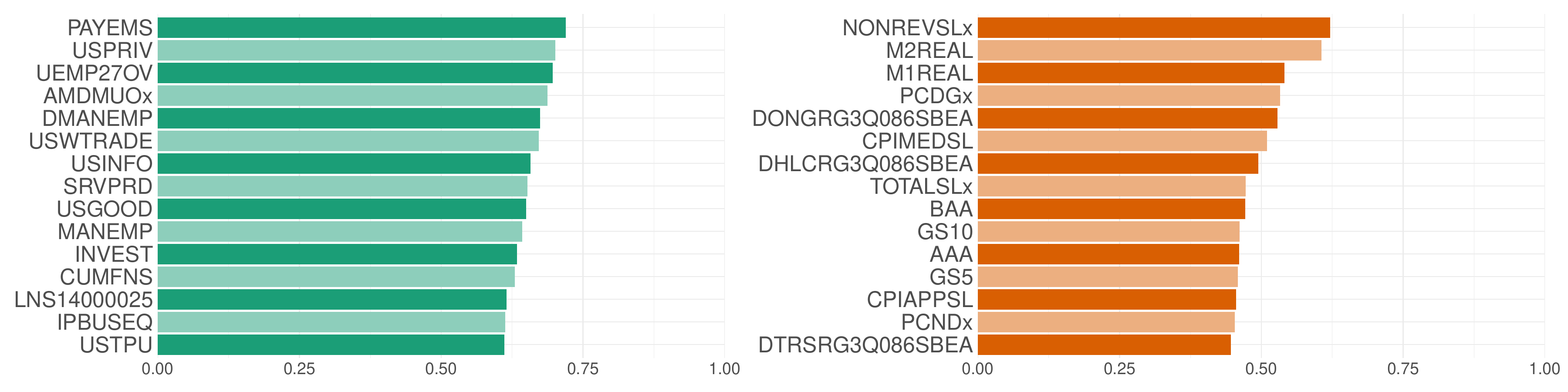}
\vskip\floatsep
\caption{The fifth factor extracted from the linear and non-linear GS autoencoder (top panel), and variables with the 15 highest correlation magnitudes with the corresponding factors (bottom panel). The time series are standardized to have zero mean and variance one. The grey bands highlight the recession periods.}
\label{fig:factor_ts_nonlinesr_5_appendix}
\end{figure}

The difference between the linear and non-linear models can also be found in the forecasting performance as in Section \ref{sec:Forecasting Performance} and the loss curves as shown in Figure \ref{fig: loss_curves}. This figure shows the total loss on the left panel and its decomposition: reconstruction loss on the middle and regularization loss on the right panel. To ensure that the model performance is not influenced by a particular set of parameter initializations, we train the models 100 times with different initializations and average the losses before generating the plots. The total loss is lower in the non-linear model than the linear one, and this discrepancy is from the reconstruction loss, since the regularization counterparts are similar. Thus, there exists non-linearity in the economic model and can be captured by the non-linear model.
\begin{figure}[!htbp]
    \centering
    \includegraphics[width=\linewidth]{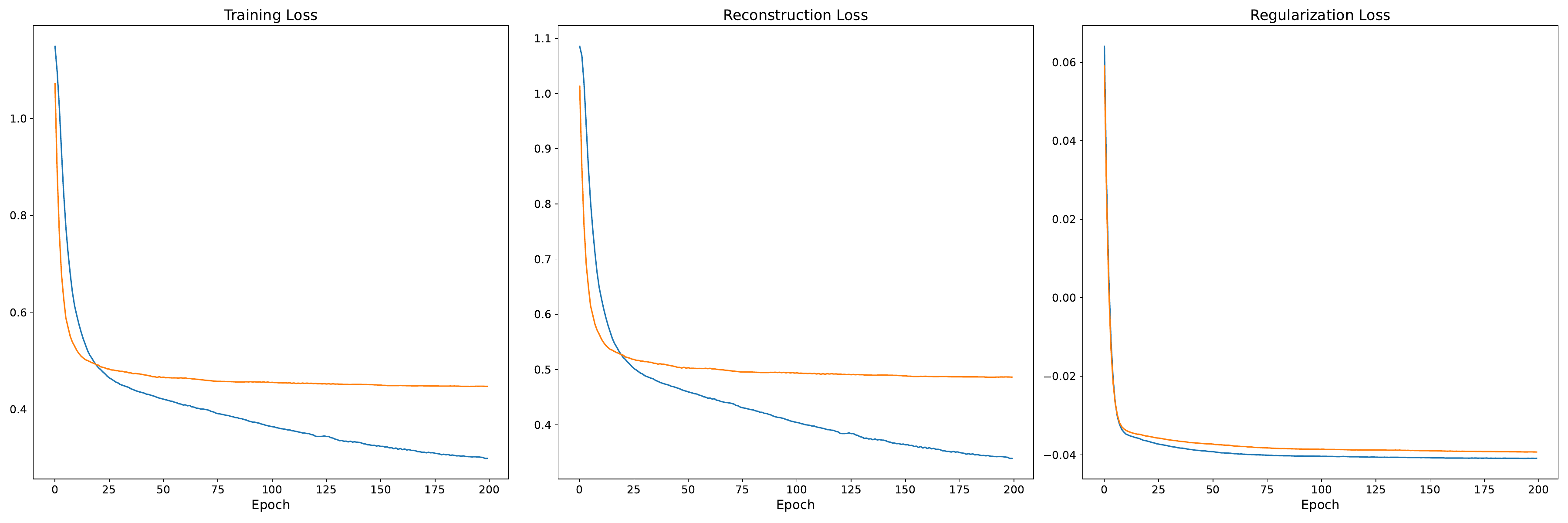}
    \caption{Loss curves against epochs from the non-linear (blue) and linear (orange) grouped sparse autoencoder}
    \label{fig: loss_curves}
\end{figure}

Figure \ref{fig:cumulative_ALPL_h=1} and Figure \ref{fig:cumulative_ALPL_h=4} report the quarterly and yearly density forecasting performance with similar findings from the quarterly one in Figure \ref{fig:cumulative_ALPL_h=2}. The overall findings in Section \ref{sec:Forecasting Performance} still hold, but we observe two differences. Firstly, as the forecasting horizon increases, the linear grouped sparse autoencoder becomes the best model in forecasting real activities without heteroskedasticity, while the PCA got worse in forecasting FEDFUNDS. Secondly, the GS autoencoder with the TVP got better performance in forecasting UNRATE before the COVID-19 pandemic, but their performance deteriorate afterward. 

\begin{figure}[!htbp]
    \centering
    \includegraphics[width=\linewidth]{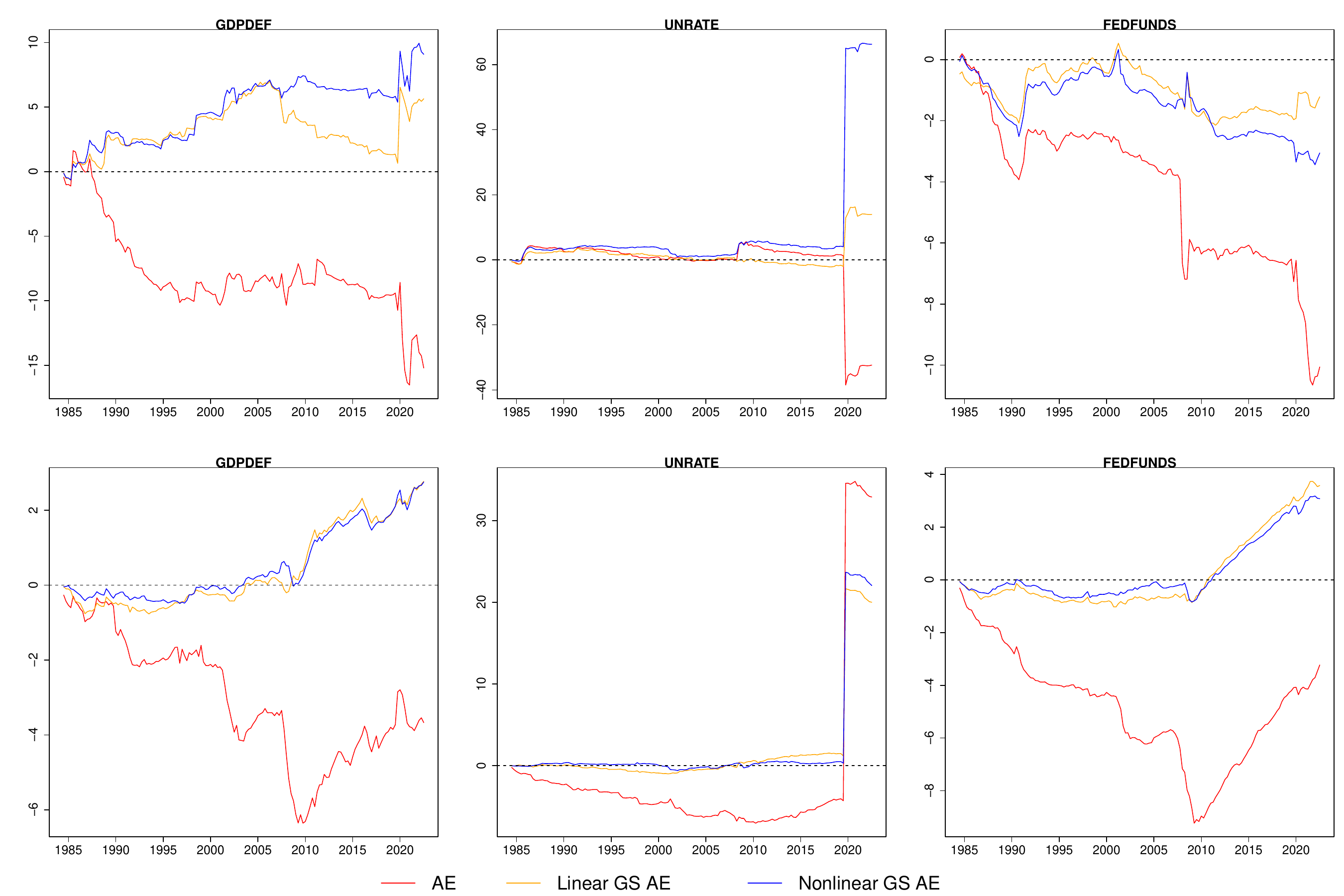}
    \caption{Cumulative ALPL (h=1) of models relative to their PCA counterparts. The top panels consider TIV models and the bottom ones consider TVP models.}
    \label{fig:cumulative_ALPL_h=1}
\end{figure}

\begin{figure}[!htbp]
    \centering
    \includegraphics[width=\linewidth]{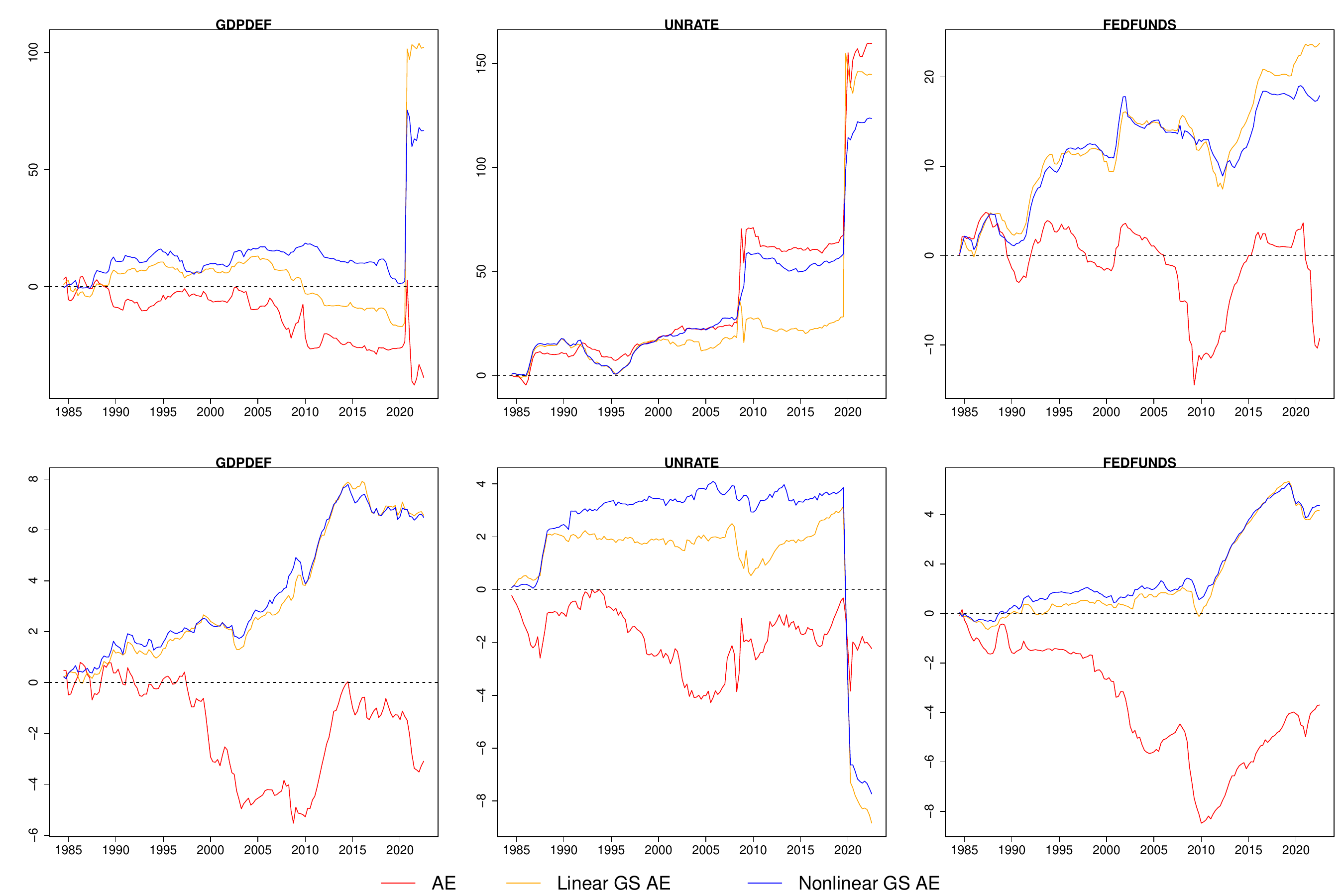}
    \caption{Cumulative ALPL (h=4) of models relative to their PCA counterparts. The top panels consider TIV models and the bottom ones consider TVP models.}
    \label{fig:cumulative_ALPL_h=4}
\end{figure}

Figure \ref{fig:irf_y_full} provides 4 more time points to study the IRFs in the VAR variables. We include those studied in \cite{korobilis2013assessing}, 1975:Q1 and 1996:Q1, and two additional time points corresponding to the chairmanship of Bernanke and Yellen. Overall, we find the transmission of monetary policy shifts gradually. For GDPDEF, the responses peaked earlier as time passed. The UNRATE responded with different degree of uncertainty. The rate at which the FEDFUNDS responses reached to zero decreased over time. 

\begin{figure}[!htbp]
    \centering
    \includegraphics[width=\linewidth]{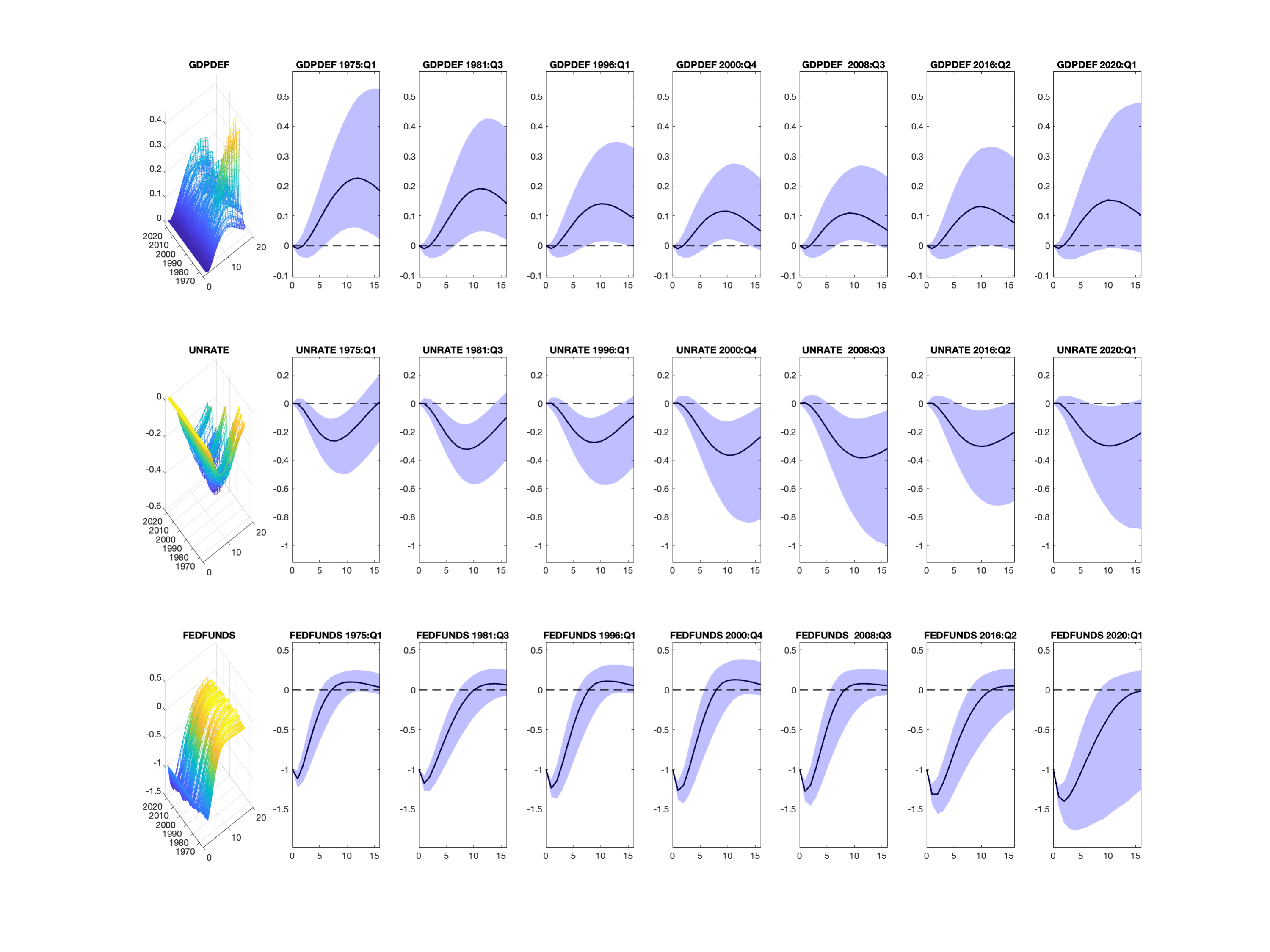}
    \caption{Impulse responses of the VAR variables to a 100 bps decrease in FEDFUNDS. First column shows the medians over time. The rest three columns shows the IRFs with their 68\% credible intervals at 1975:Q1, 1981:Q3, 1996:Q1, 2000:Q4, 2008:Q3, 2016:Q2 and 2020:Q1.}
    \label{fig:irf_y_full}
\end{figure}

\section{Data}
\label{sec:Data}
\begin{center}
\begingroup
\fontsize{8}{9}\selectfont
\begin{longtable}{lllcc}%
\multicolumn{2}{l}{\textbf{Slow Variables}} & & &   \\
\toprule
 & \textbf{Name} & \textbf{Description} & \textbf{Category} & \textbf{Code} \\
\hline
1 & GDPC1 & Real Gross Domestic Product & 1 & 50 \\
2 & PCECC96 & Real Personal Consumption Expenditures & 1 & 50 \\
3 & PCDGx & Real personal consumption expenditures: Durable goods & 1 & 50 \\
4 & PCESVx & Real Personal Consumption Expenditures: Services & 1 & 50 \\
5 & PCNDx & Real Personal Consumption Expenditures: Nondurable Goods & 1 & 50 \\
6 & GPDIC1 & Real Gross Private Domestic Investment & 1 & 50 \\
7 & FPIx & Real private fixed investment & 1 & 50 \\
8 & Y033RC1Q027SBEAx & Real Gross Private Domestic Investment: Fixed Investment: Nonresidential Equip & 1 & 50\\
9 & PNFIx & Real private fixed investment: Nonresidential & 1 & 50\\
10 & PRFIx & Real private fixed investment: Residential & 1 & 50\\
11 & A014RE1Q156NBEA & Shares of gross domestic product: Change in private inventories in private inventories & 1 & 1\\
12 & GCEC1 & Real Government Consumption Expenditures \& Gross Investment & 1 & 50 \\
13 & A823RL1Q225SBEA & Real Government Consumption Expenditures and Gross Investment: Federal & 1 & 1 \\
14 & FGRECPTx & Real Federal Government Current Receipts & 1 & 50 \\
15 & SLCEx & Real government state and local consumption expenditures & 1 & 50 \\
16 & EXPGSC1 & Real Exports of Goods \& Services, 3 Decimal & 1 & 50 \\
17 & IMPGSC1 & Real Imports of Goods \& Services & 1 & 50 \\
18 & DPIC96 & Real Disposable Personal Income & 1 & 50 \\
19 & OUTNFB & Nonfarm Business Sector: Real Output & 1 & 50 \\
20 & OUTBS & Business Sector: Real Output & 1 & 50 \\
21 & INDPRO & Industrial Production Index & 2 & 50 \\
22 & IPFINAL & Industrial Production: Final Products & 2 & 50 \\
23 & IPCONGD &  Industrial Production: Consumer Goods & 2 & 50 \\
24 & IPMAT &  Industrial Production: Materials & 2 & 50 \\
25 & IPDMAT & Industrial Production: Durable Materials & 2 & 50 \\
26 & IPNMAT & Industrial Production: Nondurable Materials & 2 & 50 \\
27 & IPDCONGD & Industrial Production: Durable Consumer Good & 2 & 50 \\
28 & IPB51110SQ & Industrial Production: Durable Goods: Automotive products & 2 & 50 \\
29 & IPNCONGD & Industrial Production: Nondurable Consumer Goods & 2 & 50 \\
30 & IPBUSEQ & Industrial Production: Business Equipment & 2 & 50 \\
31 & IPB51220SQ & Industrial Production: Consumer energy products & 2 & 50 \\
32 & CUMFNS & Capacity Utilization: Manufacturing & 2 & 1 \\
33 & IPMANSICS & Industrial Production: Manufacturing & 2 & 50 \\
34 & IPB51222S &  Industrial Production: Residential Utilities & 2 & 50 \\
35 & IPFUELS & Industrial Production: Fuel & 2 & 50 \\
36 & PAYEMS & All Employees: Total nonfarm  & 3 & 50 \\
37 & USPRIV & All Employees: Total Private Industries & 3 & 50 \\
38 & MANEMP & All Employees: Manufacturing & 3 & 50 \\
39 & SRVPRD & All Employees: Service-Providing Industries & 3 & 50 \\
40 & USGOOD & All Employees: Goods-Producing Industries & 3 & 50 \\
41 & DMANEMP & All Employees: Durable goods & 3 & 50 \\
42 & NDMANEMP & All Employees: Nondurable goods & 3 & 50 \\
43 & USCONS & All Employees: Construction & 3 & 50 \\
44 & USEHS & All Employees: Financial Activities & 3 & 50 \\
45 & USFIRE &  All Employees: Financial Activities & 3 & 50 \\
46 & USINFO & All Employees: Information Services & 3 & 50 \\
47 & USPBS & All Employees: Professional \& Business Services & 3 & 50 \\
48 & USLAH & All Employees: Leisure \& Hospitality & 3 & 50 \\
49 & USSERV & All Employees: Other Services  & 3 & 50 \\
50 & USMINE & All Employees: Mining and logging & 3 & 50 \\
51 & USTPU & All Employees: Trade, Transportation \& Utilities & 3 & 50 \\
52 & USGOVT & All Employees: Government & 3 & 50 \\
53 & USTRADE & All Employees: Retail Trade & 3 & 50 \\
54 & USWTRADE & All Employees: Wholesale Trade  & 3 & 50 \\
55 & CES9091000001 & All Employees: Government: Federal & 3 & 50 \\
56 & CES9092000001 & All Employees: Government: State Government & 3 & 50 \\
57 & CES9093000001 & All Employees: Government: Local Government & 3 & 50 \\
58 & CE16OV & Civilian Employment & 3 & 50 \\
59 & CIVPART &  Civilian Labor Force Participation Rate & 3 & 1 \\
60 & UNRATE & Civilian Unemployment Rate & 3 & 1 \\ 
61 & UNRATESTx & Unemployment Rate less than 27 weeks & 3 & 1 \\
62 & UNRATELTx & Unemployment Rate for more than 27 week & 3 & 1 \\
63 & LNS14000012 & Unemployment Rate - 16 to 19 years & 3 & 1 \\
64 & LNS14000025 & Unemployment Rate - 20 years and over, Men & 3 & 1 \\
65 & LNS14000026 & Unemployment Rate - 20 years and over, Women & 3 & 1 \\
66 & UEMPLT5 & Number of Civilians Unemployed - Less Than 5 Weeks  & 3 & 50 \\
67 & UEMP5TO14 & Number of Civilians Unemployed for 5 to 14 Weeks & 3 & 50 \\
68 & UEMP15T26 &  Number of Civilians Unemployed for 15 to 26 Weeks & 3 & 50 \\
69 & UEMP27OV & Number of Civilians Unemployed for 27 Weeks and Over & 3 & 50 \\
70 & AWHMAN &  Average Weekly Hours of Prod and Nonsuperv Employees: Manufacturing & 3 & 1 \\
71 & AWOTMAN & Avg Weekly Overtime Hours of Prod and Nonsuperv Employees: Manufacturing & 3 & 1 \\
72 & HWIx & Help-Wanted Index & 3 & 1 \\
73 & CES0600000007 & Average Weekly Hours of Production and Nonsupervisory Employees: Goods-Producing & 3 & 1 \\
74 & CLAIMSx & Initial Claims & 3 & 50 \\
75 & HOUST & Housing Starts: Total: New Privately Owned Housing Units Started & 4 & 50 \\
76 & HOUST5F & Privately Owned Housing Starts: 5-Unit Structures or More & 4 & 50 \\
77 & PERMIT &  New Private Housing Units Authorized by Building Permits & 4 & 50 \\
78 & HOUSTMW & Housing Starts in Midwest Census Region  & 4 & 50 \\
79 & HOUSTNE & Housing Starts in Northeast Census Region & 4 & 50 \\
80 & HOUSTS & Housing Starts in South Census Region & 4 & 50 \\
81 & HOUSTW & Housing Starts in West Census Region  & 4 & 50 \\
82 & RSAFSx & Real Retail and Food Services Sales & 5 & 50 \\
83 & AMDMNOx & Real Manufacturers’ New Orders: Durable Goods & 5 & 50 \\
84 & AMDMUOx & Real Value of Manufacturers Unfilled Orders for Durable Goods Industries & 5 & 50 \\
85 & BUSINVx & Total Business Inventories & 5 & 50 \\
86 & ISRATIOx & Total Business: Inventories to Sales Ratio & 5 & 1 \\
87 & GDPDEF & Gross Domestic Product: Implicit Price Deflator & 6 & 1 \\
88 & PCECTPI & Pers Cons Ex: Chain-type Price Index & 6 & 50 \\
89 & PCEPILFE & Pers Cons Exp Excluding Food and Energy & 6 & 50 \\
90 & GDPCTPI & Gross Domestic Product: Chain-type Price Index & 6 & 50 \\
91 & GPDICTPI & Gross Private Domestic Investment: Chain-type Price Index & 6 & 50 \\
92 & IPDBS & Business Sector: Implicit Price Deflator & 6 & 50 \\
93 & DGDSRG3Q086SBEA & Pers Cons Exp: Goods  & 6 & 50 \\
94 & DDURRG3Q086SBEA & Pers Cons Exp: Durable goods & 6 & 50 \\
95 & DSERRG3Q086SBEA & Pers Cons Exp: Services & 6 & 50 \\
96 & DNDGRG3Q086SBEA & Pers Cons Exp: Nondurable goods & 6 & 50 \\
97 & DHCERG3Q086SBEA & Pers Cons Exp: Services: Household consumption
expenditures & 6 & 50 \\
98 & DMOTRG3Q086SBEA & Pers Cons Exp: Durable goods: Motor vehicles and
parts & 6 & 50 \\
99 & DFDHRG3Q086SBEA & Pers Cons Exp: Durable goods: Furnishings and
durable household equipment & 6 & 50 \\
100 & DREQRG3Q086SBEA & Pers Cons Exp: Durable goods: Recreational goods
and vehicles & 6 & 50 \\
101 & DODGRG3Q086SBEA & Pers Cons Exp: Durable goods: Other durable goods & 6 & 50 \\
102 & DFXARG3Q086SBEA & Pers Cons Exp: Food and beverages for off-premises cons & 6 & 50 \\
103 & DCLORG3Q086SBEA & Pers Cons Exp: Nondurable goods: Clothing and footwear & 6 & 50 \\
104 & DGOERG3Q086SBEA & Pers Cons Exp: Nondurable goods: Gasoline and other energy goods & 6 & 50 \\
105 & DONGRG3Q086SBEA & Pers Cons Exp: Nondurable goods: Other nondurable goods & 6 & 50 \\
106 & DHUTRG3Q086SBEA & Pers Cons Exp: Services: Housing and utilities & 6 & 50 \\
107 & DHLCRG3Q086SBEA & Pers Cons Exp: Services: Health care & 6 & 50 \\
108 & DTRSRG3Q086SBEA & Pers Cons Exp: Transportation services & 6 & 50 \\
109 & DRCARG3Q086SBEA & Pers Cons Exp: Recreation services & 6 & 50 \\
110 & DFSARG3Q086SBEA & Pers Cons Exp: Recreation services & 6 & 50 \\
111 & DIFSRG3Q086SBEA & Pers Cons Exp: Services: Food services and accommodations & 6 & 50 \\
112 & DOTSRG3Q086SBEA & Pers Cons Exp: Financial services and insurance & 6 & 50 \\
113 & CPIAUCSL & Consumer Price Index for All Urban Consumers: All Items & 6 & 50 \\
114 & CPILFESL &  Consumer Price Index for All Urban Consumers: All Items Less Food \& Energy & 6 & 50 \\
115 & WPSFD49207 & Producer Price Index by Commodity for Finished Goods & 6 & 50 \\
116 & PPIACO & Producer Price Index for All Commodities & 6 & 50 \\
117 & WPSFD49502 & Producer Price Index by Commodity for Finished Consumer Goods & 6 & 50 \\
118 & WPSFD4111 & Producer Price Index by Commodity for Finished Consumer Foods & 6 & 50 \\
119 & PPIIDC & Producer Price Index by Commodity Industrial Commodities & 6 & 50 \\
120 & WPSID61 & PPI by Commodity Intermediate Materials: Supplies \& Components & 6 & 50 \\
121 & WPU0561 & Producer Price Index by Commodity for Fuels and Related Products and Power & 6 & 50 \\
122 & OILPRICEx & Real Crude Oil Prices: West Texas Intermediate (WTI) - Cushing, Oklahoma & 6 & 50 \\
123 & WPSID62 & Producer Price Index: Crude Materials for Further Processing & 6 & 50 \\
124 & PPICMM & PPI: Commodities: Metals and metal products: Primary nonferrous metals & 6 & 50 \\
125 & CPIAPPSL & Consumer Price Index for All Urban Consumers: Apparel & 6 & 50 \\
126 & CPITRNSL &  Consumer Price Index for All Urban Consumers: Transportation & 6 & 50 \\
127 & CPIMEDSL &  Consumer Price Index for All Urban Consumers: Medical Care & 6 & 50 \\
128 & CUSR0000SAC & Consumer Price Index for All Urban Consumers: Commodities & 6 & 50 \\
129 & CES2000000008x & Real Average Hourly Earnings of Prod and Nonsuperv Employees: Construction & 7 & 50 \\
130 & CES3000000008x & Real Average Hourly Earnings of Prod and Nonsuperv Employees: Manufacturing & 7 & 50\\
131 &  COMPRNFB & Nonfarm Business Sector: Real Compensation Per Hour (Index 2012=100) & 7 & 50 \\
132 & CES0600000008 & Average Hourly Earnings of Production and Nonsupervisory Employees & 7 & 50\\
\bottomrule
\caption{Description of slow variables. Transformation code: 1 - level; 5 - first differences of logarithms; 7 - $\Delta(x_t/x_{t-1}-1)$; 50 - year-over-year log difference.}
\label{data appendix}
\end{longtable}
\endgroup

\end{center}
\begin{center}
\begingroup
\fontsize{8}{9}\selectfont
\begin{longtable}{lllcc}%
\multicolumn{2}{l}{\textbf{Fast Variables}} & & &  \\
\toprule
 & \textbf{Name} & \textbf{Description} & \textbf{Category} & \textbf{Code} \\
\hline
1 & FEDFUNDS &  Effective Federal Funds Rate & 8 & 1 \\
2 & TB3MS & 3-Month Treasury Bill: Secondary Market Rate & 8 & 1 \\
3 & TB6MS & 6-Month Treasury Bill: Secondary Market Rate & 8 & 1 \\
4 & GS1 & 1-Year Treasury Constant Maturity Rate & 8 & 1 \\
5 & GS10 & 10-Year Treasury Constant Maturity Rate & 8 & 1 \\
6 & AAA &  Moodys Seasoned Aaa Corporate Bond Yield & 8 & 1 \\
7 & BAA & Moodys Seasoned Baa Corporate Bond Yield & 8 & 1 \\
8 & BAA10YM & Moodys Seasoned Baa Corporate Bond Yield Rel. to Yield on 10-Year Treasury & 8 & 1 \\
9 & TB6M3Mx & 6-Month Treasury Bill Minus 3-Month Treasury Bill, secondary market & 8 & 1 \\
10 & GS1TB3Mx &  1-Year Treasury Constant Maturity Minus 3-Month Treasury Bill, second market & 8 & 1 \\
11 & GS10TB3Mx & 10-Year Treasury Constant Maturity Minus 3-Month Treasury Bill, second market & 8 & 1 \\
12 & CPF3MTB3Mx & 3-Month Commercial Paper Minus 3-Month Treasury Bill, second market & 8 & 1 \\
13 & GS5 & 5-Year Treasury Constant Maturity Rate & 8 & 1 \\
14 & TB3SMFFM &  3-Month Treasury Constant Maturity Minus Federal Funds Rate & 8 & 1 \\
15 & T5YFFM &  5-Year Treasury Constant Maturity Minus Federal Funds Rate & 8 & 1 \\
16 & AAAFFM & Moodys Seasoned Aaa Corporate Bond Minus Federal Funds Rate & 8 & 1 \\
17 & M1REAL & Real M1 Money Stock & 9 & 50 \\
18 & M2REAL & Real M2 Money Stock & 9 & 50 \\
19 & BUSLOANSx & Real Commercial and Industrial Loans, All Commercial Banks & 9 & 50 \\
20 & CONSUMERx & Real Consumer Loans at All Commercial Banks & 9 & 50 \\
21 & NONREVSLx &  Total Real Nonrevolving Credit Owned and Securitized, Outstanding & 9 & 50 \\
22 & REALLNx & Real Real Estate Loans, All Commercial Banks & 9 & 50 \\
23 & TOTALSLx & Total Consumer Credit Outstanding & 9 & 50 \\
24 & TOTRESNS & Total Reserves of Depository Institutions & 9 & 50 \\
25 & NONBORRES & Reserves Of Depository Institutions, Nonborrowed & 9 & 7 \\
26 & DTCOLNVHFNM & Consumer Motor Vehicle Loans Outstanding Owned by Finance Companies & 9 & 50 \\
27 & DTCTHFNM & Total Consumer Loans and Leases Outstanding Owned and Sec by Finance Comp & 9 & 50 \\
28 & INVEST & Securities in Bank Credit at All Commercial Banks & 9 & 50 \\
29 & TABSHNOx & Real Total Assets of Households and Nonprofit Organizations & 10 & 50 \\
30 & EXSZUSx & Switzerland / U.S. Foreign Exchange Rate & 11 & 50 \\
31 & EXJPUSx & Japan / U.S. Foreign Exchange Rate & 11 & 50 \\
32 & EXUSUKx & U.S. / U.K. Foreign Exchange Rate & 11 & 50 \\
33 & EXCAUSx &  Canada / U.S. Foreign Exchange Rate & 11 & 50 \\
34 & S\&P 500  &  S\&Ps Common Stock Price Index: Composite & 12 & 5 \\
35 & S\&P: indust & S\&Ps Common Stock Price Index: Industrials & 12 & 50 \\
36 & S\&P div yield & S\&Ps Composite Common Stock: Dividend Yield & 12 & 1 \\
\bottomrule
\caption{Description of fast variables. Transformation code: 1 - level; 5 - first differences of logarithms; 7 - $\Delta(x_t/x_{t-1}-1)$; 50 - year-over-year log difference.}
\label{data appendix}
\end{longtable}
\endgroup

\end{center}



\begin{thebibliography}{}

\bibitem[Abbate et~al., 2016]{abbate2016changing}
Abbate, A., Eickmeier, S., Lemke, W., and Marcellino, M. (2016).
\newblock The changing international transmission of financial shocks: evidence from a classical time-varying {FAVAR}.
\newblock {\em Journal of Money, Credit and Banking}, 48(4):573--601.

\bibitem[Ainsworth et~al., 2018]{ainsworth2018oi}
Ainsworth, S.~K., Foti, N.~J., Lee, A.~K., and Fox, E.~B. (2018).
\newblock oi-{VAE}: Output interpretable {VAE}s for nonlinear group factor analysis.
\newblock In {\em International Conference on Machine Learning}, pages 119--128. PMLR.

\bibitem[Andreini et~al., 2020]{andreini2020deep}
Andreini, P., Izzo, C., and Ricco, G. (2020).
\newblock Deep dynamic factor models.
\newblock {\em arXiv preprint arXiv:2007.11887}.

\bibitem[Arora et~al., 2013]{arora2013practical}
Arora, S., Ge, R., Halpern, Y., Mimno, D., Moitra, A., Sontag, D., Wu, Y., and Zhu, M. (2013).
\newblock A practical algorithm for topic modeling with provable guarantees.
\newblock In {\em International conference on machine learning}, pages 280--288. PMLR.

\bibitem[Bai et~al., 2021]{bai2021spike}
Bai, R., Ro{\v{c}}kov{\'a}, V., and George, E.~I. (2021).
\newblock Spike-and-slab meets {LASSO}: A review of the spike-and-slab {LASSO}.
\newblock {\em Handbook of Bayesian variable selection}, pages 81--108.

\bibitem[Belviso and Milani, 2006]{belviso2006structural}
Belviso, F. and Milani, F. (2006).
\newblock Structural factor-augmented {VAR}s ({SFAVAR}s) and the effects of monetary policy.
\newblock {\em Topics in Macroeconomics}, 6(3).

\bibitem[Bernanke et~al., 2005]{bernanke2005measuring}
Bernanke, B.~S., Boivin, J., and Eliasz, P. (2005).
\newblock Measuring the effects of monetary policy: a factor-augmented vector autoregressive ({FAVAR}) approach.
\newblock {\em The Quarterly journal of economics}, 120(1):387--422.

\bibitem[Bing et~al., 2020a]{bing2020adaptive}
Bing, X., Bunea, F., Ning, Y., and Wegkamp, M. (2020a).
\newblock {Adaptive estimation in structured factor models with applications to overlapping clustering}.
\newblock {\em The Annals of Statistics}, 48(4):2055 -- 2081.

\bibitem[Bing et~al., 2020b]{bing2020fast}
Bing, X., Bunea, F., and Wegkamp, M. (2020b).
\newblock {A fast algorithm with minimax optimal guarantees for topic models with an unknown number of topics}.
\newblock {\em Bernoulli}, 26(3):1765 -- 1796.

\bibitem[Blanchard and Simon, 2001]{blanchard2001long}
Blanchard, O. and Simon, J. (2001).
\newblock The long and large decline in {US} output volatility.
\newblock {\em Brookings papers on economic activity}, 2001(1):135--174.

\bibitem[Cabanilla and Go, 2019]{cabanilla2019forecasting}
Cabanilla, K.~I. and Go, K.~T. (2019).
\newblock Forecasting, causality, and impulse response with neural vector autoregressions.
\newblock {\em arXiv preprint arXiv:1903.09395}.

\bibitem[Carter and Kohn, 1994]{carter1994gibbs}
Carter, C.~K. and Kohn, R. (1994).
\newblock On {G}ibbs sampling for state space models.
\newblock {\em Biometrika}, 81(3):541--553.

\bibitem[Castillo et~al., 2015]{castillo2015bayesian}
Castillo, I., Schmidt-Hieber, J., and Van~der Vaart, A. (2015).
\newblock Bayesian linear regression with sparse priors.
\newblock {\em The Annals of Statistics}, pages 1986--2018.

\bibitem[Christiano et~al., 2005]{christiano2005nominal}
Christiano, L.~J., Eichenbaum, M., and Evans, C.~L. (2005).
\newblock Nominal rigidities and the dynamic effects of a shock to monetary policy.
\newblock {\em Journal of political Economy}, 113(1):1--45.

\bibitem[Del~Negro and Primiceri, 2015]{del2015time}
Del~Negro, M. and Primiceri, G.~E. (2015).
\newblock Time varying structural vector autoregressions and monetary policy: a corrigendum.
\newblock {\em The review of economic studies}, 82(4):1342--1345.

\bibitem[Dinh et~al., 2014]{dinh2014nice}
Dinh, L., Krueger, D., and Bengio, Y. (2014).
\newblock Nice: Non-linear independent components estimation.
\newblock {\em arXiv preprint arXiv:1410.8516}.

\bibitem[Fr{\"u}hwirth-Schnatter, 1994]{fruhwirth1994data}
Fr{\"u}hwirth-Schnatter, S. (1994).
\newblock Data augmentation and dynamic linear models.
\newblock {\em Journal of time series analysis}, 15(2):183--202.

\bibitem[Ghosh et~al., 2015]{ghosh2016asymptotic}
Ghosh, P., Tang, X., Ghosh, M., and Chakrabarti, A. (2015).
\newblock Asymptotic properties of {B}ayes risk of a general class of shrinkage priors in multiple hypothesis testing under sparsity.
\newblock {\em Bayesian Analysis}, 11.

\bibitem[Giannone et~al., 2015]{giannone2015prior}
Giannone, D., Lenza, M., and Primiceri, G.~E. (2015).
\newblock Prior selection for vector autoregressions.
\newblock {\em Review of Economics and Statistics}, 97(2):436--451.

\bibitem[Goan and Fookes, 2020]{goan2020bayesian}
Goan, E. and Fookes, C. (2020).
\newblock Bayesian neural networks: An introduction and survey.
\newblock {\em Case Studies in Applied Bayesian Data Science: CIRM Jean-Morlet Chair, Fall 2018}, pages 45--87.

\bibitem[Hacioglu~Hoke and Tuzcuoglu, 2016]{hacioglu2016interpreting}
Hacioglu~Hoke, S. and Tuzcuoglu, K. (2016).
\newblock Interpreting the latent dynamic factors by threshold {FAVAR} model.
\newblock Technical report, Bank of England.

\bibitem[Hamilton, 1989]{hamilton1989new}
Hamilton, J.~D. (1989).
\newblock A new approach to the economic analysis of nonstationary time series and the business cycle.
\newblock {\em Econometrica: Journal of the econometric society}, pages 357--384.

\bibitem[Hauzenberger et~al., 2023]{hauzenberger2023real}
Hauzenberger, N., Huber, F., and Klieber, K. (2023).
\newblock Real-time inflation forecasting using non-linear dimension reduction techniques.
\newblock {\em International Journal of Forecasting}, 39(2):901--921.

\bibitem[Huber and Fischer, 2018]{huber2018markov}
Huber, F. and Fischer, M.~M. (2018).
\newblock A markov switching factor-augmented {VAR} model for analyzing {US} business cycles and monetary policy.
\newblock {\em Oxford Bulletin of Economics and Statistics}, 80(3):575--604.

\bibitem[Khemakhem et~al., 2020]{khemakhem2020variational}
Khemakhem, I., Kingma, D., Monti, R., and Hyvarinen, A. (2020).
\newblock Variational autoencoders and nonlinear {ICA}: A unifying framework.
\newblock In {\em International conference on artificial intelligence and statistics}, pages 2207--2217. PMLR.

\bibitem[Kim et~al., 1998]{kim1998stochastic}
Kim, S., Shephard, N., and Chib, S. (1998).
\newblock Stochastic volatility: likelihood inference and comparison with {ARCH} models.
\newblock {\em The review of economic studies}, 65(3):361--393.

\bibitem[Kingma and Ba, 2014]{kingma2014adam}
Kingma, D.~P. and Ba, J. (2014).
\newblock Adam: A method for stochastic optimization.
\newblock {\em arXiv e-prints}, pages arXiv--1412.

\bibitem[Kingma and Welling, 2014]{kingma2014auto}
Kingma, D.~P. and Welling, M. (2014).
\newblock Auto-encoding variational bayes.
\newblock {\em stat}, 1050:1.

\bibitem[Klieber, 2024]{klieber2024non}
Klieber, K. (2024).
\newblock Non-linear dimension reduction in factor-augmented vector autoregressions.
\newblock {\em Journal of Economic Dynamics and Control}, 159:104800.

\bibitem[Koop and Korobilis, 2014]{koop2014new}
Koop, G. and Korobilis, D. (2014).
\newblock A new index of financial conditions.
\newblock {\em European Economic Review}, 71:101--116.

\bibitem[Korobilis, 2013]{korobilis2013assessing}
Korobilis, D. (2013).
\newblock Assessing the transmission of monetary policy using time-varying parameter dynamic factor models.
\newblock {\em Oxford Bulletin of Economics and Statistics}, 75(2):157--179.

\bibitem[Krueger, 2015]{krueger2015bvarsv}
Krueger, F. (2015).
\newblock bvarsv: Bayesian analysis of a vector autoregressive model with stochastic volatility and time-varying parameters.
\newblock {\em R package version}, 1.

\bibitem[Lachapelle et~al., 2024]{lachapelle2024additive}
Lachapelle, S., Mahajan, D., Mitliagkas, I., and Lacoste-Julien, S. (2024).
\newblock Additive decoders for latent variables identification and cartesian-product extrapolation.
\newblock {\em Advances in Neural Information Processing Systems}, 36.

\bibitem[Lachapelle et~al., 2022]{lachapelle2022disentanglement}
Lachapelle, S., Rodriguez, P., Sharma, Y., Everett, K.~E., Le~Priol, R., Lacoste, A., and Lacoste-Julien, S. (2022).
\newblock Disentanglement via mechanism sparsity regularization: A new principle for nonlinear ica.
\newblock In {\em Conference on Causal Learning and Reasoning}, pages 428--484. PMLR.

\bibitem[Leshno et~al., 1993]{LESHNO1993861}
Leshno, M., Lin, V.~Y., Pinkus, A., and Schocken, S. (1993).
\newblock Multilayer feedforward networks with a nonpolynomial activation function can approximate any function.
\newblock {\em Neural Networks}, 6(6):861--867.

\bibitem[Litterman et~al., 1979]{litterman1979techniques}
Litterman, R.~B. et~al. (1979).
\newblock Techniques of forecasting using vector autoregressions.
\newblock Working Paper 115, Federal Reserve Bank of Minneapolis.

\bibitem[Locatello et~al., 2019]{locatello2019challenging}
Locatello, F., Bauer, S., Lucic, M., Raetsch, G., Gelly, S., Sch{\"o}lkopf, B., and Bachem, O. (2019).
\newblock Challenging common assumptions in the unsupervised learning of disentangled representations.
\newblock In {\em international conference on machine learning}, pages 4114--4124. PMLR.

\bibitem[McCracken and Ng, 2020]{mccracken2020fred}
McCracken, M. and Ng, S. (2020).
\newblock {FRED-QD}: A quarterly database for macroeconomic research.
\newblock Technical report, National Bureau of Economic Research.

\bibitem[Moran et~al., 2021]{moran2021identifiable}
Moran, G.~E., Sridhar, D., Wang, Y., and Blei, D.~M. (2021).
\newblock Identifiable deep generative models via sparse decoding.
\newblock {\em arXiv preprint arXiv:2110.10804}.

\bibitem[Mumtaz et~al., 2011]{mumtaz2011lies}
Mumtaz, H., Zabczyk, P., and Ellis, C. (2011).
\newblock What lies beneath? a time-varying {FAVAR} model for the {UK} transmission mechanism.

\bibitem[Nakajima and West, 2013]{nakajima2013bayesian}
Nakajima, J. and West, M. (2013).
\newblock Bayesian dynamic factor models: Latent threshold approach.
\newblock {\em Journal of Financial Econometrics}, 11(1):116--153.

\bibitem[Park and Casella, 2008]{park2008bayesian}
Park, T. and Casella, G. (2008).
\newblock The {B}ayesian lasso.
\newblock {\em Journal of the american statistical association}, 103(482):681--686.

\bibitem[Pinkus, 1999]{pinkus1999approximation}
Pinkus, A. (1999).
\newblock Approximation theory of the {MLP} model in neural networks.
\newblock {\em Acta numerica}, 8:143--195.

\bibitem[Primiceri, 2005]{primiceri2005time}
Primiceri, G.~E. (2005).
\newblock Time varying structural vector autoregressions and monetary policy.
\newblock {\em The Review of Economic Studies}, 72(3):821--852.

\bibitem[Robbins and Monro, 1951]{robbins1951stochastic}
Robbins, H. and Monro, S. (1951).
\newblock A stochastic approximation method.
\newblock {\em The Annals of Mathematical Statistics}, pages 400--407.

\bibitem[Ro{\v{c}}kov{\'a} and George, 2016]{rovckova2016fast}
Ro{\v{c}}kov{\'a}, V. and George, E.~I. (2016).
\newblock Fast {B}ayesian factor analysis via automatic rotations to sparsity.
\newblock {\em Journal of the American Statistical Association}, 111(516):1608--1622.

\bibitem[Ro{\v{c}}kov{\'a} and George, 2018]{rovckova2018spike}
Ro{\v{c}}kov{\'a}, V. and George, E.~I. (2018).
\newblock The spike-and-slab lasso.
\newblock {\em Journal of the American Statistical Association}, 113(521):431--444.

\bibitem[Sims and Zha, 2006]{sims2006were}
Sims, C.~A. and Zha, T. (2006).
\newblock Were there regime switches in {US} monetary policy?
\newblock {\em American Economic Review}, 96(1):54--81.

\bibitem[Stock, 2002]{stock2002has}
Stock, J. (2002).
\newblock Has the business cycle changed and why.
\newblock {\em NBER Macroeconomics Annual}.

\bibitem[Stock and Watson, 2011]{stock2011}
Stock, J. and Watson, M. (2011).
\newblock {\em Dynamic Factor Models}.
\newblock Oxford University Press, Oxford.

\bibitem[Stock and Watson, 2005]{stock2005implications}
Stock, J.~H. and Watson, M.~W. (2005).
\newblock Implications of dynamic factor models for {VAR} analysis.
\newblock Working papers, National Bureau of Economic Research.

\bibitem[Stock and Watson, 2016]{stock2016dynamic}
Stock, J.~H. and Watson, M.~W. (2016).
\newblock Dynamic factor models, factor-augmented vector autoregressions, and structural vector autoregressions in macroeconomics.
\newblock In {\em Handbook of macroeconomics}, volume~2, pages 415--525. Elsevier.

\bibitem[Strumbelj and Kononenko, 2010]{strumbelj2010efficient}
Strumbelj, E. and Kononenko, I. (2010).
\newblock An efficient explanation of individual classifications using game theory.
\newblock {\em The Journal of Machine Learning Research}, 11:1--18.

\bibitem[Tank et~al., 2021]{tank2021neural}
Tank, A., Covert, I., Foti, N., Shojaie, A., and Fox, E.~B. (2021).
\newblock Neural granger causality.
\newblock {\em IEEE Transactions on Pattern Analysis and Machine Intelligence}, 44(8):4267--4279.

\bibitem[Tibshirani, 1996]{tibshirani1996regression}
Tibshirani, R. (1996).
\newblock Regression shrinkage and selection via the lasso.
\newblock {\em Journal of the Royal Statistical Society Series B: Statistical Methodology}, 58(1):267--288.

\bibitem[Wang et~al., 2022]{wang2022disentangled}
Wang, X., Chen, H., Tang, S., Wu, Z., and Zhu, W. (2022).
\newblock Disentangled representation learning.
\newblock {\em arXiv preprint arXiv:2211.11695}.

\end{thebibliography}

\begin{thebibliography}{}

\bibitem[Giannone et~al., 2015]{giannone2015prior}
Giannone, D., Lenza, M., and Primiceri, G.~E. (2015).
\newblock Prior selection for vector autoregressions.
\newblock {\em Review of Economics and Statistics}, 97(2):436--451.

\bibitem[Korobilis, 2013]{korobilis2013assessing}
Korobilis, D. (2013).
\newblock Assessing the transmission of monetary policy using time-varying parameter dynamic factor models.
\newblock {\em Oxford Bulletin of Economics and Statistics}, 75(2):157--179.

\bibitem[Strang, 2022]{strang2022introduction}
Strang, G. (2022).
\newblock {\em Introduction to linear algebra}.
\newblock SIAM.

\end{thebibliography}
\end{document}